\documentclass[11pt,letterpaper]{article}
\usepackage[margin=1in]{geometry}

\usepackage{graphicx}
\usepackage{booktabs}
\usepackage{url}
\usepackage{cite}
\usepackage[title]{appendix}
\usepackage[utf8]{inputenc} 
\usepackage[T1]{fontenc}    
\usepackage{hyperref}       
\usepackage{url}            
\usepackage{booktabs}       
\usepackage{amsfonts}       
\usepackage{nicefrac}       
\usepackage{microtype}      

\usepackage{algorithm}
\usepackage{algorithmic}
\usepackage{enumitem}
\usepackage{authblk}
\usepackage{breakcites}
\usepackage{amsmath}
\usepackage{bbm}
\usepackage{verbatim}
\usepackage[font=small]{caption}
\usepackage{amsthm}
\usepackage{subcaption}
\usepackage{mathtools}
\usepackage{xcolor}
\usepackage[capitalize,noabbrev]{cleveref}
\usepackage{thm-restate}

\DeclareMathOperator*{\argmin}{arg\,min}
\newcommand{\norm}[1]{\left\lVert#1\right\rVert}
\newcommand{\normt}[1]{\left\lVert#1\right\rVert_2}
\newcommand{\normf}[1]{\left\lVert#1\right\rVert_F}
\newcommand{\Ex}[1]{\mathbb{E}\left[#1\right]}
\newcommand{\prb}[1]{\mathbb{P}\left(#1\right)}
\newcommand{\Inp}[1]{\left\langle #1 \right\rangle}

\newcommand{\ssigmin}{\bar{\sigma}_{\min,*}}
\newcommand{\ssigmax}{\bar{\sigma}_{\max,*}}
\newcommand{\sigmin}{{\sigma}_{\min}}
\newcommand{\sigmax}{{\sigma}_{\max}}
\newcommand{\var}{{\sigma}}
\newcommand{\sumi}{{\sum_{i=1}^n}}

\newcommand{\w}{\mathbf{w}}
\newcommand{\x}{\mathbf{x}}
\newcommand{\q}{\mathbf{q}}
\newcommand{\cE}{\mathcal{E}}
\newcommand{\hd}{\hat{\delta}}
\newcommand{\W}{\mathbf{W}}
\newcommand{\X}{\mathbf{X}}
\newcommand{\Y}{\mathbf{Y}}
\newcommand{\y}{\mathbf{y}}
\newcommand{\Z}{\mathbf{Z}}
\newcommand{\A}{\mathbf{A}}
\newcommand{\B}{\mathbf{B}}
\newcommand{\bS}{\mathbf{S}}
\newcommand{\E}{\mathbf{E}}
\newcommand{\bH}{\mathbf{H}}
\newcommand{\hB}{\hat{\mathbf{B}}}
\newcommand{\I}{\mathcal{I}}
\newcommand{\mT}{\mathcal{T}}
\newcommand{\mC}{\mathcal{C}}
\newcommand{\bI}{\mathbf{I}}
\newcommand{\R}{\mathbf{R}}
\newcommand{\G}{\mathbf{G}}
\newcommand{\F}{\mathbf{F}}
\newcommand{\U}{\mathbf{U}}
\newcommand{\V}{\mathbf{V}}
\newcommand{\bv}{\mathbf{v}}

\newcommand{\n}{n}
\newcommand{\bu}{\mathbf{u}}
\newcommand{\p}{\mathbf{p}}
\newcommand{\maxp}{\max_{\p:\normt{\p}=1}}
\newcommand{\Q}{\mathbf{Q}}
\newcommand{\OO}{\mathcal{O}}

\newcommand{\dist}{\text{dist}}
\newcommand{\distB}{\dist\pr{\hB, \hB^*}}
\newcommand{\spn}{\textit{span}}
\newcommand{\coloneqq}{:=}
\newcommand{\br}[1]{\left[#1\right]}
\newcommand{\tr}[1]{\text{tr}\left[#1\right]}
\newcommand{\pr}[1]{\left(#1\right)}
\newcommand{\ag}[1]{\left\{#1\right\}}
\newcommand{\EXP}[2][]{
    \ifthenelse{\equal{#1}{}}
    {\mathbb{E}\left[#2\right]}
    {\mathop{\mathbb{E}}_{#1}\left[#2\right]}
}
\newcommand{\TT}[1]{\Ex{\mT_{n_02^{#1}}}}
\renewcommand{\O}{\mathcal{O}}

\theoremstyle{plain}
\newtheorem{theorem}{Theorem}[section]

\newtheorem{lemma}[theorem]{Lemma}
\newtheorem{corollary}[theorem]{Corollary}
\theoremstyle{definition}
\newtheorem{definition}[theorem]{Definition}
\newtheorem{assumption}[theorem]{Assumption}
\theoremstyle{remark}
\newtheorem{remark}[theorem]{Remark}

\title{Straggler-Resilient Personalized Federated Learning}

%

\title{Straggler-Resilient Personalized Federated Learning}
\author[1]{Isidoros Tziotis\thanks{E-mail: \texttt{isidoros\_13@utexas.edu}.}}
\author[2]{Zebang Shen}
\author[3]{Ramtin Pedarsani}
\author[2]{Hamed Hassani}
\author[1]{Aryan Mokhtari}
\affil[1]{The University of Texas at Austin}
\affil[2]{University of Pennsylvania}
\affil[3]{UC Santa Barbara}

\begin{document}

\maketitle
\begin{abstract}
\emph{Federated Learning} is an emerging learning paradigm that allows training models from samples distributed across a large network of clients while respecting privacy and communication restrictions. Despite its success, federated learning faces several challenges related to its decentralized nature. In this work, we develop a novel algorithmic procedure with theoretical speedup guarantees that simultaneously handles two of these hurdles, namely (i) \textit{data heterogeneity}, i.e., data distributions can vary substantially across clients, and (ii) \textit{system heterogeneity}, i.e., the computational power of the clients could differ significantly. Our method relies on  ideas from representation learning theory to find a global common representation using all clients' data and learn a user-specific set of parameters leading to a personalized solution for each client.
Furthermore, our method mitigates the effects of stragglers by adaptively selecting clients based on their computational characteristics and statistical significance, thus achieving, for the first time, near optimal sample complexity and provable logarithmic speedup. Experimental results support our theoretical findings showing the superiority of our method over alternative personalized federated schemes in system and data heterogeneous environments.
\end{abstract}
\section{Introduction}

Due to growing concerns on data privacy and communication cost, Federated Learning (FL) has become an emerging learning paradigm as it allows for training machine learning models without collecting local data from the clients. Due to its decentralized nature, a major challenge in designing efficient solvers for FL, is heterogeneity of local devices which can be categorized into two different types: (i) \textit{data heterogeneity} where the underlying data distributions of clients vary substantially, and (ii) \textit{system heterogeneity} where the computational and storage capabilities  of devices differ significantly. In fact, it has been observed that the seminal Federated Averaging (FedAvg) method suffers from slow convergence to a high quality solution when facing highly heterogeneous datasets~\cite{mcmahan2017communication} as well as heterogeneous systems \cite{li2018federated, kairouz2019advances}. 

In this paper, we aim to address these two challenges simultaneously by proposing a generic framework that includes algorithms which exhibit robust performance in the presence of those forms of clients' heterogeneity. Specifically, we propose a meta-algorithm that produces personalized solutions and handles data heterogeneity by leveraging a global representation shared among all clients. Further, our method circumvents the delays introduced due to the presence of stragglers in the network, by carefully selecting participating nodes based on their computational speeds. In early stages, only a few of the fastest nodes are chosen to participate and sequentially slower devices are included in the training process until the target accuracy is achieved. One might be concerned that our scheme introduces unfairness or bias towards the frequently selected, faster nodes. However, we highlight that our method achieves speedup without any penalty in terms of accuracy. The most significant contribution of our work is achieving near-optimal sample complexity with heterogeneous data alongside a provable logarithmic speedup in running time. Next, we summarize our contributions.

\vspace{-0.5mm}
\begin{enumerate}
\vspace{-1mm}
    \item \textbf{\texttt{SRPFL} Algorithm}. We propose and analyze for the first time the Straggler-Resilient Personalized Federated Learning (\texttt{SRPFL}) meta-algorithm, an adaptive-node-participation method which accommodates gradient-based subroutines and leverages ideas from representation learning theory to find personalized models in a federated, straggler-resilient fashion. 
    \vspace{-1mm}
    \item \textbf{Logarithmic Speedup}. Assuming that clients’ speeds are drawn from the exponential distribution, we prove that \texttt{SRPFL} guarantees logarithmic speedup in the linear representation setting, outperforming established, straggler-prone benchmarks while maintaining the state of the art sample complexity per client $m=\O((d/N+\log(N)))$, where $d$ and $N$ denote the feature vector size and number of active nodes, respectively. Our results hold for non-convex loss functions, heterogeneous data and dynamically changing client's speeds.
    \vspace{-1mm}
    \item \textbf{Numerical Results}. Our experiments on different datasets (CIFAR10, CIFAR100, EMNIST, FEMNIST) support our theoretical results and show that: (i) \texttt{SRPFL} significantly boosts the performance of different subroutines designed for personalized FL both in full and in partial participation settings and (ii) \texttt{SRPFL} presents superior performance in system and data heterogeneous settings compared to state-of-the-art benchmarks.
    \vspace{-2mm}
\end{enumerate}
\vspace{-0.9mm}
\subsection{Related Work}
\vspace{-0.9mm}

\noindent \textbf{Data heterogeneity.} In data heterogeneous settings, if one aims at minimizing the aggregate loss in the network using the classic FedAvg method or more advanced algorithms, which utilize control-variate techniques, such as \texttt{SCAFFOLD} \cite{karimireddy2019scaffold}, \texttt{FedDyn} \cite{acar2021federated}, or \texttt{FEDGATE} \cite{haddadpour2020federated}, the resulting solution could perform poorly for some of the clients. This is an unavoidable hurdle simply due to the fact that there is no single model that works well for all clients when their underlying data distributions are diverse.\
A common technique that addresses this issue and works well in practice is  fine-tuning the derived global model to each local task by following a few steps of SGD updates~\cite{wang2019federated, yu2020salvaging}. Based on this observation, \cite{fallah2020personalized} showed that one might need to train models that work well after fine-tuning and showed its connections to Model-Agnostic Meta-Learning (MAML). In \cite{cho2022towards,balakrishnan2021diverse} novel client-sampling schemes are explored achieving increased efficiency in regimes with data
heterogeneity. Another line of work for personalized FL is learning additive mixtures of local and global models \cite{deng2020adaptive,mansour2020approaches, hanzely2021federated}. These methods learn local models for clients that are close to each other in some norm, an idea closely related to multi-task FL~\cite{smith2017federated, hu2021private}.  The works in~\cite{chen2022actperfl, lee2022partial} study the interplay of local and global models utilizing Bayesian hierarchical models and partial participation, respectively.
An alternative approach was presented in~\cite{collins2021exploiting}, where instead of enforcing local models to be close, the authors assume that models among clients share a common representation. Using this perspective, they presented a novel iterative method called \texttt{FedRep} that provably learns the common structure in the linear representation setting. 
However, in all of the aforementioned methods, (a subset of) clients participate regardless of their computational capabilities. Thus, in the presence of stragglers, the speed of the whole training process significantly goes down as the server waits, at every communication round, for the slowest participating node to complete its local updates.



\noindent \textbf{System heterogeneity.}\
Several works have attempted to address the issue of system heterogeneity.\ Specifically, asynchronous methods, which rely on bounded staleness of slow clients, have demonstrated significant gains in distributed data centers \cite{xie2019asynchronous,stich2019local, so2021secure}. In FL frameworks, however, stragglers could be arbitrarily slow casting these methods inefficient. To manually control staleness, deadline-based computation has been proposed \cite{reisizadeh2019robust}, however in the worst case scenario the deadline is still determined by the slowest client in the network. Active sampling is another approach in which the server aims to aggregate as many local updates as possible within a predefined time span \cite{nishio2019client}. In~\cite{cho2021personalized} clients use heterogeneous model architectures and transfer knowledge to nodes with similar data distributions.  More recently, normalized averaging methods were proposed in \cite{wang2020tackling,horvath2022fedshuffle} that rectify the objective inconsistency created by the mismatch in clients' updates. 
A novel approach to mitigate the effects of system heterogeneity was recently proposed in \cite{reisizadeh2020stragglerresilient}. Their proposed scheme, called \texttt{FLANP}, employs adaptive node participation where clients are selected to take part in different stages of the training according to their computational speeds.\  
Alas, all of the above methods yield improvement only in data-homogeneous settings and they are not applicable in regimes with data heterogeneity.

\section{Problem Formulation and Setup}\label{Section_ProbForm}

In this section, we introduce our setting and define the data and system heterogeneity models that we study.
Consider the Federated Learning framework where $M$ clients interact with a central server. We focus on  a supervised, data-heterogeneous setting where client $i$ draws data from distribution $\mathcal{D}_i$, potentially with $\mathcal{D}_i \neq \mathcal{D}_j$.\ Further, consider the learning model of the $i$-th client as $q_i:\mathbb{R}^d\xrightarrow{} \mathcal{Y}$ which maps inputs $\x_i \in \mathbb{R}^d$ to predicted labels $q_i(\x_i) \in \mathcal{Y}$. 
The objective function of client $i$ is defined as  $f_i(q_i):=\mathbb{E}_{(\x_i,y_i)\sim\mathcal{D}_i}\br{\ell (q_i(\x_i) ,y_i))}$, where the loss $\ell : \mathcal{Y} \times \mathcal{Y} \xrightarrow{} \mathbb{R}$ penalizes the gap between the predicted label $q_i(\x_i)$ and the true label $y_i$. In the most general case, clients aim to solve
\begin{equation} \label{Personalized_Form1}
    \min_{\pr{q_1,...,q_M} \in \mathcal{Q}} \frac{1}{M}\sum_{i=1}^M f_i (q_i),
\end{equation}
where $\mathcal{Q}$ is the space of feasible tuples of $M$ mappings $(q_1,...,q_M)$. Traditionally in federated learning, methods focused on learning a single shared model $q=q_1=...=q_M$ that performs well on average, across clients \cite{DBLP:journals/corr/abs-1812-06127, pmlr-v54-mcmahan17a}. Although such a solution may be satisfactory in data-homogeneous environments, it leads to undesirable local models when the data distributions are diverse. Indeed, in the presence of data heterogeneity the loss functions $f_i$ have different forms and their minimizers could be far from each other. This justifies the formulation given in \eqref{Personalized_Form1} and necessitates the search for personalized solutions that can be learned in federated fashion.


\textbf{Low Dimensional Common Representation}. There have been numerous examples in image classification and word prediction where tasks with heterogeneous data share a common, low dimensional representation, despite having different labels \cite{6472238, article}. Based on that, a reasonable choice for $\mathcal{Q}$ is a set in which all $q_i$ share a common map, coupled with a personalized map that fits their local data. To formalize this, suppose the ground-truth map can be written for each client $i$ as $q_i=h_i \circ \phi$ where  $\phi:\mathbb{R}^d\xrightarrow{}\mathbb{R}^k$ is a shared global representation which maps $d$-dimensional data points to a lower dimensional space of size $k$ and $h_i:\mathbb{R}^k\xrightarrow{}\mathcal{Y}$, which maps from the lower dimensional subspace to the space of labels. Typically $k\ll d$ and thus given any fixed representation $\phi$, the client specific heads $h_i: \mathbb{R}^k\xrightarrow{} \mathcal{Y}$ are easy to optimize locally. With this common structure into consideration, \eqref{Personalized_Form1} can be reformulated as  $\min_{\phi \in \Phi} \frac{1}{M}\sum_{i=1}^M \min_{h_i\in \mathcal{H}}f_i (h_i \circ \phi)$,
where $\Phi$ is the class of feasible representation and $\mathcal{H}$ is the class of feasible heads. This formulation leads to good local solutions, if the underlying data generation models for the clients share a low dimensional common representation, i.e., $y_i= h_i \circ \phi(\x_i)+z_i$, where $z_i$ is some additive noise. 

The server and clients collaborate to learn the common representation $\phi$, while locally each client learns their unique head $h_i$. Since clients often do not have access to their data distributions, instead of minimizing their expected loss, they settle for minimizing the empirical loss associated with their local samples. Specifically, we assume client $i$ has access to $S_i$ samples $\{\x_i^1, \x_i^2,...,\x_i^{S_i} \}$, and its local empirical loss is  $\hat{f}_i (h_i \circ \phi)= \frac{1}{S_i}\sum_{s=1}^{S_i}\ell (h_i \circ \phi(\x_i^s) ,y_i^s)$. Hence, the global problem becomes
\vspace{-1mm}
\begin{equation} \label{eqn_FRL_empirical}
    \min_{\phi \in \Phi} \frac{1}{M}\sum_{i=1}^M \min_{h_i\in \mathcal{H}}\left\{ \hat{f}_i (h_i \circ \phi):= \frac{1}{S_i}\sum_{s=1}^{S_i}\ell (h_i \circ \phi(\x_i^s) ,y_i^s)\right\}
\end{equation}
\textbf{System Heterogeneity Model}. 
In  most federated learning settings, thousands of clients participate in the training process, each with different devices and computational capabilities. Thus, a fixed computational task such as gradient computation or local model update could require different processing time, for different clients. Formally, for each client $i \in [M]$, we denote by $\mathcal{T}_i$ the time required to compute a local model update. 
Thus, at each round, when a subset of clients participate in the learning process, the computational time is determined by the slowest participating node. 
Naturally, as the number of nodes in the network grows, we expect more straggling nodes. This phenomenon calls for developing straggler-resilient methods in system-heterogeneous settings.

\section{Straggler-Resilient Personalized FL}\label{SRPFL}

\begin{figure*}[t]
\centering
\begin{minipage}{.5\linewidth}
    \centering
    \includegraphics[width=0.9\linewidth]{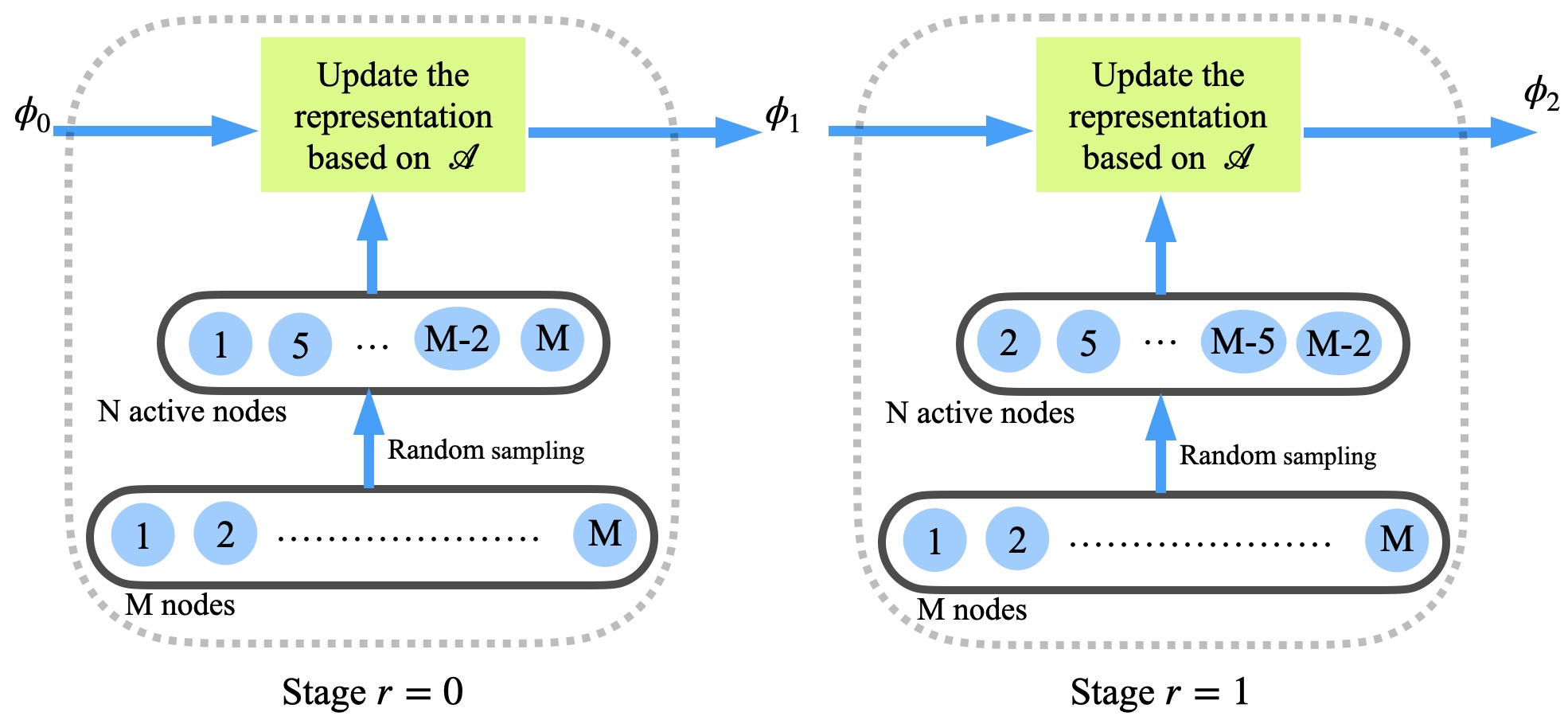}
    \caption{Classic FL schemes for solving  \eqref{eqn_FRL_empirical}}
    \label{Fig_Trad_FL}
\end{minipage}%
\begin{minipage}{.5\linewidth}
    \centering
    \includegraphics[width=0.9\linewidth]{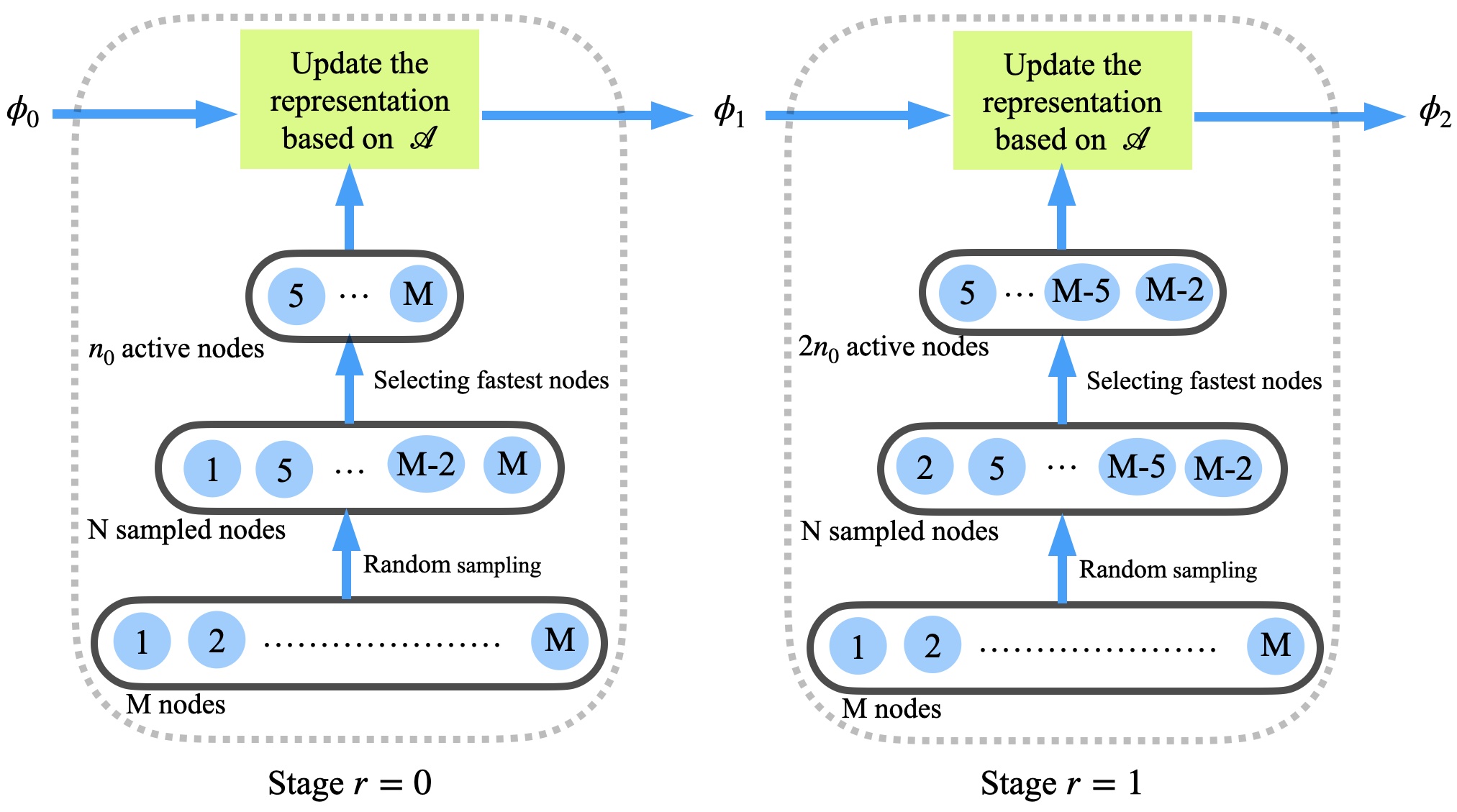}
    \caption{\texttt{SRPFL} for solving  \eqref{eqn_FRL_empirical}}
    \label{Fig_SRPFL}
\end{minipage}
\end{figure*}

In the shared representation setting, we face the challenge of finding an algorithm that coordinates server and clients in order to learn a common representation and a set of personalized parameters in a federated and straggler-resilient fashion. To this end, we propose a method that tackles problem \eqref{eqn_FRL_empirical} with limited sample access and provably superior performance over naive,  straggler-prone methods. Specifically, we propose the Straggler-Resilient Personalized Federated Learning ({\texttt{SRPFL}}), a meta-algorithm designed to mitigate the effects of system heterogeneity in environments with non-convex loss functions and heterogeneous data while accommodating a variety of methods as subroutines. In a nutshell, {\texttt{SRPFL}} iteratively solves problem  \eqref{eqn_FRL_empirical}, while adaptively increasing the set of participating nodes based on their computational capabilities. As a result the process of learning the common representation across clients' tasks is accelerated, without compromising the resulting accuracy. 

To simplify the exposition, henceforth we denote by $\mathcal{A}$ some federated algorithm of choice, designed to solve \eqref{eqn_FRL_empirical}. As depicted in Figure \ref{Fig_Trad_FL}, in standard FL frameworks, out of all $M$ clients in the network, one often selects uniformly at random a subset of size $N$. Subsequently, a few iterations of algorithm $\mathcal{A}$ are performed to approximately solve a version of  \eqref{eqn_FRL_empirical} which corresponds to those $N$ selected clients i.e., $\min_{\phi \in \Phi} \frac{1}{N}\sum_{i=1}^N \min_{h_i\in \mathcal{H}}\hat{f}_i (h_i \circ \phi)$. 
In every following stage, a new subset of $N$ nodes is sampled and the same process is repeated. Although such a procedure eventually learns the global representation across all tasks, it is susceptible to delays caused by stragglers, as the server has to wait for the slowest client (among the $N$ selected ones) to complete its updates. Hence, when $N$ is large the training procedure could become prohibitively slow.

\texttt{SRPFL} follows a different procedure to mitigate the effects of straggling clients. An overview of the selecting scheme is provided in Figure \ref{Fig_SRPFL}. At each stage, $N$ clients are randomly selected, but only a small, fast subset of them is used to solve their corresponding learning problem.  More precisely, suppose that at stage $r$, each client $i$ in the sampled subset of size $N$, is associated with a computational speed $\mathcal{T}_i^r$. For simplicity we assume that the sampled nodes are re-indexed at every stage so that they maintain a decreasing ordering w.r.t. their speeds, i.e. $\mathcal{T}_1^r \leq \mathcal{T}_2^r\leq...\leq\mathcal{T}_N^r$.
Initially, only the $n_0$ fastest nodes, $\{ 1,2,...,n_0\}$, are included in the learning procedure, with $n_0$ much smaller than $N$. 

At every communication round, the set of $n_0$ nodes perform the model updates indicated by algorithm $\mathcal{A}$, to solve their version of \eqref{eqn_FRL_empirical}, i.e., $\min_{\phi \in \Phi} \frac{1}{n_0}\sum_{i=1}^{n_0} \min_{h_i\in \mathcal{H}}\hat{f}_i (h_i \circ \phi)$. At this stage of the algorithm, the server waits only for the slowest client among the participating ones, i.e., node $n_0$. 
Once the current stage terminates, a new batch of $N$ clients is sampled and the $2n_0$ fastest nodes are chosen to participate in the learning process. Since speeds vary between stages, consecutive sets of participating nodes could have small or no overlap. However, the representations learned in previous stages still operate as good starting points for following stages. This is possible since nodes are homogeneous w.r.t. their representations (see \ref{Section_ProbForm}). Thus, utilizing the representation model learned from the previous stage as a warm start, nodes $\{1,2,...,2n_0 \}$ continue the learning process deriving a model with improved accuracy.\ The procedure of geometrically increasing the number of participating nodes continues until the target accuracy is achieved. Hence, \texttt{SRPFL} utilizes the data of stragglers only at the latest stages of the algorithm when an accurate approximation is required.
\begin{remark}
For simplicity of exposition we assume that the set of $N$ sampled nodes maintains connectivity to the server throughout each stage. However, our analysis remains unaffected even if a new set of nodes is sampled at every round.
\end{remark}
\begin{remark}\label{fast_nodes}
In practice, the knowledge of clients' computational power is not required to figure out the $n_0$ fastest nodes. Instead, the server sends the global representation model to all $N$ sampled clients and updates the common representation once the first $n_0$ new models are received. Indeed, these representations belong to the fastest $n_0$ nodes. 
\end{remark}
We proceed to characterize the class of federated algorithms able to employ \texttt{SRPFL} to enhance their performance in system heterogeneous environments. Any iterative method that solves an instance of~\eqref{eqn_FRL_empirical} can be combined with our adaptive node participation scheme, however in this paper, we focus on the class of alternating gradient-based update methods, presented in \cite{collins2021exploiting}. As the name suggests, in each round, clients update their representation models and heads in an alternative manner. After a certain number of gradient updates, all clients send their derived representations to the server where the models are averaged and broadcasted back to the clients. Next, we rigorously state the procedure.

\textbf{Alternating Update Scheme}. At round $t$, the server communicates a common representation $\phi^t$ to the clients and a subset of them $\mathcal{I}^t$, are selected to participate by performing the following updates.

\begin{algorithm}[tb] 
\caption{\texttt{SRPFL}}
\begin{algorithmic}[1] \label{alg:5}
\STATE \textbf{Input:} Initial number of nodes $n_0$; step size $\eta$; number of local updates for head $\tau_h$ and for representation $\tau_\phi$.  
 \STATE \textbf{Initialization:} $n \gets n_0$, $\phi_0,h_1^{0,\tau_h},...,h_N^{0,\tau_h}$
 \FOR{$r=0, 1 ,2, \dots, \log(N/n_0)$}
  \STATE $\phi^0 \gets \phi_r$
 \FOR{$t= 1 ,2, \dots, \tau_{r}$}
  \STATE Server sends representation $\phi^{t-1}$ to $N$ clients sampled from $[M]$.
  \FOR{$i \in \mathcal{I}^r$} 
  \STATE   Client $i$ initializes $h^{t,0}_i \gets h^{t-1,\tau_h}_i$ and runs $\tau_h$ updates $h_i^t = h_i^t - \eta \nabla_{h_i^t} f_i (h_i^{t}, \phi^{t-1})$ 
  \STATE Client $i$ initializes $\phi^{t,0}_i \gets \phi^{t-1}$ and runs $\tau_\phi$ updates $\phi_i^{t}= \phi_i^t - \eta \nabla_{\phi_i^t}f_i(h_i^t, \phi_i^t) $
    \STATE Client $i$ sends $\phi_i^{t, \tau_\phi}$ to the server 
  \ENDFOR
  \FOR{each client $i \notin \mathcal{I}^r$}
  \STATE $h_i^{t,\tau_h} \gets h_i^{t-1,\tau_h} $
  \ENDFOR
  \STATE Server computes $\phi^t  \gets  \frac{1}{n}\sum_{i =1}^n \phi_i^{t,\tau_\phi}$
 \ENDFOR
 \STATE Server sets $n \gets \min\{N, 2n\}$ and $\phi_{r+1}\gets \phi^{\tau_r}$
\ENDFOR
\end{algorithmic}
\end{algorithm}

{\emph{Client Head Update}}.  Each client $i\in \mathcal{I}^t$ performs $\tau_h$ local gradient-based updates optimizing their head parameter, given the received representation $\phi^t$. Concretely, for $s~=~1~,...,\tau_h$ client $i$ updates their head model as follows
\begin{equation}\label{Head_update}
    h_i^{t,s}=\text{GRD}\pr{f_i \pr{h_i^{t,s-1}, \phi^t}, h_i^{t,s-1}, \eta}.
\end{equation}
{\emph{Client Representation Update}}. Once the updated local heads  $h_i^{t,\tau_h}$
are obtained, each client $i$ executes $\tau_\phi$ local updates on their representation parameters. That is for $s=1,..., \tau_\phi$
\begin{equation}\label{Rep_update}
    \phi_i^{t,s} = \text{GRD}\pr{f_i \pr{h_i^{t,\tau_h}, \phi_i^{t,s-1}}, \phi_i^{t,s-1}, \eta}.
\end{equation}
In the above expressions, $\text{GRD}(f,h,\eta)$ captures the generic notion of an update of variable $h$ using the gradient of function $f$ with respect to $h$ and step size $\eta$. This notation allows the inclusion of algorithms such as stochastic gradient descent, gradient descent with momentum, etc.

{\emph{Server Update}}. Each client $i$ sends their derived representation models $\phi_i^{t,\tau_{\phi}}$ to the server, where they are averaged producing the next representation model $\phi^{t+1}$.

Coupling \texttt{SRPFL} with a generic subroutine that falls into the Alternating Update Scheme, gives rise to Algorithm \ref{alg:5}. Every stage $r$, is characterized by a participating set of size $2^r \cdot n_0$, denoted by $\mathcal{I}^r$. At the beginning of each round the server provides a representation model to the participating clients. The clients update their models in lines $8$ and $9$ and sent their representations back to the server where they are averaged in order to be used in the following round. The number of rounds at stage $r$ is denoted by $\tau_r$ and the numbers of local updates $\tau_h$, $\tau_\phi$ depend on the subroutine method of choice.

\begin{remark}
In \cite{reisizadeh2020stragglerresilient} a similar adaptive node participation scheme was proposed, however, their approach differs from ours and their results apply to more restrictive regimes. Specifically, the
analysis of FLANP in \cite{reisizadeh2020stragglerresilient} relies heavily on connecting the ERM solutions between consecutive
stages. As a result (i) clients who participate in early stages
are required to remain active in all future stages and thus,
computational speeds that remain fixed throughout the training process are necessary. Further, (ii) data homogeneity
across all clients is required as well as (iii) strongly convex
loss functions, in order to control the statistical accuracy of
the corresponding ERM problems. The above restrictions
are detrimental in the FL regime and undermine the applicability of the algorithm. In contrast, in \texttt{SRPFL} we follow a different approach controlling an analogue of the statistical accuracy
in terms of principal angle distance, therefore connecting
the common representation (and overall solution) at each
stage with the ground truth representation.\ This novel approach allows our algorithm to accommodate (i) clients with
dynamically changing speeds (or equivalently clients that
are replaced by new ones) between stages or rounds,
(ii) data heterogeneity and (iii) non-convex loss functions.
Additionally, a significant part of our technical contribution
focuses on (iv) deriving the optimal number of rounds
per stage, thus producing a simple and efficient doubling scheme.
In \cite{reisizadeh2020stragglerresilient} the authors suggest instead, a
suboptimal criterion which tracks the norm of the gradient
and requires knowledge of the strong convexity parameter.\ Hence, \texttt{SRPFL}
enjoys crucial benefits eluding prior literature.

\end{remark}

\begin{section}{\texttt{SRPFL} in the Linear Representation Learning Case}\label{Section_linear}

\begin{algorithm}[tb] 
\caption{\texttt{SRPFL} in Linear Representation}
\begin{algorithmic}[1] \label{alg:3}
\STATE \textbf{Input:} Step size $\eta$; Batch size $m$; Initial nodes $n_0$ 
 \STATE \textbf{Initialization:}  Client $i\!\in\![N]$ sends to server:
 $\mathbf{P}_i\! \coloneqq\!\frac{1}{m}\sum_{j=1}^m (y_i^{0,j})^2 \x_i^{0,j} (\x_i^{0,j})^\top \qquad n \gets n_0$
 \STATE Server finds $ \mathbf{U} \mathbf{D}\mathbf{U}^\top\leftarrow \text{rank-}k \text{ SVD}(\tfrac{1}{N}\textstyle{\sum_{i=1}^N\mathbf{P}_i)}$ 
 \FOR{$r=0, 1 ,2, \dots, \log(N/n_0)$}
 \STATE Server initializes $\B^{r, 0} \leftarrow \mathbf{U}$
 \FOR{$t=0, 1 ,2, \dots, \tau_{r}$}
  \STATE Server sends representation $\B^{r, t}$ to $N$ clients sampled from $[M]$. 
  \FOR{$i \in \ag{1,..,n}$} 
  \STATE Client $i$ samples a fresh batch of $m$ samples.
  \STATE Client $i$ finds $ \w_i^{r,t+1} \gets \argmin_{\w} \hat{f}_i^t(\w, \B^{r,t})$
  \STATE  $ \B_i^{r,t+1} \gets  \B^{r,t} - \eta \nabla_{\B}\hat{f}_i^{t}(\w_i^{t+1}, \B^{r,t}) $  is computed by client $i$ and sent to the server.
  \ENDFOR
  \STATE Server computes $\mathbf{\bar{\B}}^{r,t+1}  \gets  \frac{1}{n}\sum_{i \in \mathcal{I}^t} \B^{r,t+1}_i$
  \STATE Server computes
  $ \B^{r,t+1}, \R^{r, t+1} \gets  \text{QR}(\mathbf{\bar{\B}}^{r,t+1})$
 \ENDFOR
 \STATE Server sets $\mathbf{U} \gets \bar \B^{r,t+1} $ and  $n \gets \min\{N, 2n\}$ 
    \ENDFOR
\end{algorithmic}
\end{algorithm}

Our theoretical analysis focuses on a specific instance of \eqref{Personalized_Form1}, where clients strive to solve a linear representation learning problem. Concretely 
we assume that $f_i$ is the quadratic loss, $\phi$ is a projection onto a $k$-dimensional subspace of $\mathbb{R}^d$, given by matrix $\B \in \mathbb{R}^{d \times k}$ and the $i$-th client's head $h_i$, is a vector $\w_i \in \mathbb{R}^k$. We model local data of client $i$ such that $y_i = \w_i^{* \top} \B^{* \top} \x_i + z_i$, for some ground truth representation $\B^{* } \in \mathbb{R}^{d \times k}$, local heads $\w_i^{* } \in \mathbb{R}^k$ and $z_i \sim \mathcal{N}(0,\var^2)$ capturing the noise in the measurements. Hence, all clients' optimal solutions lie in the same $k$-dimensional subspace. 
Under these assumptions the global expected risk is
\begin{align}\label{Expected_Risk}
\min_{\B, \W} \frac{1}{2M} \sum_{i=1}^M  \mathbb{E}_{(\x_i, y_i) \sim \mathcal{D}_i}\left[ \left( y_i- \w_i^\top \B^\top \x_i\right)^2\right],
\end{align}
where $\W=[\w_1^\top,...,\w_N^\top]\in \mathbb{R}^{N \times k}$ is the concatenation of client specific heads.
Since distributions $\mathcal{D}_i$'s are unknown, algorithms strive to minimize the empirical risk instead. Thus, the global empirical risk over all clients is  
\vspace{-1mm}
\begin{equation} \label{Empirical_Loss_General} \frac{1}{M} \sum_{i=1}^M \!\hat{f}_i(\w_i, \B)\! =\! \frac{1}{2Mm} \sum_{i=1}^M  \sum_{j=1}^{m} \left( y_i^{j}\!-\! \w_i^{t \top} \B^{ \top} \x_i^{j}\right)^2\!\!\!,
\end{equation}
where $m$ is the number of samples at each client.
The global loss in \eqref{Empirical_Loss_General} is nonconvex and has many global minima, including all pairs of $\pr{\W^* \Q^{-1}, \B^* \Q^\top}$, where $\Q \in \mathbb{R}^{k\times k}$ is invertible. Thus, server aims to retrieve the column space of $\B^*$, instead of the ground truth factors $\pr{\W^*, \B^*}$. To measure closeness between column spaces, we adopt the metric of principal angle distance  \cite{jain2012lowrank}. 

\begin{definition}
 The principle angle distance between the column spaces of $~ \B_1, \B_2 \in \mathbb{R}^{d \times k}$ is 
 $\dist(\B_1, \B_2) := \|\hat{\B}^\top _{1,\bot}, \hat{\B}_2\|_2$,
 where $\hat{\B}_{1,\bot}$ and $\hat{\B}_2$ are orthonormal matrices s.t. $\spn(\hat{\B}_{1,\bot})= \spn(\B_1)^\bot$ and $\spn(\hat{\B}_2)= \spn(\B_2)$.
\end{definition}

Federated Representation Learning (\texttt{FedRep}) is an alternating minimization-descent, federated algorithm, recently proposed in \cite{collins2021exploiting} for the Linear Shared Representation framework. \texttt{SRPFL} coupled with \texttt{FedRep} gives rise to \cref{alg:3}. Below we summarize the points of interest.

In the initialization phase the method of moments is invoked obtaining an initial model, sufficiently close to the optimal representation. At every round $t$, client $i$ samples a fresh batch of samples $\{\x_i^{t,j}, y_i^{t,j} \}_{j=1}^m$ from its local distribution $\mathcal{D}_i$ (line $9$). Thus the local loss is given by
$\hat{f}_i(\w_i \circ \B) := \frac{1}{2m}  \sum_{j=1}^{m} ( y_i^{t,j}- \w_i^{ \top} \B^{ \top} \x_i^{t,j})^2$.
 Utilizing as a warm start the obtained representation from previous rounds, client $i$ derives the optimal head $\w_i$ (via a large number of gradient updates), as can be observed in line $10$. Fixing the newly computed $\w_i$, client $i$ proceeds to update its global representation model with one step of gradient descent (line $11$) and consequently sends the new representation to the server. As depicted in lines $13$ and $14$ the parameter server averages the models received and orthogonalizes the resulting matrix. Furthermore, we observe that the number of representation model updates, $\tau_\phi$, is set to $1$ whereas the number of head updates, $\tau_h$, is a number sufficiently large for deriving the optimal heads. This imbalance takes advantage of the fact that local updates w.r.t. $\w_i$ are inexpensive in our regime. 
 The number of communication rounds per stage, $\tau_r$ is a small, a priori known, constant specified by our analysis.  


\end{section}
\subsection{Theoretical Results}\label{Theoretical_Results}

In this subsection, we provide rigorous analysis of \texttt{SRPFL} in the linear representation case. First we present the notion of `Wall Clock Time' which is the measure of performance for our algorithm. Subsequently, we highlight the contraction which determines the rate at which the distance to the optimal representation diminishes. Finally, we conclude that \texttt{SRPFL} outperforms its straggler-prone variant by a factor of $\log(N)$, under the standard assumption that clients' speeds are drawn from an exponential distribution.
Before we proceed, we present useful notation and assumptions.
\begin{align}
    &E_0 := 1-\dist^2\pr{\B^0, \B^*},\\
   \ssigmax := \hspace{-0.25in} \max_{\mathcal{I}\in [N], |\mathcal{I}|=n, n_0\leq n\leq N} \hspace{-0.1in} \sigma_{\max} &\pr{\frac{1}{\sqrt{n}}\W^*_{\mathcal{I}}}, \quad \hspace{-0.05in} \textit{and} \quad \hspace{-0.05in}\ssigmin:= \hspace{-0.25in} \min_{\mathcal{I}\in [N], |\mathcal{I}|=n, n_0\leq n\leq N}\hspace{-0.1in} \sigma_{\min}\pr{\frac{1}{\sqrt{n}}\W^*_{\mathcal{I}}}\label{sigm} 
\end{align}

\begin{assumption}\label{Assum1}
(Sub-gaussian design). The samples $\x_i \in \mathbb{R}^d$ are i.i.d. with mean $0$, covariance $\mathbf{I}_d$ and are $\mathbf{I}_d$-sub-gaussian, i.e. $\mathbb{E}[e^{\mathbf{v}^\top \x_i}]\leq e^ {\norm{\mathbf{v}}^2_2/2}$ for all $\mathbf{v} \in \mathbb{R}^d$.
\end{assumption}
\begin{assumption}\label{Assum2}
(Client diversity). Let $\ssigmin$ as defined in \eqref{sigm} i.e. $\ssigmin$ is the minimum singular value of any matrix that can be obtained by taking $n$ rows of $\frac{1}{\sqrt{n}}\W^*$. Then $\ssigmin > 0$.
\end{assumption}
\vspace{-0.1in}
Assumption \ref{Assum2} implies that the optimal heads of participating clients span $\mathbb{R}^k$. This holds in most FL regimes as the number of clients is larger than the dimension of the shared representation. 
\begin{assumption}\label{Assum3}
(Client normalization). The ground-truth client specific parameters satisfy $\norm{\w_i^*}_2 = \sqrt{k}$ for all $i \in [n]$ and $\B^*$ has orthonormal columns. 
\end{assumption}
\cref{Assum3} ensures the ground-truth matrix $\W^*\B^{* \top}$ is row-wise \emph{incoherent}, i.e. its row norms have similar magnitudes. This is of vital importance since our measurement matrices are row-wise sparse and incoherence is a common requirement in  sensing problems with sparse measurements. 

\noindent\textbf{Wall Clock Time.}
To measure the speedup that our meta-algorithm enjoys we use the concept of `Wall Clock Time' described below. \texttt{SRPFL} runs in communication rounds grouped in stages. In each round, both the representation and the local heads of the clients get updated. Consider such a round $t$ in stage $r$, with nodes $\{1,2,...,n_r \}$ participating in the learning process. Here $n_r$ denotes the slowest participating node. The expected amount of time that the server has to wait for the updates to take place is $\Ex{\mathcal{T}^r_{n_r}}$. Put simply, the expected computational speed of the slowest node acts as the bottleneck for the round. Further, at the beginning and end of every round, models are exchanged between the server and the clients. This incurs an additional communication cost $\mathcal{C}$. If $\tau_{r}$ communication rounds take place at stage $r$, then the overall expected `Wall Clock Time' for \texttt{SRPFL} is $
    \Ex{T_{\textit{SRPFL}}}=\sum_{r=0}^ {\log(N/n_0)}\tau_{r}\pr{\Ex{\mathcal{T}^r_{n_r}}+\mathcal{C}}$.
Similarly, the total expected runtime for \texttt{FedRep} can be expressed in terms of the total number of rounds, $T_{FR}$, as
$\Ex{T_{\textit{FedRep}}} = T_{FR} \pr{\Ex{\mathcal{T}^r_{N}}+\mathcal{C}}$.

\noindent\textbf{Contraction Inequality.}
The  next theorem captures the progress made between two consecutive rounds of \texttt{SRPFL}. It follows that the rate of convergence to the optimal representation is exponentially fast, provided that the number of participating nodes and the batch size are sufficiently large. 
\begin{restatable}{theorem}{FirstRestatable}\label{convergence} Let Assumptions~\ref{Assum1}-\ref{Assum3} hold. Further, let the following inequalities hold for the number of participating nodes and the batch size respectively,  $n\geq n_0$ and $m ~\geq~c_0 \frac{(1+ \var^2) k^3 \kappa^4 }{E_0^2} \max \ag{\log(N), d/n_0} $, for some absolute constant $c_0$. Then \texttt{SRPFL} in the linear representation case, with stepsize $\eta\leq \frac{1}{8\ssigmax^2}$, satisfies the following contraction inequality
\begin{align}\label{contraction_inequality}
    \dist\pr{\B^{t+1},\B^*} &\leq \dist\pr{\B^{t},\B^*}\sqrt{1-a}  +\frac{a}{\sqrt{\frac{n}{n_0}\pr{1-a}}},
\end{align}
with probability $1-T\exp\pr{-90 \min \ag{d, k^2 \log(N)}}$ where $a=\frac{1}{2}\eta E_0 \ssigmin^2 \leq \frac{1}{4}$.
\end{restatable}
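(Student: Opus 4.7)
The plan is to adapt the analysis of FedRep~(\cite{collins2021exploiting}) to this setting, with two essential modifications: every bound must hold uniformly in the number of participating clients $n\in[n_0,N]$ (which is why $\ssigmin,\ssigmax$ are defined as uniform extrema in \eqref{sigm}), and the finite-sample error must exhibit a $1/\sqrt{n}$ gain so that when chained across stages of the doubling scheme, the stochastic contribution dominates only at the last stage. The starting point is that the inner minimization in Line~10 of \cref{alg:3} has the closed form
\begin{equation*}
\w_i^{t+1} \;=\; \bigl(\B^{t\top}\hat{\bS}_i^t \B^t\bigr)^{-1}\,\B^{t\top}\bigl(\hat{\bS}_i^t \B^*\w_i^* + \z_i^t\bigr),
\end{equation*}
where $\hat{\bS}_i^t := \tfrac{1}{m}\sum_{j=1}^m \x_i^{t,j}\x_i^{t,j\top}$ and $\z_i^t := \tfrac{1}{m}\sum_{j=1}^m z_i^{t,j}\x_i^{t,j}$. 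Substituting into the $\B$-gradient step in Line~11 and averaging over the $n$ participating clients decomposes $\bar\B^{t+1}$ into a ``population'' component (obtained by setting $\hat{\bS}_i^t = \bI$ and $\z_i^t = 0$) plus finite-sample error terms.

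For the population component I will use the standard tangent/sine bound on principal angles,
\begin{equation*}
\dist(\B^{t+1},\B^*) \;\leq\; \frac{\bigl\lVert\hat{\B}_{*,\perp}^\top \bar\B^{t+1}\bigr\rVert_2}{\sigma_{\min}\bigl(\B^{*\top}\bar\B^{t+1}\bigr)}.
\end{equation*}
Decomposing $\B^t = \B^* A + \hat{\B}_{*,\perp} A_\perp$ with $\|A_\perp\|_2 = \dist(\B^t,\B^*)$ and $A^\top A + A_\perp^\top A_\perp = \bI_k$, a direct calculation yields $\hat{\B}_{*,\perp}^\top \bar\B^{t+1} = A_\perp(\bI_k - \eta A^\top M A)$ and $\B^{*\top}\bar\B^{t+1} = A^\top + \eta(\bI_k - A^\top A)MA$, where $M := \tfrac{1}{n}\sum_{i=1}^n \w_i^*\w_i^{*\top}$. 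Using \cref{Assum2} to obtain $\sigma_{\min}(M)\geq \ssigmin^2$, the bound $\|A\|_2\leq 1$, the step-size restriction $\eta \leq 1/(8\ssigmax^2)$, and inductively maintaining $\dist(\B^t,\B^*) \leq \dist(\B^0,\B^*)$ so that $\sigma_{\min}(A^\top A) \geq E_0$, standard inequalities give $\|\bI_k - \eta A^\top M A\|_2 \leq 1 - 2a$ and $\sigma_{\min}(\B^{*\top}\bar\B^{t+1})\geq \sqrt{E_0}$ up to lower-order terms, producing the multiplicative $\sqrt{1-a}$ factor, which is a slight loosening of $(1-2a)/\sqrt{E_0}$ justified in the regime $a\leq 1/4$.

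For the stochastic error I will invoke sub-Gaussian concentration under \cref{Assum1}: for each fixed $(i,t)$, $\|\hat{\bS}_i^t - \bI\|_2 \lesssim \sqrt{d/m} + \sqrt{\log(N)/m}$ and $\|\z_i^t\|_2 \lesssim \var(\sqrt{d/m} + \sqrt{\log(N)/m})$ with probability at least $1-\exp(-c\min\{d,k^2\log N\})$. The sample-size hypothesis $m \gtrsim (1+\var^2)k^3\kappa^4 E_0^{-2}\max\{\log N, d/n_0\}$ is calibrated so that these per-client errors are small compared to $\ssigmin^2 E_0 /(k\kappa)$. Crucially, the \emph{mean-zero} components of these errors average over the $n$ participating clients at rate $1/\sqrt{n}$; since $n \geq n_0$ throughout, this produces the $a/\sqrt{(n/n_0)(1-a)}$ additive term, with the $\sqrt{1-a}$ in the denominator coming from the same $\sqrt{E_0}$-type lower bound on $\sigma_{\min}(\B^{*\top}\bar\B^{t+1})$. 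A final union bound over the $T$ rounds and the $n\leq N$ clients then delivers the stated high-probability guarantee.

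The main obstacle will be obtaining the $1/\sqrt{n}$ rate for the averaged error, because $\w_i^{t+1}$ depends nonlinearly on both $\hat{\bS}_i^t$ and $\z_i^t$, and a naive triangle inequality would give only an $O(1)$ per-client bound, losing the crucial $\sqrt{n}$ gain (and hence the speedup). To recover it I will split the per-client gradient error into (i) a bias-like component proportional to $\dist(\B^t,\B^*)$, which gets absorbed into the $\sqrt{1-a}$ contraction, and (ii) a mean-zero fluctuation to which matrix-Bernstein and vector-Hoeffding inequalities apply after exploiting the row-wise incoherence guaranteed by \cref{Assum3}. Verifying that the denominator lower bound on $\sigma_{\min}(\B^{*\top}\bar\B^{t+1})$ survives the subtraction of the full error's spectral norm under the stated choice of $\eta$ and $m$ then combines with the numerator bound to yield the contraction inequality \eqref{contraction_inequality}.
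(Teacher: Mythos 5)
Your overall architecture---the closed-form head update, the decomposition of the averaged gradient step into a population component plus finite-sample fluctuations, and a Bernstein-type argument over the $mn$ samples to extract the $1/\sqrt{mn}$ rate for the mean-zero part---matches the paper's proof. The genuine gap is in the denominator of your sine/tangent bound. You lower-bound $\sigma_{\min}\pr{\B^{*\top}\bar{\B}^{t+1}}$ by roughly $\sqrt{E_0}$ and then assert that $(1-2a)/\sqrt{E_0}\leq\sqrt{1-a}$ is a ``slight loosening'' valid for $a\leq 1/4$. That inequality requires $E_0\geq(1-2a)^2/(1-a)$, i.e., $E_0$ bounded below by a constant close to $1$, whereas the theorem only guarantees $E_0>0$: the Method-of-Moments initialization gives $\dist\pr{\B^0,\B^*}\leq 1-C_M$, so $E_0$ may be an arbitrarily small positive constant. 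For small $E_0$ your chain of bounds yields $\dist^+\leq\dist\cdot(1-2a)/\sqrt{E_0}$, which is an expansion rather than a contraction, so the multiplicative $\sqrt{1-a}$ factor does not follow. The same issue propagates to the additive term, whose $\sqrt{1-a}$ denominator you also attribute to the $\sqrt{E_0}$-type bound.

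The paper avoids this by exploiting the explicit QR re-orthonormalization in the algorithm: writing $\B^+=\hB^+\R^+$, it uses $\dist\pr{\hB^+,\hB^*}\leq \|\hB^{*\top}_{\bot}\B^+\|_2\,\|(\R^+)^{-1}\|_2$ and lower-bounds $\sigma^2_{\min}\pr{\R^+}=\sigma^2_{\min}\pr{\B^+}$ by Weyl's inequality. Because $\B^+$ is an $O(\eta)$ perturbation of the \emph{orthonormal} current iterate $\hB$, this gives $\sigma^2_{\min}\pr{\R^+}\geq 1-\tfrac{1}{5}\eta E_0\ssigmin^2-o(1)$, i.e., a denominator of $1-O(a)$ rather than $\sqrt{E_0}$; dividing $1-\tfrac{5}{6}\eta E_0\ssigmin^2$ by $(1-\tfrac{1}{5}\eta E_0\ssigmin^2)^{1/2}$ is what actually produces $\sqrt{1-a}$, and the same $(1-a)^{-1/2}$ factor multiplies the noise term. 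Replacing your denominator $\sigma_{\min}\pr{\B^{*\top}\bar{\B}^{t+1}}$ (which can be as small as $\sqrt{E_0}$) with $\sigma_{\min}\pr{\bar{\B}^{t+1}}$ (which stays near $1$) and controlling it by a perturbation argument around $\hB$ closes the gap; the remaining components of your plan---the uniform-in-$n$ singular-value control via $\ssigmin,\ssigmax$, conditioning on bounded $\normt{\w_i^+}$ before applying Bernstein, and the union bound over $T$ rounds---are carried out essentially as you describe.
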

Here $T$ denotes the total number of communication rounds which is logarithmic w.r.t. the target accuracy~$\epsilon$. Similarly to \cite{collins2021exploiting}, 
the initial representation is computed by the Method of Moments and it holds that $\dist\pr{\B^{0}, \B^*}\leq 1-C_M$, for some constant $C_M$. It follows that $E_0$ is strictly greater than zero and hence, inequality \eqref{contraction_inequality} ensures contraction.


\noindent\textbf{Logarithmic Speedup.}
\cref{alg:3} starts with $n_0$ participating clients and follows a doubling scheme such that at stage $r$ only the fastest $2^rn_0$ nodes contribute to the learning process.  
Thus in stage $r$, inequality \eqref{contraction_inequality} can be written as :
\vspace{-1mm}
\begin{equation}
   \dist^+ \leq \dist \cdot \sqrt{1-\alpha}+ \frac{\alpha}{\sqrt{2^r\pr{1-\alpha}}} \label{con}
\end{equation}
The second term on the r.h.s. is an artifact of the noisy measurements. Specifically, using geometric series we can deduce that in the limit, contraction \eqref{con} converges to $\frac{\alpha}{\sqrt{2^r\pr{1-\alpha}}\pr{1-\sqrt{1-\alpha}}}.$ This implies that the achievable accuracy of our algorithm is lower bounded by $\frac{\alpha}{\sqrt{\frac{N}{n_0}\pr{1-\alpha}}\pr{1-\sqrt{1-\alpha}}}$, since the total number of stages is at most $\log\pr{N/n_0}$. To highlight the theoretical benefits of \texttt{SRPFL} we compare \cref{alg:3} to \texttt{FedRep} with full participation. \texttt{FedRep} can be distilled from \cref{alg:3} if one disregards the doubling scheme and instead $N$ randomly sampled nodes are chosen to participate at each round.
For fair comparison between the two methods the target accuracy should require the contribution of all $N$ nodes to be achieved i.e. it should be bounded above by $\frac{\sqrt{2}\alpha}{\sqrt{\frac{N}{n_0}\pr{1-\alpha}}\pr{1-\sqrt{1-\alpha}}}$. Taking the aforementioned points into consideration we express the target accuracy $\epsilon$ as follows
\vspace{-2mm}
\begin{align}\label{accuracy}
\vspace{-2mm}
    \epsilon = \hat{c} \frac{\alpha}{\sqrt{\frac{N}{n_0}\pr{1-\alpha}}\pr{1-\sqrt{1-\alpha}}}, \qquad \textit{with} \qquad \sqrt{2}>\hat{c}>1.
\end{align}
Intuitively, we expect \texttt{SRPFL} to vastly outperform straggler-prone \texttt{FedRep} with full participation as $\hat{c}$ approaches $\sqrt{2}$, since in this case the biggest chunk of the workload is completed before \texttt{SRPFL} utilizes the slower half of the clients. In contrast \texttt{FedRep} experiences heavy delays throughout the whole training process due to the inclusion of stragglers in all rounds. On the flip side, as $\hat{c}$ approaches $1$, the amount of rounds spent by \texttt{SRPFL} utilizing $N$ clients increases dramatically. In this case one should expect the speedup achieved in early stages by \texttt{SRPFL}, eventually to become obsolete. Theorem \ref{SpeedupTh} provides a rigorous exposition of this intuition.
\begin{restatable}{theorem}{SecondRestatable}\label{SpeedupTh}
Suppose at each stage the client's computational times are i.i.d. random variables drawn from the exponential distribution with parameter $\lambda$. Further, suppose that the expected communication cost per round is $\mathcal{C}=c\frac{1}{\lambda}$, for some constant $c$. Finally, consider target accuracy $\epsilon$ given in \eqref{accuracy}. Then, we have
$\frac{\Ex{T_{\textit{SRPFL}}}}{\Ex{T_{\textit{FedRep}}}} = \OO\pr{\frac{\log \pr{\frac{1}{\hat{c}-1}}}{\log(N)+\log\pr{\frac{1}{\hat{c}-1}}}}$.
\end{restatable}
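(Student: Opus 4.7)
The plan is to compute $\Ex{T_{\textit{SRPFL}}}$ and $\Ex{T_{\textit{FedRep}}}$ directly from the contraction~\eqref{con} of Theorem~\ref{convergence}, combined with the expected order statistics of $N$ i.i.d.\ $\mathrm{Exp}(\lambda)$ variables, and then take their ratio. Write $R := \log_2(N/n_0)$ and $d^*_r := \alpha / \bigl(\sqrt{2^r(1-\alpha)}(1-\sqrt{1-\alpha})\bigr)$ for the fixed point of the stage-$r$ contraction. By Remark~\ref{fast_nodes}, at stage $r$ the server waits for the first $n_r := 2^r n_0$ of the $N$ sampled clients to respond; under i.i.d.\ $\mathrm{Exp}(\lambda)$ computation times, this is precisely the $n_r$-th order statistic of $N$ such variables, with mean $\tfrac{1}{\lambda}(H_N-H_{N-n_r}) \leq \tfrac{1}{\lambda}\ln\tfrac{1}{1-n_r/N}$. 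For the terminal stage $r = R$ this reduces to $\tfrac{H_N}{\lambda} = \Theta(\log N / \lambda)$, which is also the per-round cost of \texttt{FedRep} at every one of its rounds.

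Next I would determine the number of rounds $\tau_r$ at each stage by maintaining inductively the invariant $\dist(\B^{r,\tau_r}, \B^*) \leq 2 d^*_r$. Since $d^*_{r-1} = \sqrt{2}\, d^*_r$, starting stage $r$ from $2 d^*_{r-1} = 2\sqrt{2}\, d^*_r$ and iterating~\eqref{con} gives $\dist \leq 2\sqrt{2}(1-\alpha)^{\tau_r/2} d^*_r + d^*_r$, so the invariant is preserved as soon as $(1-\alpha)^{\tau_r/2} \leq 1/(2\sqrt{2})$, forcing $\tau_r = \tau = \Theta(1/\alpha) = \OO(1)$ uniformly for $r = 0, \dots, R-1$. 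The last stage must land on the tighter target $\hat c\, d^*_R = \epsilon$, which demands $(1-\alpha)^{\tau_R/2} \leq (\hat c - 1)/(2\sqrt{2})$, i.e., $\tau_R = \Theta(\log(1/(\hat c - 1)))$. Applying the same contraction to \texttt{FedRep} as a single stage with $n = N$, started from the Method-of-Moments initialization $\dist^0 \leq 1 - C_M$, yields $T_{FR} = \Theta(\log N + \log(1/(\hat c - 1)))$.

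Plugging these in and substituting $s := R - r$ in the intermediate sum gives
\begin{equation*}
\Ex{T_{\textit{SRPFL}}} \leq \tau \sum_{s=1}^{R} \Bigl( -\tfrac{1}{\lambda} \ln(1 - 2^{-s}) + \mathcal{C} \Bigr) + \tau_R \Bigl( \tfrac{H_N}{\lambda} + \mathcal{C} \Bigr).
\end{equation*}
The key point is that $\sum_{s \geq 1} -\ln(1 - 2^{-s})$ is a \emph{convergent} series (dominated term-wise by $2^{-s+1}$ and summing to at most $2$), so the intermediate computational cost collapses to $\OO(1/\lambda)$---a sharp contrast with the $\Theta(\log^2 N / \lambda)$ one would obtain if stage $r$ had to wait for the maximum of $n_r$ independent exponentials. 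Together with $\mathcal{C} = c/\lambda$ constant and $R = \OO(\log N)$, we conclude that $\Ex{T_{\textit{SRPFL}}} = \OO\bigl(\log(1/(\hat c - 1)) \cdot \log N / \lambda \bigr)$, while $\Ex{T_{\textit{FedRep}}} = T_{FR}(H_N/\lambda + \mathcal{C}) = \Theta\bigl( (\log N + \log(1/(\hat c - 1))) \log N / \lambda \bigr)$; the ratio is exactly the claimed bound. The main obstacle is precisely this telescoping step: one must identify the correct order-statistics interpretation of the stage-$r$ wait (waiting for the $n_r$-th fastest of $N$, not the maximum of $n_r$) and verify that the doubling schedule $n_r = 2^r n_0$ is exactly the right calibration for the resulting series to converge to a constant. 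The remaining pieces---deriving $\tau_r$ and $T_{FR}$ from~\eqref{con}---are essentially mechanical once Theorem~\ref{convergence} is available.
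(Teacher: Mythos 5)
Your proposal is correct and follows essentially the same route as the paper: both arguments bound $\Ex{T_{\textit{SRPFL}}}$ and $\Ex{T_{\textit{FedRep}}}$ separately using the mean $\frac{1}{\lambda}(H_N-H_{N-n_r})$ of the $n_r$-th order statistic of $N$ exponentials, observe that under the doubling schedule the intermediate per-stage computation costs form a convergent geometric series summing to $\OO(1/\lambda)$, charge $\Theta(\log(1/(\hat c-1)))$ rounds to the terminal stage and $\Theta(\log N+\log(1/(\hat c-1)))$ rounds to full-participation \texttt{FedRep}, and then take the ratio. The only difference is minor: you justify the $\OO(1)$ rounds per intermediate stage by maintaining the invariant $\dist\leq 2d^*_r$ at the end of each stage, whereas the paper derives explicit switching distances $X_r$ by equating progress-per-unit-time ratios between consecutive stages --- both yield the same asymptotics.
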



\vspace{-2mm}
\section{Experiments}\label{exper}
\vspace{-2mm}

In our empirical study we consider the image classification problem in the following four datasets, CIFAR10, CIFAR100, EMNIST and FEMNIST. We conduct various experiments in partial and full node participation settings with different neural network architectures comparing the performance of our proposed method to other state of the art benchmarks. Further experimental results and extensive discussion can be found in \cref{appendix_section_experiment}. A summary of our findings is presented below.

\noindent{\bf Baselines}.
We include \texttt{FedRep} \cite{collins2021exploiting} and \texttt{LG-FedAvg} \cite{liang2020think}, FL algorithms that utilize a mixture of global and local models to derive personalized solutions with small global loss. The schemes that arise when those methods are coupled with our doubling scheme (denoted by \texttt{SRPFL} and \texttt{LG-SRPFL}, respectively) are also studied. \texttt{FLANP} \cite{reisizadeh2020stragglerresilient} and \texttt{FedAvg} \cite{mcmahan2017communication} present interesting benchmarks for comparison, however, since their efficiency deteriorates in data-heterogeneous settings, the performance of their fine-tuning variants \cite{wang2019federated,yu2020salvaging}, is also explored  in our experiments. 
Following a different approach \texttt{HF-MAML} \cite{fallah2020convergence} originates from the Model Agnostic Meta Learning regime and produces high quality personalized solutions in data-heterogeneous settings. 

\noindent{\bf Data allocation.}
We aim at generating data heterogeneity in a controlled manner. We randomly partition the training data points equally to $M$ clients and ensure that each client observes the same number of local classes (taking values Shard $=3,5,20$, \cref{fig:experiment_main}). Further we adjust the testing data points for each client to match the corresponding local training set. In FEMNIST dataset we follow a similar data allocation scheme to \cite{collins2021exploiting}, subsampling $15,000$ data points out of $10$ classes. Further details and different data allocations are presented in \cref{appendix_section_experiment}. 

\noindent{\bf Simulation of System Heterogeneity.}\
We consider two types of client speed configurations. \\
{\it Fixed computation speeds.} In this configuration we sample once, a value for each client from the exponential distribution with parameter $1$. The speed of the client is proportional to the reciprocal of this value and is fixed throughout the training procedure. In addition to the computational time, another critical factor in the overall running time is the communication time. Our methods suffer a fixed communication cost at each round. In \cref{fig:experiment_main} each row depicts the effects of different values of communication cost on the convergence of the algorithms under consideration (C.T. $= 0, 10, 100$).\\
    {\it Dynamic computation speeds.}
    At each round we sample a value for each client $i$ from the exponential distribution with parameter $\lambda_i$. Note that $\lambda_i$ is sampled a priori from a uniform distribution over $[1/N, 1]$ and an additional communication cost is included in the simulation of the overall running time. The experimental results remain qualitatively unchanged and are deferred to \cref{appendix_section_experiment}.
 
\noindent{\textbf{Results and Discussions.}} 
 Our experimental results in \cref{fig:experiment_main} highlight three points of significance in regimes with data and device heterogeneity: 1) Applying \texttt{SRPFL} to personalized FL solvers significantly enhance their efficiency and robustness. 2)  \texttt{SRPFL} coupled with \texttt{FedRep} exhibits superior performance compared to many state-of-the-art FL methods. 3) Our suggested method vastly outperforms the fine-tuning variant of the previously proposed \texttt{FLANP}, especially in regimes with high data heterogeneity. Furthermore, as one would expect high communication cost hurts the adaptive participation schemes disproportionately, mitigating their benefits in terms of running time.

\begin{figure*}[t]
    \centering
    \begin{tabular}{c|c| c| c }
        \includegraphics[width=.22\textwidth]{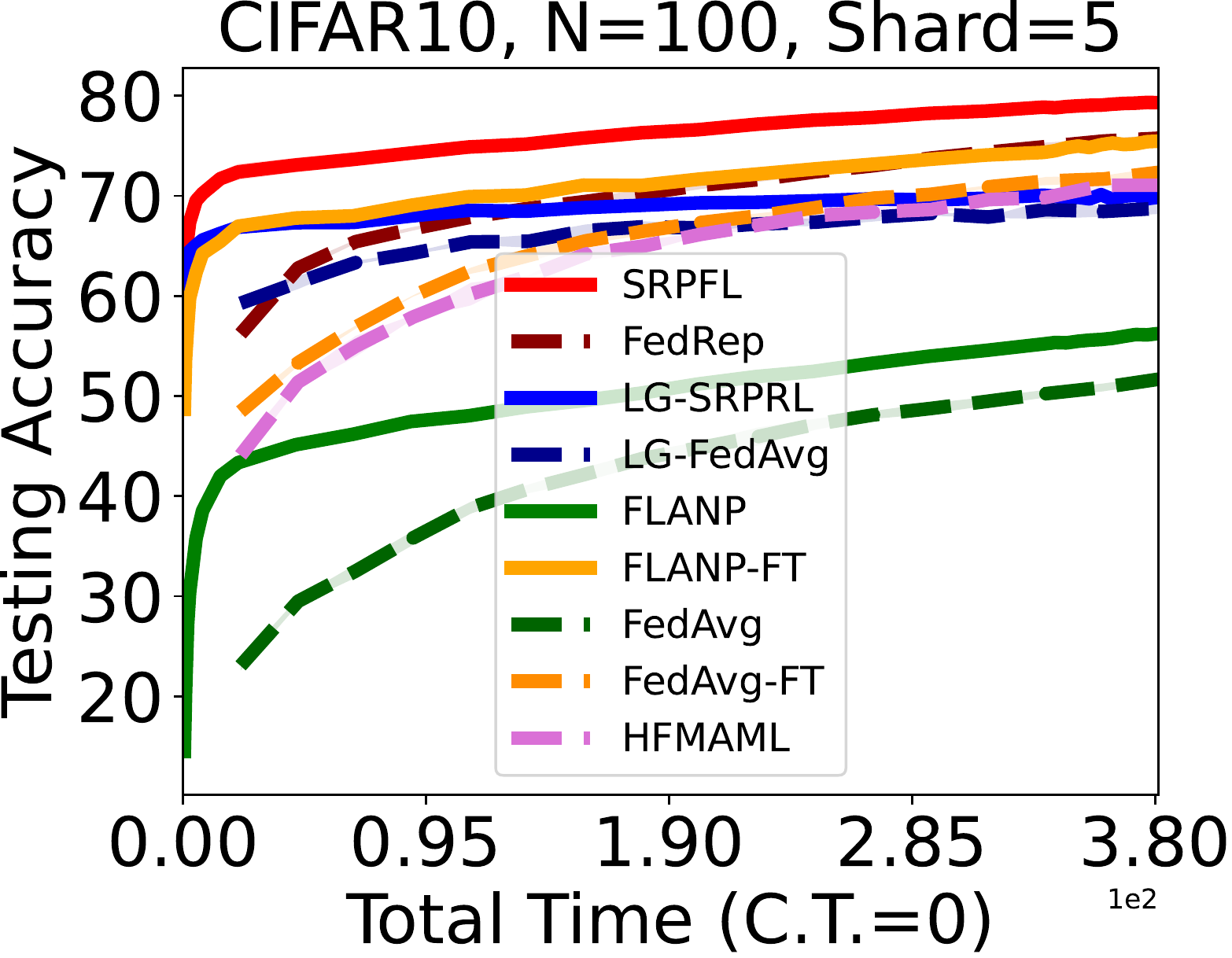} &
        \includegraphics[width=.22\textwidth]{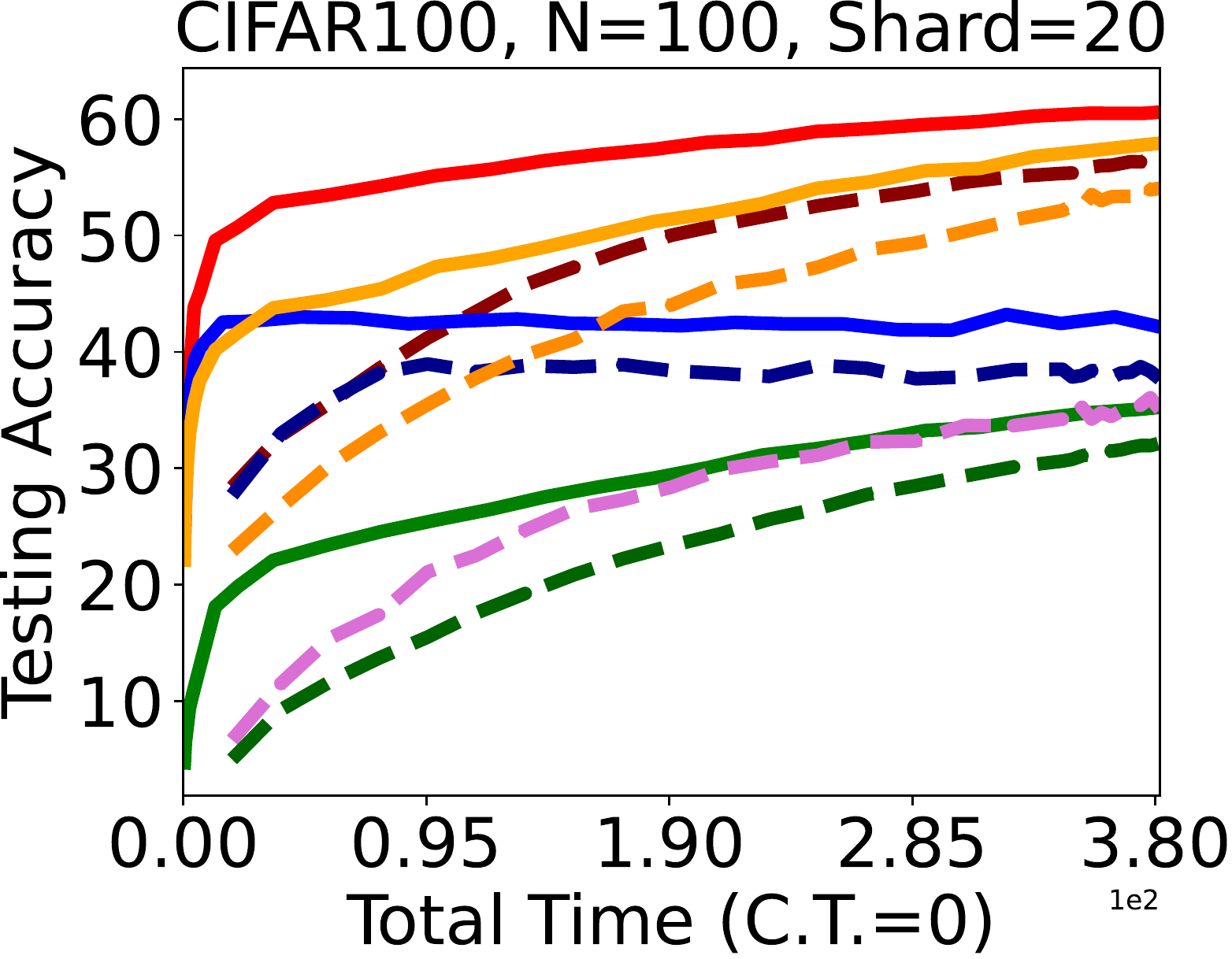} &
        \includegraphics[width=.22\textwidth]{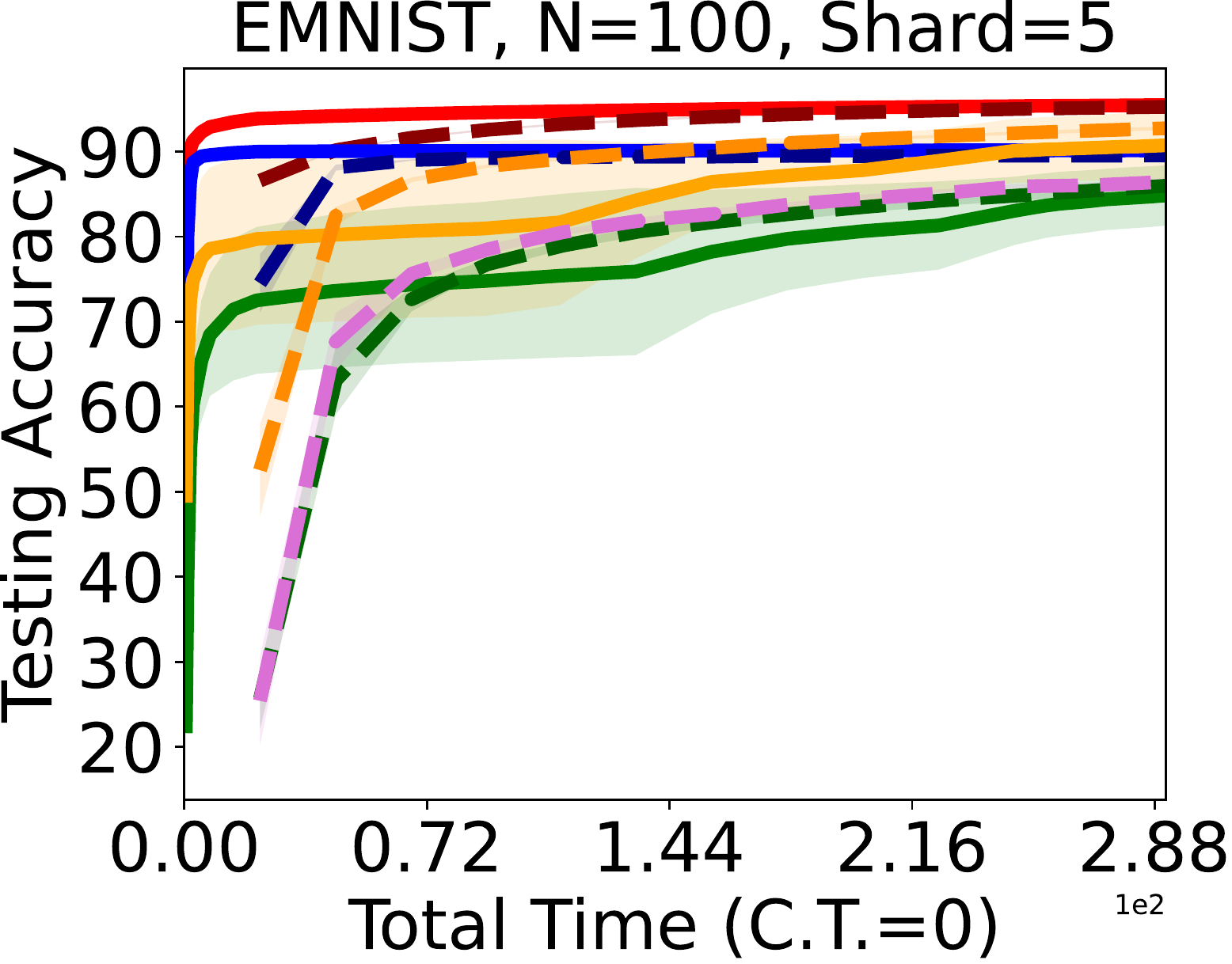} &
        \includegraphics[width=.22\textwidth]{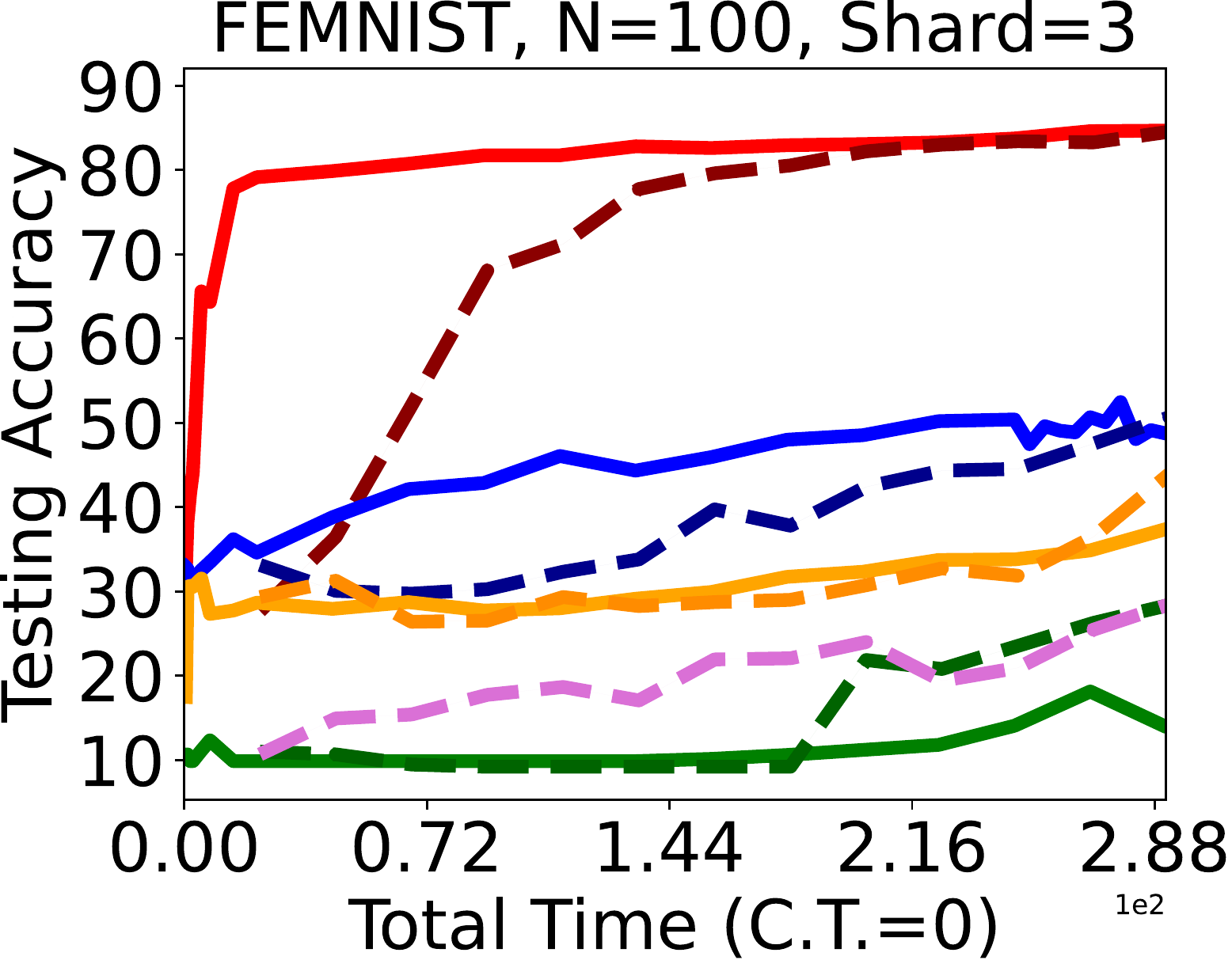} \\
        \includegraphics[width=.22\textwidth]{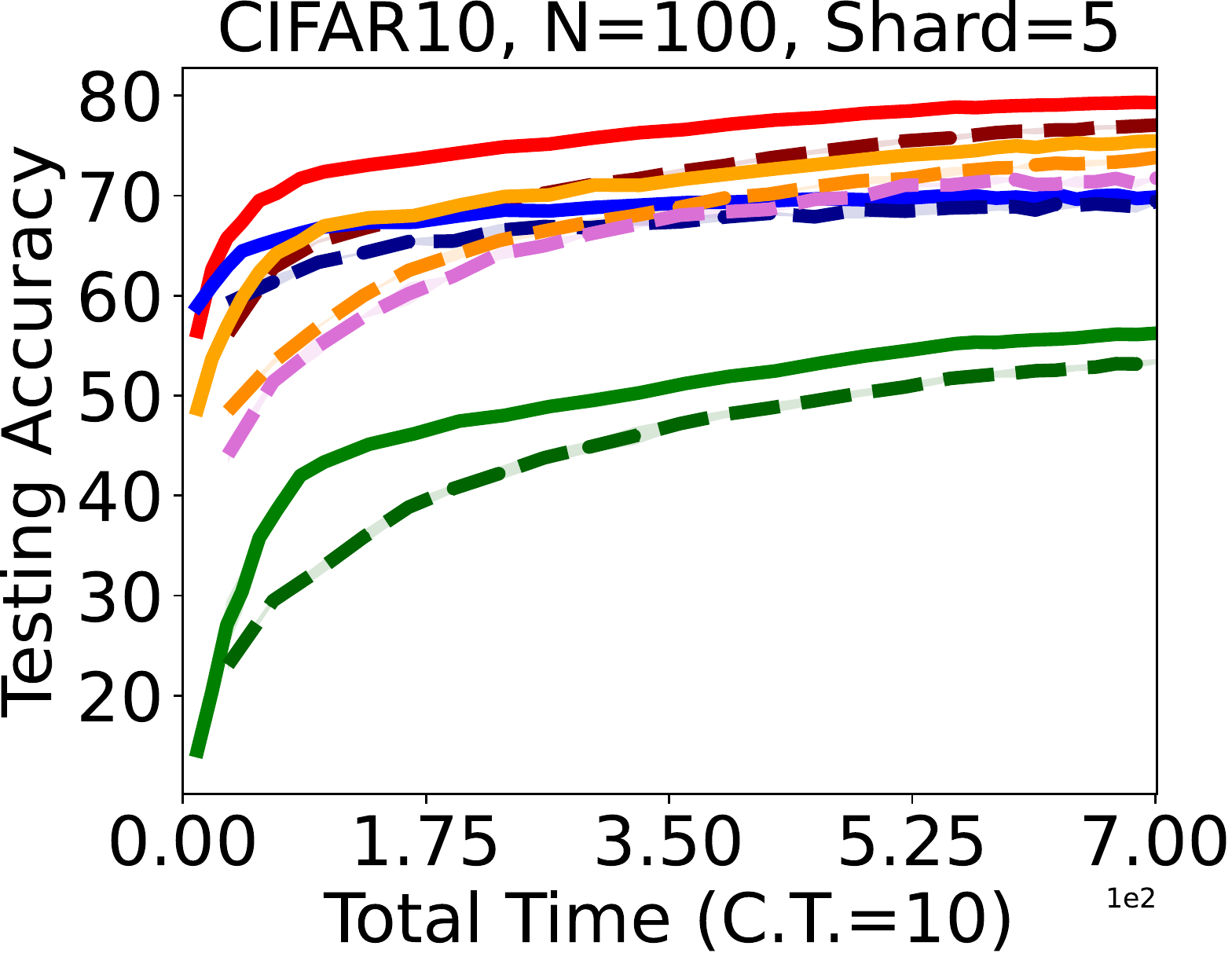}&
        \includegraphics[width=.22\textwidth]{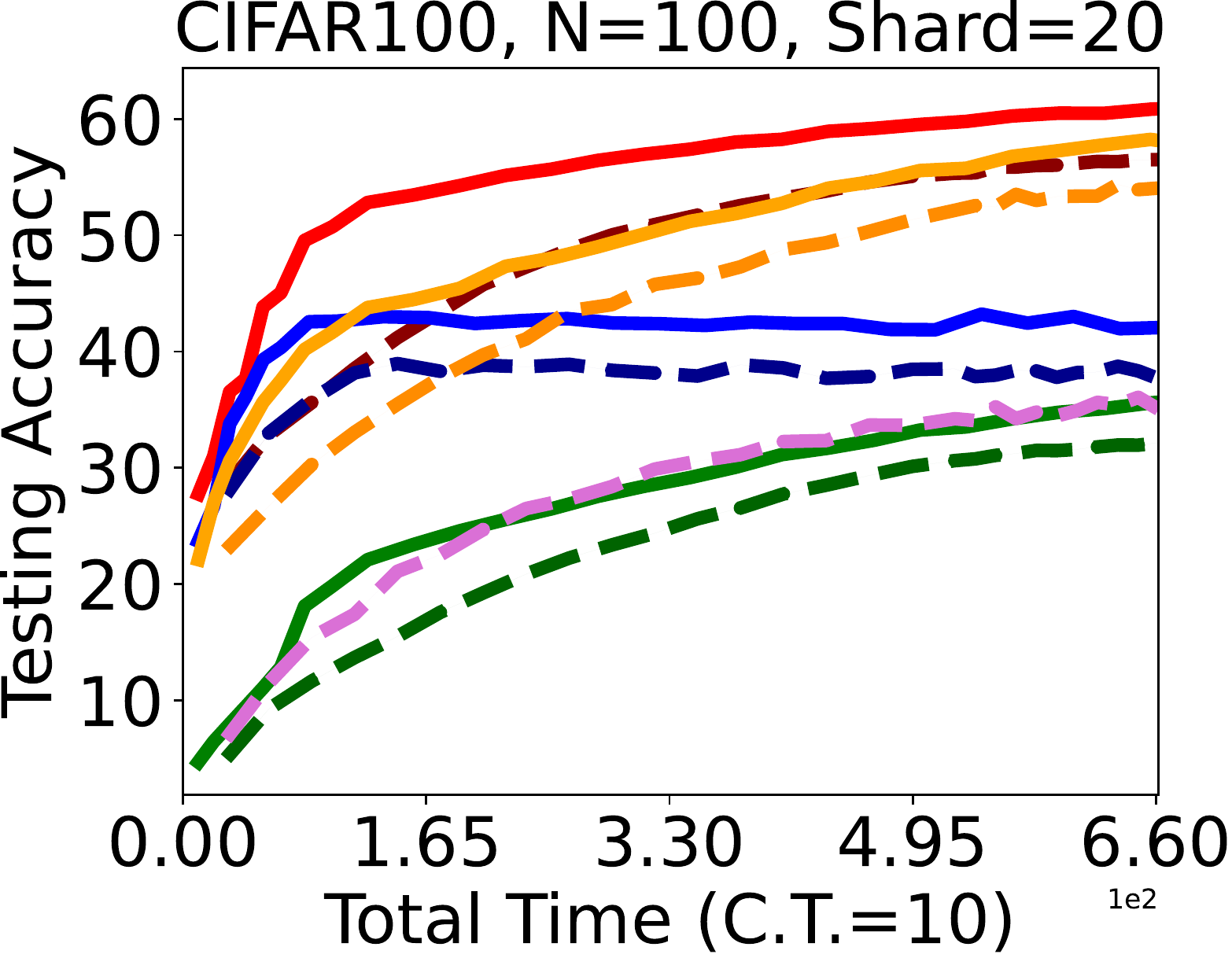} &
        \includegraphics[width=.22\textwidth]{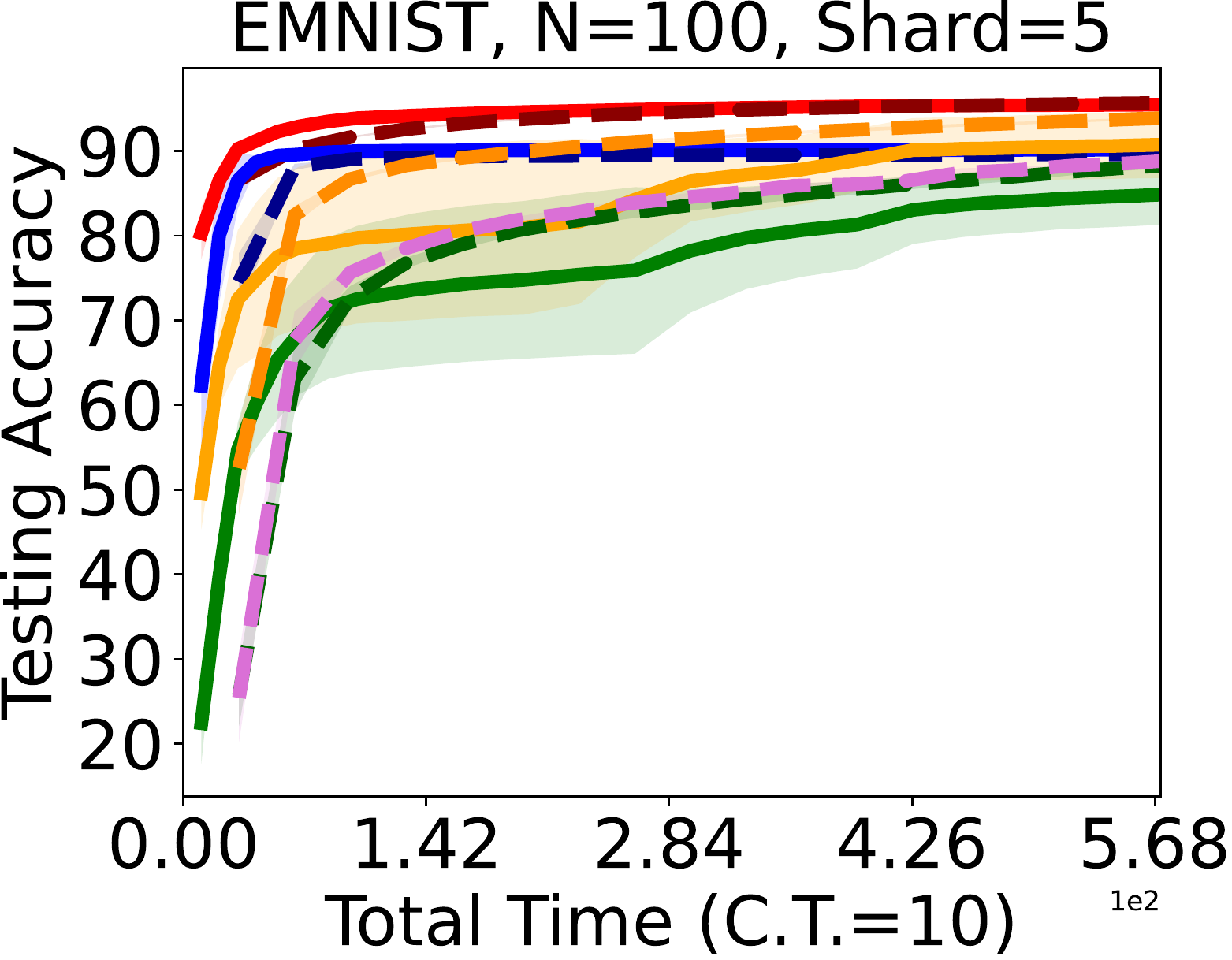} &
        \includegraphics[width=.22\textwidth]{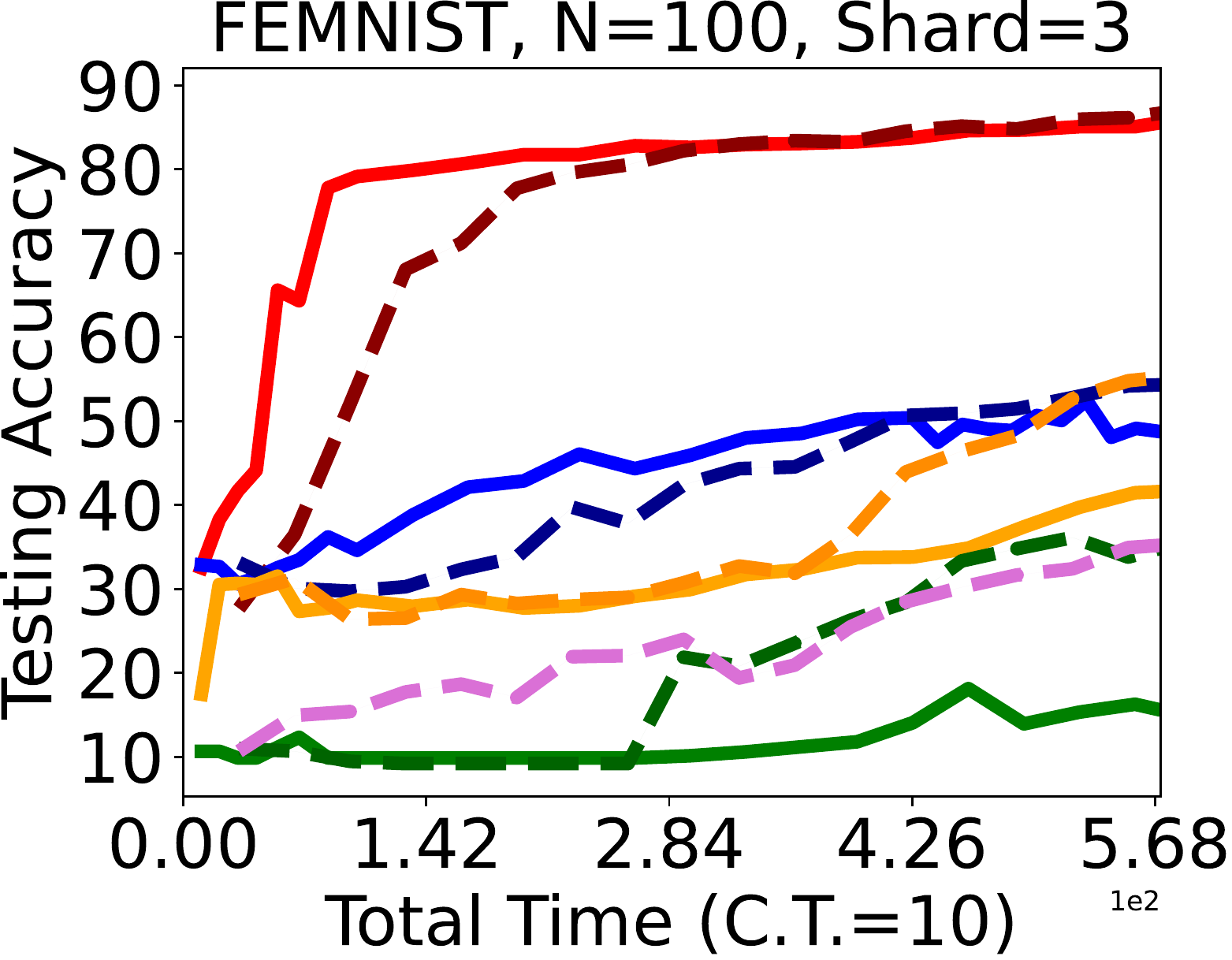} \\
        \includegraphics[width=.22\textwidth]{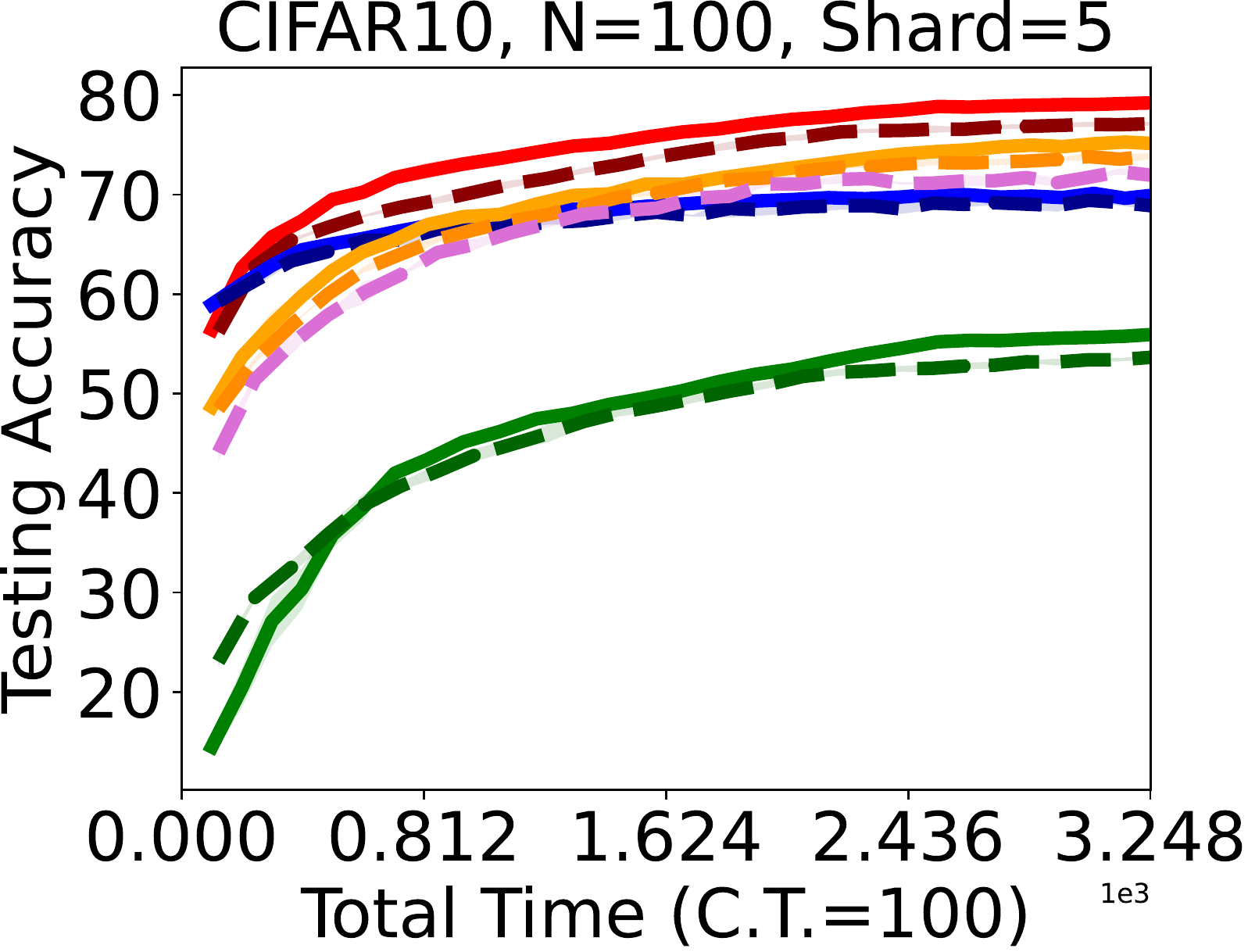} &
        \includegraphics[width=.22\textwidth]{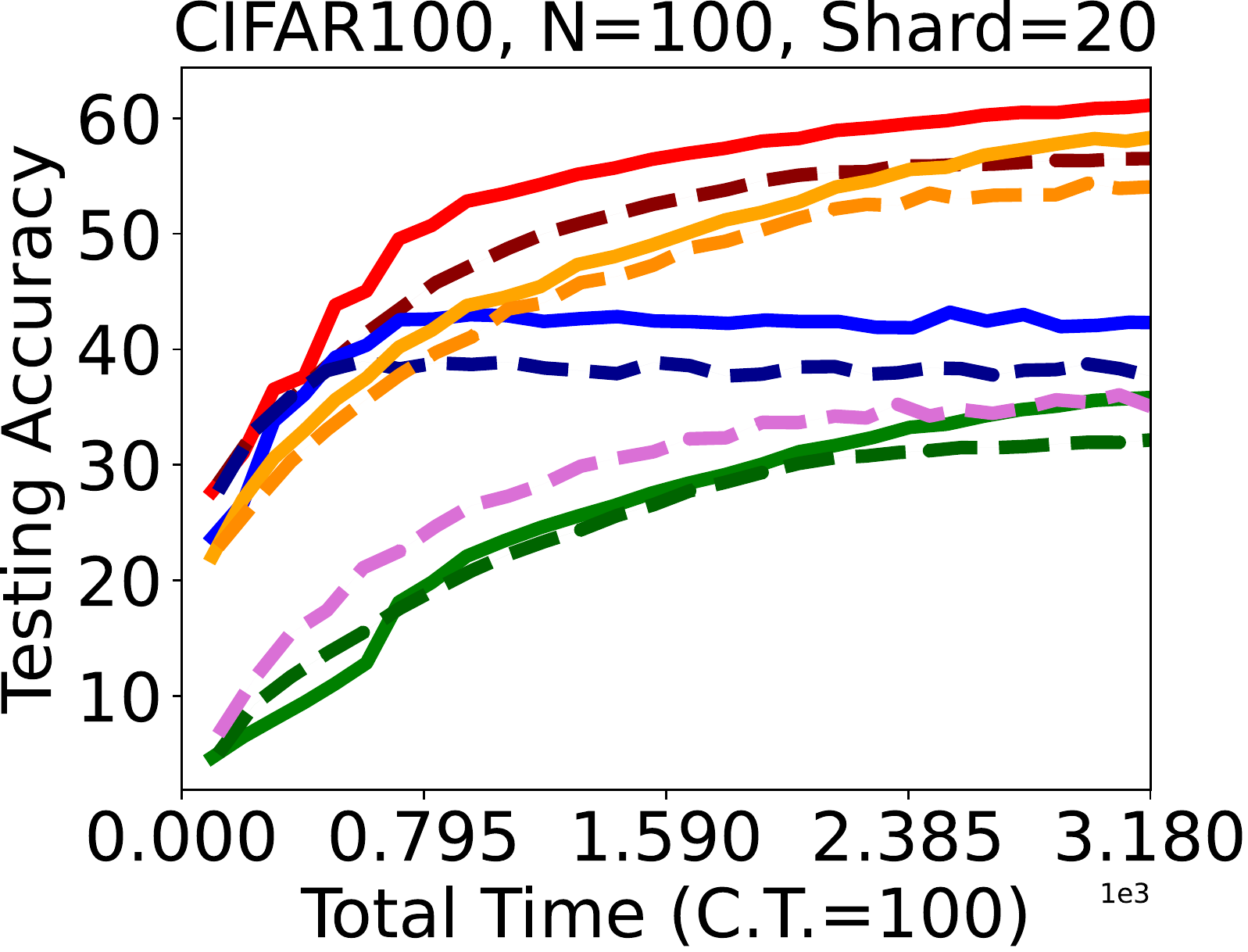} &
        \includegraphics[width=.22\textwidth]{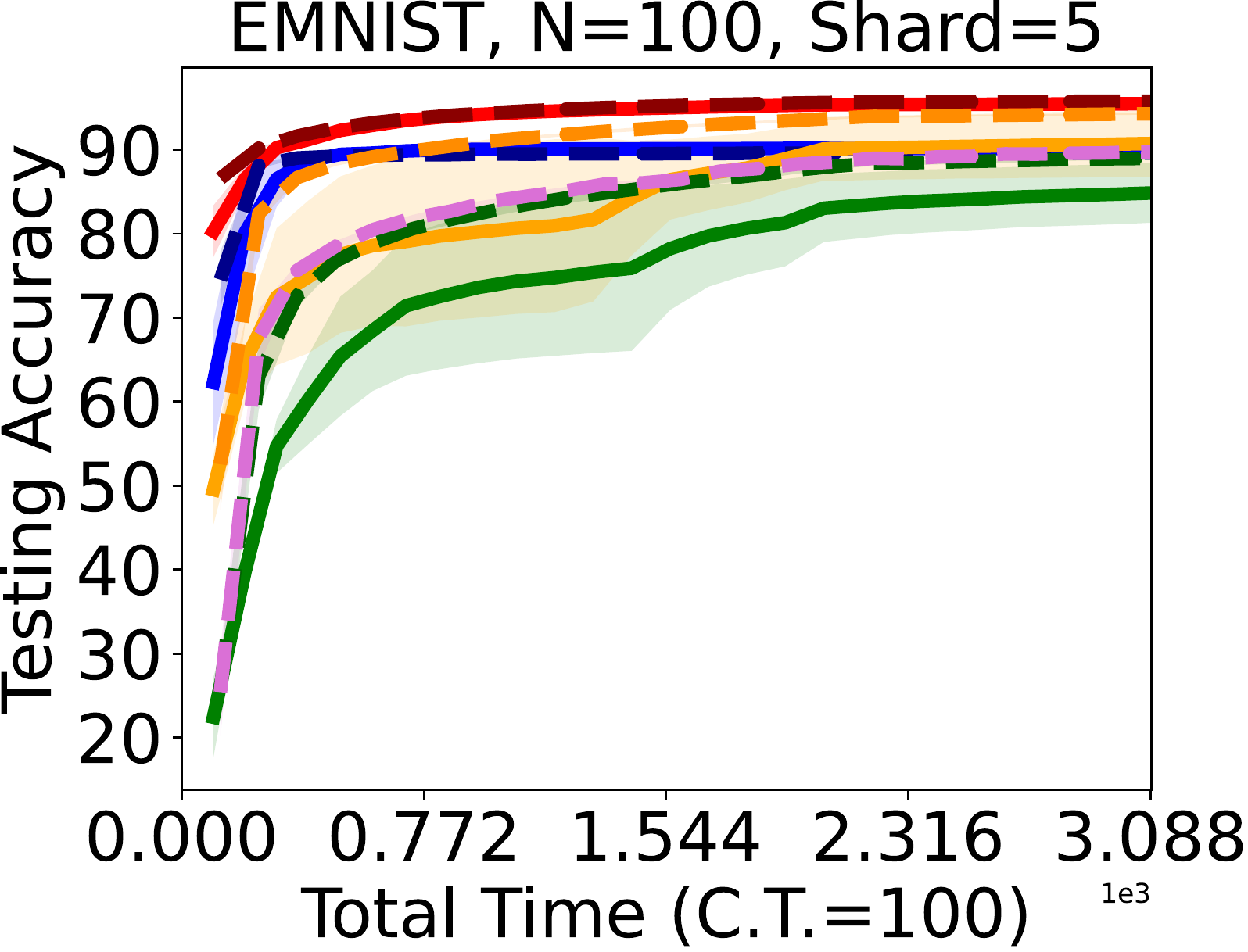} &
        \includegraphics[width=.22\textwidth]{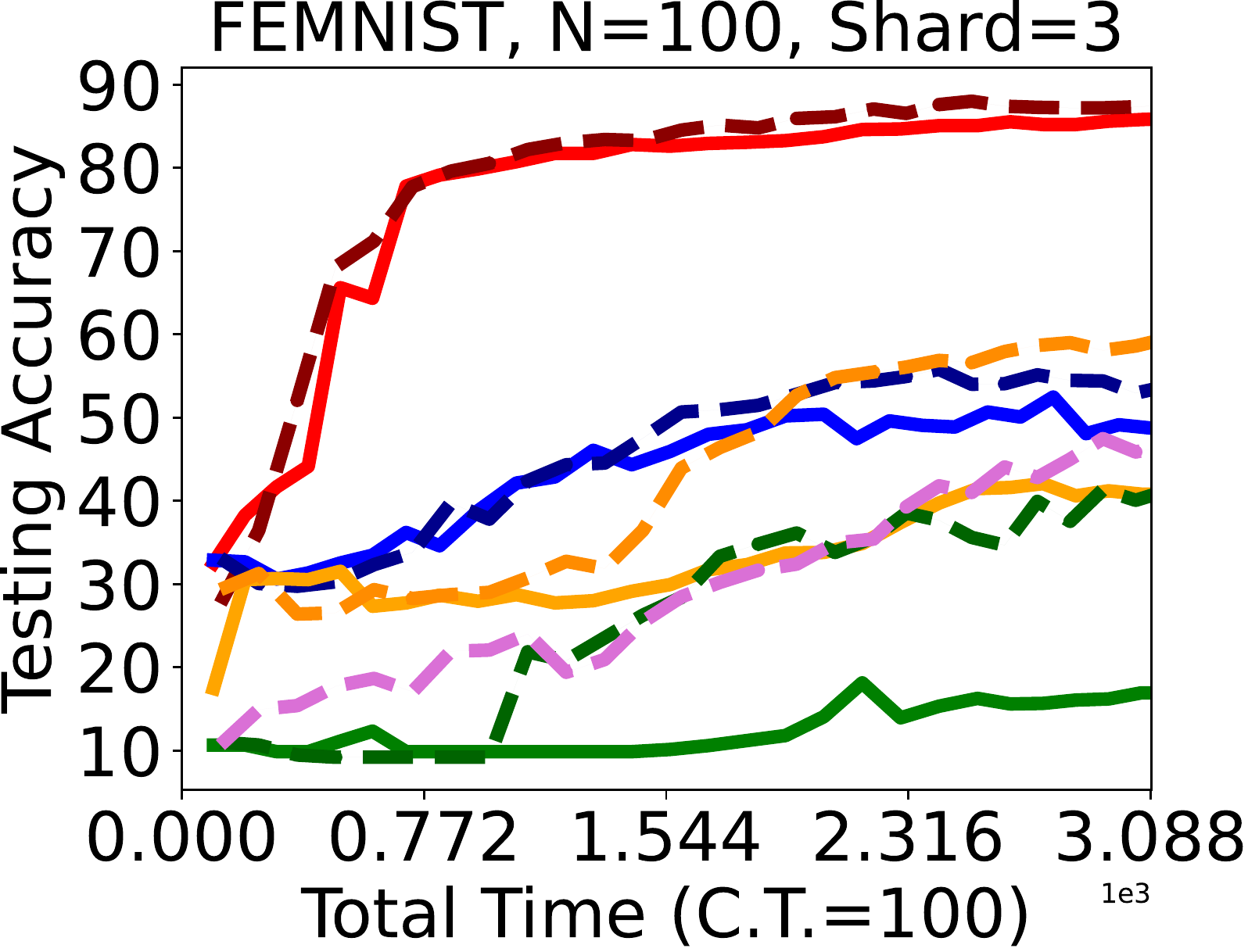} 
    \end{tabular}
    \vspace{-0mm}
    \caption{Empirical results on CIFAR10, CIFAR100, EMNIST, and FEMNIST with full participation $M=N$, and \emph{fixed computation speeds}. "Shard" stands for the number of different classes per client and C.T. denotes the communication cost per round. \texttt{SRPFL} is short for \texttt{FedRep-SRPFL}.}
    \vspace{-3mm}
    \label{fig:experiment_main}
\end{figure*}


\vspace{-2mm}
\section*{Acknowledgements}
\vspace{-2mm}
The research of Mokhtari and Tziotis is  supported in part by NSF Grants 2007668, 2019844, and 2112471, ARO Grant W911NF2110226,  the Machine Learning Lab (MLL) at UT Austin, and the Wireless Networking and Communications Group (WNCG) Industrial Affiliates Program. The research of Hassani and Shen is supported by NSF Grants 1837253, 1943064, AFOSR Grant FA9550-20-1-0111, DCIST-CRA, and the AI Institute for Learning-Enabled Optimization at Scale (TILOS). The research of Pedarsani is supported by NSF Grant 2003035. 

\bibliographystyle{ieeetr}
\bibliography{arXiv}
\appendix
\newpage
\appendix
\onecolumn
\section{Appendix}
\label{Appendix_A}
Before we dive into the analysis we provide the following useful definition.
\begin{definition} \label{eqn_A_subggausian}
For a random vector $\x \in \mathbb{R}^{d_1}$ and a fixed matrix $\A \in \mathbb{R}^{d_1\times d_2}$, the vector $\A^\top \x$ is called $\norm{\A}_2$-subgaussian if $\y^\top\A^\top\x$ is subgaussian with subgaussian norm $\OO\pr{\norm{\A_2}\norm{\y}_2}$ for all $\y \in \mathbb{R}^{d_2}$, i.e. $\Ex{\exp\pr{\y^\top \A^\top \x}}\leq \exp\pr{\norm{\y}^2_2\norm{\A}^2_2/2}$.  
\end{definition}

We study the performance of \texttt{SRFRL} with \texttt{FedRep} as the subroutine of choice. The first part of our analysis focuses on a single round $t$ and extends the analysis in \cite{collins2021exploiting}. We assume that there are $N$ clients in the network and at round $t$ a subset $\I^t$ of them participate in the learning procedure with cardinality $n\geq n_0 := \frac{2d}{\log(N) \cdot \ssigmax}$. Without loss of generality we assume that the clients are indexed from fastest to slowest thus the clients that participate in the learning process are all $i\in [n]$. Each client $i$, draws a batch of $m ~\geq~c_0 \frac{(1+ \var^2) k^3 \kappa^4 \log(N) }{E_0^2}$ fresh, i.i.d. samples at every round. We denote by $\X_i^t \in \mathbb{R}^{d \times m}$ and $\Y_i^t \in \mathbb{R}^{m}$ the matrix of samples and the labels for client $i$, such that the rows of $\X_i^t$ are samples $\{\x_i^1,...,\x_i^m\}$. By $\Z_i^t \in \mathbb{R}^{m}$ we denote the noise in the measurements of client $i$, with $z_{i,j} \sim \mathcal{N}(0,\var^2)$. Let $\hB^* \in \mathbb{R}^{d \times k}$ and $\W^* \in \mathbb{R}^{N \times k}$ stand for the optimal representation and the concatenation of optimal heads respectively. The hat denotes that a matrix is orthonormal i.e. its columns form an orthonormal set. Similarly, $\hB^t \in \mathbb{R}^{d \times k}$ and $\W^t \in  \mathbb{R}^{N \times k}$ denote the global representation and the concatenation of the heads at round $t$. $\w_i^{*}$'s and $\w_i^{t}$'s denote the optimal heads and the heads at round $t$ which constitute the rows of $\W^*$ and $\W^t$ respectively. Furthermore we define $\ssigmin:= \min_{\mathcal{I}\in [N], |\mathcal{I}|=n', n'\leq N}\sigma_{\min}\frac{1}{\sqrt{n'}}\W^*_{\mathcal{I}}$ and $\ssigmax := \max_{\mathcal{I}\in [N], |\mathcal{I}|=n', n'\leq N}\sigma_{\max}\frac{1}{\sqrt{n'}}\W^*_{\mathcal{I}}$, where $\W_\I$ is formed by taking the rows of $\W$ indexed by $\I$. That is $\ssigmax$ and $\ssigmin$ are the maximum and minimum singular values of any submatrix $\W^*_\I$ that can be obtained throughout the course of our algorithm. Notice that by assumption \ref{Assum3} each row of $\W^*$ has norm $\sqrt{k}$, so $\frac{1}{n'}$ acts as a normalization factor such that $\norm{\frac{1}{n'}\W^*_\I}_F = \sqrt{k}$. Finally we define $\kappa = \ssigmax/\ssigmin$. Since we focus on a single round the time index can be dropped for simplicity. Further, henceforth we drop the subscripts $\mathcal{I}^t$ on $\W^t$.

First we derive the update scheme our algorithm follows. Notice that the empirical objective function given in \eqref{Empirical_Loss_General} can be expressed via matrices $\X_i$ and $\Y_i$,
\begin{align}
     L_N(\B, \W)=\frac{1}{2mn}\sum_{i=1}^n\pr{\Y_i-\X_i\hB \w_i}^2
\end{align}
Further, computing the gradients we derive
\begin{align}
     \frac{1}{2mn}\sum_{i=1}^n\nabla_{\hB}\pr{\Y_i-\X_i\hB \w_i}^2 &=
     \frac{1}{mn}\sum_{i=1}^n\X_i^\top\pr{\X_i\hB \w_i-\Y_i}\w_i^\top,\\
      \frac{1}{2mn}\nabla_{\w_i}\sum_{j=1}^n\pr{\Y_j-\X_j\hB \w_j}^2 &= \frac{1}{mn}\hB^\top \X_i^\top\pr{\X_i\hB \w_i-\Y_i}
\end{align}
and since $ \pr{\frac{1}{m}\hB^\top \X_i^\top \X_i \hB}$ is invertible with high probability by Lemma \ref{LemmaInvert}, solving for the minimizer gives us
\begin{align}
   \w^+_i = \pr{\frac{1}{m}\hB^\top \X_i^\top \X_i \hB}^{-1}\frac{1}{m}\hB^\top \X_i^\top \Y_i
\end{align}
Thus our update scheme with stepsize  $\eta$ is
\begin{align}
     \forall i \in [n] \quad \w_i^+ &= \pr{\frac{1}{m}\hB^\top \X_i^\top \X_i \hB}^{-1}\frac{1}{m}\hB^\top \X_i^\top \Y_i\label{w_update} \\
    \B^+&= \hB - \frac{\eta}{mn}\sum_{i=1}^n \X_i^\top\pr{\X_i \hB \w_i^+-\Y_i}\w_i^{+\top}\label{B_update}\\
    \hB^+, \R^+ &= \textit{QR}(\B^+)\label{Bnorm_update}
\end{align}
where \textit{QR} denotes the QR decomposition and $\Y_i = \X_i \hB^*\w^*_i + \Z_i$. 
\begin{lemma}
For every client $i$ the update for $\w_i$ can be expressed as follows :
\begin{align}
\w_i^+ &=   \hB^{\top} \hB^*\w^*_i +\F_i + \G_i,\label{w+lemma}
\end{align}
where $\F_i$ and $\G_i$ are defined in equations \eqref{Feq} and \eqref{Geq}, respectively. 
\end{lemma}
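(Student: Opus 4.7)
The plan is to derive the claimed identity directly from the closed-form head update in \eqref{w_update}, substituting the data model and then separating out the signal and noise contributions. Concretely, the starting point is
\begin{equation*}
\w_i^+ = \pr{\tfrac{1}{m}\hB^\top \X_i^\top \X_i \hB}^{-1}\tfrac{1}{m}\hB^\top \X_i^\top \Y_i,
\end{equation*}
together with the generative model $\Y_i = \X_i \hB^* \w_i^* + \Z_i$. Substituting the latter splits $\w_i^+$ into a clean ``signal'' piece driven by $\X_i \hB^* \w_i^*$ and a ``noise'' piece driven by $\Z_i$. The noise piece is exactly the natural candidate for $\G_i$, namely
\begin{equation*}
\G_i := \pr{\tfrac{1}{m}\hB^\top \X_i^\top \X_i \hB}^{-1}\tfrac{1}{m}\hB^\top \X_i^\top \Z_i, \tag{$\ast$}\label{Geq}
\end{equation*}
so all of the work lies in reshaping the signal piece.

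The key algebraic manoeuvre for the signal piece is to add and subtract the target $\hB^\top \hB^* \w_i^*$. Using that $\hB$ is orthonormal (so $\hB^\top \hB = I_k$) lets me write $\hB^\top \hB^* = \hB^\top \hB\,\hB^\top \hB^*$ and pull an inverse inside, turning the error into
\begin{equation*}
\F_i := \pr{\tfrac{1}{m}\hB^\top \X_i^\top \X_i \hB}^{-1}\tfrac{1}{m}\hB^\top \X_i^\top \X_i \pr{\bI_d - \hB\hB^\top}\hB^* \w_i^*. \tag{$\dagger$}\label{Feq}
\end{equation*}
The appearance of the projector $\bI_d - \hB\hB^\top$ is important because it is precisely the object controlled by the principal angle distance $\dist(\hB,\hB^*)$, and hence $\F_i$ will later be bounded in terms of $\dist(\hB,\hB^*)$. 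Combining the reshaped signal piece with $\G_i$ immediately yields $\w_i^+ = \hB^\top \hB^* \w_i^* + \F_i + \G_i$, which is \eqref{w+lemma}.

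The steps I would carry out in order are therefore: (i) substitute the label model into \eqref{w_update}; (ii) distribute the inverse over the sum to define $\G_i$ as in \eqref{Geq}; (iii) add and subtract $\hB^\top \hB^* \w_i^*$ and use $\hB^\top \hB = I_k$ together with the identity $\pr{\tfrac{1}{m}\hB^\top \X_i^\top \X_i \hB}^{-1}\pr{\tfrac{1}{m}\hB^\top \X_i^\top \X_i \hB} = I_k$ to collapse one term into $\hB^\top\hB^*\w_i^*$; (iv) factor the remaining error through the projector $\bI_d - \hB\hB^\top$ to obtain \eqref{Feq}; (v) collect the three pieces.

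There is no genuine obstacle here beyond the bookkeeping: the only implicit fact one needs is that the Gram matrix $\tfrac{1}{m}\hB^\top \X_i^\top \X_i \hB$ is invertible with high probability, which is guaranteed by Lemma \ref{LemmaInvert} (already invoked to write down the closed form update in the first place), so the manipulation is valid on the high-probability event on which the whole analysis operates. The main conceptual point to flag — rather than a difficulty — is that the specific rearrangement chosen exposes $\bI_d - \hB\hB^\top$, which is the structural feature that lets $\F_i$ be controlled later by $\dist(\hB,\hB^*)$ in the contraction argument of Theorem \ref{convergence}.
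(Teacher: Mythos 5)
Your proposal is correct and follows essentially the same route as the paper: substitute $\Y_i = \X_i\hB^*\w_i^* + \Z_i$ into the closed-form update, identify the noise piece as $\G_i$, and add and subtract $\hB^\top\hB^*\w_i^*$ (via $\pr{\tfrac{1}{m}\hB^\top \X_i^\top \X_i \hB}^{-1}\pr{\tfrac{1}{m}\hB^\top \X_i^\top \X_i \hB} = \bI_k$) to isolate $\F_i$. Your $\F_i$, written with the projector $\bI_d - \hB\hB^\top$ factored out, is algebraically identical to the paper's difference-of-two-terms form in \eqref{Feq}, and the paper indeed exploits exactly this projector structure when bounding $\F_i$ via $\dist(\hB,\hB^*)$ later.
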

\begin{proof}
Further expanding \eqref{w_update} we can write
\begin{align}
    \w_i^+ &=\pr{\frac{1}{m}\hB^\top \X_i^\top \X_i \hB}^{-1}\frac{1}{m}\hB^\top \X_i^\top \X_i \hB^*\w_i^*+\pr{\frac{1}{m}\hB^\top \X_i^\top \X_i \hB}^{-1}\frac{1}{m}\hB^\top \X_i^\top\Z_i\\
    &= \hB^{\top} \hB^*\w^*_i+ \pr{\frac{1}{m}\hB^\top \X_i^\top \X_i \hB}^{-1}\pr{\frac{1}{m}\hB^\top \X_i^\top \X_i \hB^*-\frac{1}{m}\hB^\top \X_i^\top \X_i \hB \hB^\top \hB^*}\w^*_i \nonumber \\
    &\quad + \pr{\frac{1}{m}\hB^\top \X_i^\top \X_i \hB}^{-1}\frac{1}{m}\hB^\top \X_i^\top \Z_i\\
    &= \hB^{\top} \hB^*\w^*_i +\F_i + \G_i
\end{align}
where we define 
\begin{align}
    \F_i &:= \pr{\frac{1}{m}\hB^\top \X_i^\top \X_i \hB}^{-1}\pr{\frac{1}{m}\hB^\top \X_i^\top \X_i \hB^*-\frac{1}{m}\hB^\top \X_i^\top \X_i \hB \hB^\top \hB^*}\w^*_i, \label{Feq}\\
    \G_i &:= \pr{\frac{1}{m}\hB^\top \X_i^\top \X_i \hB}^{-1}\frac{1}{m}\hB^\top \X_i^\top \Z_i\label{Geq}
\end{align}
\end{proof}
We further have the following immediate corollary.
\begin{corollary}
Let $\W^+, \F$ and $\G$ be the matrices with rows the concatenation of $\w^+_i, \F_i$ and $\G_i$, respectively. Then
\begin{align}
    \W^+ = \W^* \hB^* \hB + \F + \G \label{matrixversion}
\end{align}
\end{corollary}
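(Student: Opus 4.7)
The plan is to obtain the matrix identity by stacking, as rows, the per-client identity already established in the preceding lemma. By construction, $\W^+$, $\F$, $\G$, and $\W^*$ are the matrices whose rows are the (transposed) vectors $\w_i^+$, $\F_i$, $\G_i$, and $\w_i^*$ respectively. Thus the claim is a pure matrix restatement of the lemma and no further analytic content is required.

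Concretely, I would transpose the lemma's identity $\w_i^+ = \hB^{\top}\hB^{*}\w_i^{*} + \F_i + \G_i$ to obtain
\[
(\w_i^+)^{\top} = (\w_i^{*})^{\top}(\hB^{*})^{\top}\hB + \F_i^{\top} + \G_i^{\top},
\]
and then collect these $n$ row identities for $i = 1,\dots,n$ into the single matrix identity
\[
\W^{+} = \W^{*}(\hB^{*})^{\top}\hB + \F + \G,
\]
which is exactly the displayed equation~(\ref{matrixversion}); the product $\hB^{*}\hB$ appearing in the statement is to be read as $(\hB^{*})^{\top}\hB$, as this is the only conformable product of those two factors given that both $\hB$ and $\hB^{*}$ have shape $d\times k$.

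There is no substantive obstacle here: the closed-form head solution derived from~(\ref{w_update}), the high-probability invertibility of $\tfrac{1}{m}\hB^{\top}\X_i^{\top}\X_i\hB$, and the algebraic splitting of $\w_i^+$ into signal, bias ($\F_i$), and noise ($\G_i$) components were all already handled inside the preceding lemma. The role of this corollary is purely organizational: the rearranged identity $\W^{+} - \W^{*}(\hB^{*})^{\top}\hB = \F + \G$ is the natural matrix-level starting point for controlling the operator norms $\|\F\|_2$ and $\|\G\|_2$, which in turn drive the principal-angle contraction promised by Theorem~\ref{convergence} when propagated through the representation gradient step~(\ref{B_update}) and the subsequent QR orthogonalization~(\ref{Bnorm_update}).
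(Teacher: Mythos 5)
Your proof is correct and matches the paper, which states this as an immediate corollary of the preceding lemma with no further argument: transposing the per-client identity $\w_i^+ = \hB^{\top}\hB^{*}\w_i^{*} + \F_i + \G_i$ and stacking the resulting rows is exactly what is intended. You are also right that the product in the displayed equation must be read as $\hB^{*\top}\hB$ (the paper's notation elides the transpose), since that is the only conformable interpretation consistent with the lemma.
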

Our first goal is to control the norm of $\w_i^+$. In order to achieve that we provide lemmas that bound the norms of $\F_i$ and $\G_i$ extending the analysis in \cite{collins2021exploiting} and \cite{10.1145/2488608.2488693}.
\begin{lemma} \label{LemmaInvert}
Let $\delta = c\frac{k^{3/2}\sqrt{\log(N)}}{\sqrt{m}}$ for some absolute constant $c$, then with probability at least $1-\exp{\pr{-115k^3\log(N)}}$
\begin{align}
    \forall i\in [n], \quad \sigmin\pr{\frac{1}{m}\hB^\top \X_i^\top\X_i\hB} \geq 1-\delta
\end{align}
 It follows that with the same probability
\begin{align}
    \forall i\in [n], \quad \sigmax\pr{\frac{1}{m}\hB^\top \X_i^\top\X_i\hB}^{-1} \leq \frac{1}{1-\delta} \label{sigmax}
\end{align}

\end{lemma}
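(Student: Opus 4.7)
The plan is to reduce the claim to a standard concentration inequality for the sample covariance matrix of isotropic sub-Gaussian vectors, and then tune the deviation parameter so that the failure probability is small enough to absorb a union bound over the $n \le N$ clients.

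First I would introduce $\widetilde{\X}_i \coloneqq \X_i \hB \in \mathbb{R}^{m\times k}$ and observe that its rows $\hB^\top \x_i^j$ are i.i.d., mean zero, and have covariance $\hB^\top \bI_d \hB = \bI_k$ since $\hB$ is orthonormal. Moreover, using the definition of an $\bI_d$-sub-Gaussian vector in \cref{eqn_A_subggausian} together with $\|\hB\|_2 = 1$, each row of $\widetilde{\X}_i$ is itself sub-Gaussian with a constant sub-Gaussian norm. Writing $\frac{1}{m}\hB^\top \X_i^\top \X_i \hB = \frac{1}{m}\widetilde{\X}_i^\top \widetilde{\X}_i$, the problem becomes: control $\|\frac{1}{m}\widetilde{\X}_i^\top\widetilde{\X}_i - \bI_k\|_2$, after which Weyl's inequality immediately gives $\sigma_{\min}(\frac{1}{m}\widetilde{\X}_i^\top\widetilde{\X}_i) \geq 1-\delta$ and hence $\sigma_{\max}((\frac{1}{m}\widetilde{\X}_i^\top\widetilde{\X}_i)^{-1}) \leq 1/(1-\delta)$.

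The main technical step is the concentration bound. I would carry out the standard $\epsilon$-net argument: take a $1/4$-net $\mathcal{N}$ of the unit sphere in $\mathbb{R}^k$ with $|\mathcal{N}| \leq 9^k$, so that $\|\frac{1}{m}\widetilde{\X}_i^\top\widetilde{\X}_i - \bI_k\|_2 \leq 2\sup_{\bv\in\mathcal{N}} |\frac{1}{m}\sum_{j=1}^m \langle \bv,\hB^\top\x_i^j\rangle^2 - 1|$. For any fixed $\bv\in\mathcal{N}$, $\langle \bv,\hB^\top\x_i^j\rangle$ is sub-Gaussian with a constant norm, so $\langle \bv,\hB^\top\x_i^j\rangle^2 - 1$ is sub-exponential with mean zero, and Bernstein's inequality gives
\begin{equation*}
\prb{\Big|\tfrac{1}{m}\sum_{j=1}^m \langle \bv,\hB^\top\x_i^j\rangle^2 - 1\Big| > t} \leq 2\exp(-c\, m t^2)
\end{equation*}
for $t \lesssim 1$. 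A union bound over $\mathcal{N}$ then yields
\begin{equation*}
\prb{\Big\|\tfrac{1}{m}\widetilde{\X}_i^\top\widetilde{\X}_i - \bI_k\Big\|_2 > \delta} \leq 2\cdot 9^k \exp(-c' m\delta^2).
\end{equation*}

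Plugging $\delta = c k^{3/2}\sqrt{\log(N)}/\sqrt{m}$ into this bound, the exponent becomes $c'' c^2 k^3 \log(N)$, which for a sufficiently large absolute constant $c$ dominates the $k\log 9$ coming from the net and leaves a per-client failure probability at most $\exp(-116\, k^3 \log(N))$. A final union bound over the $n \leq N$ participating clients costs an extra factor $N = \exp(\log N)$, which is absorbed into the exponent to yield the stated $1 - \exp(-115\, k^3 \log(N))$ probability. The maximum-singular-value statement \eqref{sigmax} is then immediate from $\sigma_{\max}(A^{-1}) = 1/\sigma_{\min}(A)$ applied to the positive definite matrix $\frac{1}{m}\hB^\top\X_i^\top\X_i\hB$.

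The step I expect to require the most care is the constant bookkeeping: ensuring that the sub-Gaussian norm incurred from the definition in \cref{eqn_A_subggausian} is genuinely $O(1)$ under $\|\hB\|_2=1$, and that the absolute constant $c$ in the statement of $\delta$ is chosen large enough so that the $k^3 \log(N)$ term in the Bernstein exponent dominates both the $k\log 9$ from the net and the $\log N$ from the union bound, leaving the clean $115$ in the exponent. Once the constants line up, everything else reduces to routine concentration arguments.
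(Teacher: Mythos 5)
Your proposal is correct and takes essentially the same route as the paper: both reduce the claim to concentration of the sample covariance $\frac{1}{m}\hB^\top\X_i^\top\X_i\hB$ of the isotropic sub-Gaussian vectors $\hB^\top\x_i^j\in\mathbb{R}^k$, tune the deviation so the exponent is of order $k^3\log(N)$, and finish with a union bound over the $n\le N$ clients and the identity $\sigma_{\max}(A^{-1})=1/\sigma_{\min}(A)$. The only difference is that the paper invokes Theorem 4.6.1 of Vershynin as a black box for the smallest-singular-value bound, whereas you unfold its proof via the $1/4$-net and Bernstein argument, which is a faithful (two-sided) re-derivation of the same result.
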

\begin{proof}
First notice that we can rewrite
\begin{align}
    \frac{1}{m}\hB^\top\X_i^\top\X_i\hB = \sum_{j=1}^{m}  \frac{1}{\sqrt{m}}\hB^\top\x_i^j\pr{\frac{1}{\sqrt{m}}\hB^\top\x_i^j}^\top
\end{align}
For all $i\in [n], j\in [m]$ we define $\bv_i^j:= \frac{1}{\sqrt{m}}\hB^\top\x_i^j$ such that each $\bv_i^j$ is an i.i.d. {$\|\frac{1}{\sqrt{m}}\hB\|_2$}-subgaussian random variable (please see the definition of $\|\mathbf{A}\|_2$-subgaussian in Definition \ref{eqn_A_subggausian}) and thus by equation $(4.22)$(Theorem $4.6.1$) in \cite{vershynin_2018} we obtain the following bound for any $m\geq k, l\geq 0$
\begin{align}
    \sigmin\pr{\frac{1}{m}\hB^\top \X_i^\top\X_i\hB} \geq 1-c_1\pr{\sqrt{\frac{k}{m}}+\frac{l}{\sqrt{m}}}, 
\end{align}
with probability at least $1-\exp{\pr{-l^2}}$ and $c_1$ some absolute constant. We set $l=12k^{3/2}\log(N)\ \sqrt{k}$ and $\delta_1 = \frac{12c_1k^{3/2}\sqrt{\log(N)}}{\sqrt{m}}$ and the above bound becomes
\begin{align}
    \sigmin\pr{\frac{1}{m}\hB^\top \X_i^\top\X_i\hB} \geq 1-\delta_1,
\end{align}
with probability at least $1-\exp{\pr{-k\pr{12k\sqrt{\log(N)}-1}^2}}$

Further notice that 
\begin{align}
    \exp{\pr{-k\pr{12k\sqrt{\log(N)}-1}^2}}&= \exp{\pr{k\pr{-144k^2\log(N)+24k\log(N)-1}}} \\
    &\leq \exp{\pr{-120k^3\log(N)}}
\end{align}
Thus taking Union Bound over $i\in [n]$ we have that for all $i \in [n]$ 
\begin{align}
    \sigmin\pr{\frac{1}{m}\hB^\top \X_i^\top\X_i\hB} \geq 1-\delta_1
\end{align}
with probability at least
\begin{align}
    1- n\exp{\pr{-120k^3\log(N)}}\geq 1- \exp{\pr{-115k^3\log(N)}}
\end{align}
Choosing $c$ sufficiently large derives the statement of the lemma.
\end{proof}

\begin{lemma}\label{Hlemma}
Let $\bH_i := \pr{\frac{1}{\sqrt{m}}\hB^\top\X_i^\top} \frac{1}{\sqrt{m}}\X_i\pr{\hB \hB^\top - \bI_d}\hB^*$ and $\delta := c \frac{k^{3/2}\sqrt{\log(N)}}{\sqrt{m}}$, for an absolute constant $c$. Then with probability at least $1-\exp\pr{-115k^2\log(N)}$ we have
\begin{align}
    \sum_{i=1}^n\norm{\bH_i\w^*_i}^2_2\leq \delta^2 \norm{\W^*}^2_2 \dist^2\pr{\hB, \hB^*}\label{Hsum}
\end{align}
\end{lemma}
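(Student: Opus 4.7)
The plan is to exploit the algebraic identity $\hB^\top (\hB\hB^\top - \bI_d)\hB^* = 0$, which follows from $\hB^\top \hB = \bI_k$. Writing $\hB\hB^\top - \bI_d = -\hB_\perp \hB_\perp^\top$ and setting $\y_i := \hB_\perp^\top \hB^* \w^*_i$, this yields the clean representation
\begin{align*}
\bH_i \w^*_i = -\tfrac{1}{m}\,\hB^\top \X_i^\top \X_i \hB_\perp \y_i = -\tfrac{1}{m}\,\mathbf{A}_i \mathbf{b}_i,
\end{align*}
where $\mathbf{A}_i := \hB^\top \X_i^\top \in \mathbb{R}^{k \times m}$ and $\mathbf{b}_i := \X_i \hB_\perp \y_i \in \mathbb{R}^m$. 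The crucial point is that $\hB^\top \hB_\perp = 0$, so under Assumption~\ref{Assum1} the columns of $\mathbf{A}_i$ and the entries of $\mathbf{b}_i$ are uncorrelated sub-gaussians with variance proxies $\bI_k$ and $\normSq{\y_i}$, respectively. Since the samples $\X_i$ are drawn fresh at the current round, $\hB$ may be treated as deterministic throughout the argument.

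Next, I would obtain a per-client concentration bound on $\norm{\mathbf{A}_i \mathbf{b}_i}$ via a Hanson-Wright argument. For any fixed unit vector $u \in \mathbb{R}^k$, the scalar $u^\top \mathbf{A}_i \mathbf{b}_i = \sum_{j=1}^m \x_{i,j}^\top \mathbf{M}\, \x_{i,j}$ is a sum of sub-gaussian quadratic forms with $\mathbf{M} := (\hB u)(\hB_\perp \y_i)^\top$ satisfying $\mathrm{tr}(\mathbf{M}) = 0$ (by $\hB_\perp^\top \hB = 0$) and $\normf{\mathbf{M}} = \normt{\mathbf{M}} = \norm{\y_i}$. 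Hanson-Wright then gives $|u^\top \mathbf{A}_i \mathbf{b}_i| \leq C\norm{\y_i}\sqrt{mL}$ with probability at least $1 - 2e^{-cL}$. Combining this with a $\tfrac14$-net over the unit sphere of $\mathbb{R}^k$ (of cardinality at most $9^k$), choosing $L \asymp k^2 \log(N)$, and taking a union bound over the $n \leq N$ clients yields, with probability at least $1 - \exp(-115 k^2 \log(N))$,
\begin{align*}
\norm{\bH_i \w^*_i}^2 = \tfrac{1}{m^2}\normSq{\mathbf{A}_i \mathbf{b}_i} \leq \tfrac{C k^2 \log(N)}{m}\,\normSq{\y_i} \qquad \text{for all } i \in [n].
\end{align*}
Summing over $i$ and writing $\Y := \hB_\perp^\top \hB^* \W^{*\top} \in \mathbb{R}^{(d-k)\times n}$ (whose $i$-th column is $\y_i$) gives $\sumi \normSq{\bH_i \w^*_i} \leq \tfrac{C k^2 \log N}{m}\normf{\Y}^2$. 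Since $\hB_\perp^\top \hB^* \in \mathbb{R}^{(d-k)\times k}$ has rank at most $k$, the submultiplicative inequality $\normf{\mathbf{C}\mathbf{D}} \leq \normf{\mathbf{C}} \normt{\mathbf{D}}$ yields $\normf{\Y}^2 \leq k\,\normt{\hB_\perp^\top \hB^*}^2 \normt{\W^*}^2 = k\,\dist^2(\hB,\hB^*)\normt{\W^*}^2$, and combining with the previous bound produces $\sumi \normSq{\bH_i \w^*_i} \leq \tfrac{C k^3 \log(N)}{m}\,\dist^2(\hB,\hB^*)\normt{\W^*}^2 = \delta^2\,\normt{\W^*}^2\dist^2(\hB,\hB^*)$ for an appropriate absolute constant $c$ in the definition of $\delta$.

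The main obstacle is the concentration step. A naive spectral bound $\normt{\bH_i} \leq \normt{\tfrac{1}{m}\X_i^\top \X_i} \cdot \normt{\hB_\perp^\top \hB^*}$ would pick up a factor of order $\sqrt{d/m}$ from $\normt{\X_i}$, which is too weak by a factor of $\sqrt{d/k}$ in the tail and would break the desired $\delta^2$ scaling. The decisive use of $\hB^\top \hB_\perp = 0$ replaces this $d$ by $k$: after factoring out the orthogonality, the relevant random object $\mathbf{A}_i$ is only $k \times m$, so the epsilon-net in the concentration step has size $9^k$ rather than $9^d$. This dimension reduction is what produces the $k^2 \log N$ exponent in the tail and, together with the $\normf{\cdot} \leq \sqrt{k}\normt{\cdot}$ inequality applied to the low-rank matrix $\hB_\perp^\top \hB^*$, yields the claimed $k^3 \log(N)/m$ scaling for $\delta^2$.
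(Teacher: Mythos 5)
Your proposal is correct and follows essentially the same route as the paper's proof: both hinge on the orthogonality $\hB^{*\top}\pr{\hB\hB^\top-\bI_d}\hB=0$ (your $\hB^\top\hB_\perp=0$) to make the bilinear form zero-mean, then apply a Bernstein/Hanson--Wright bound for products of sub-gaussian linear forms over a $9^{\O(k)}$-net, union bound over clients, and finally convert to the $\delta^2\normt{\W^*}^2\dist^2$ form using the rank-$k$ structure and $\norm{\w_i^*}_2=\sqrt{k}$. The only cosmetic differences are that you net over a single $k$-sphere after contracting with $\y_i$ while the paper nets over $\mathcal{N}_k\times\mathcal{N}_k$ to bound $\normt{\bH_i}$ first, and you route the final aggregation through $\normf{\hB_\perp^\top\hB^*\W^{*\top}}$ rather than $\normf{\W^*}^2\le k\normt{\W^*}^2$; both yield the same $k^3\log(N)/m$ scaling.
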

\begin{proof}
In order to argue about the quantity $\bH_i =\pr{\frac{1}{\sqrt{m}}\hB^\top\X_i^\top} \frac{1}{\sqrt{m}}\X_i\pr{\hB \hB^\top - \bI_d}\hB^*$ we define matrix $\U := \frac{1}{\sqrt{m}}\X_i\pr{\hB \hB^\top - \bI_d}\hB^*$ such that its $j$-th row, $\bu_j = \frac{1}{\sqrt{m}} \hB^{*\top}\pr{\hB \hB^\top - \bI_d}\x_i^j$ is subgaussian with norm at most $ \frac{1}{\sqrt{m}} \hB^{*\top}\pr{\hB \hB^\top - \bI_d}$. Similarly we define $\V = \frac{1}{\sqrt{m}}\hB^\top\X_i^\top$ such that its $j$-th row $\bv_j = \frac{1}{\sqrt{m}}\hB\x_i^j$ has norm at most $\frac{1}{\sqrt{m}}\hB$. We are now ready to use a concentration argument similar to Proposition $(4.4.5)$ in \cite{}. Let $\mathcal{S}^{k-1}$ denote the unit sphere in $k$ dimensions and $\mathcal{N}_k$ the $1/4$-net of cardinality $9^k$. From equation $(4.13)$ in \cite{vershynin_2018} we have 
\begin{align}
    \norm{\pr{\hB^\top\X_i^\top} \X_i\pr{\hB \hB^\top - \bI_d}\hB^*}_2=\norm{\U^{\top} \V}_2 &\leq 2 \max_{\p,\y\in \mathcal{N}_k}  \p^\top \pr{\sum_{j=1}^m\bu_j\bv_j^\top}\y\\
    & = 2 \max_{\p,\y\in \mathcal{N}_k} \sum_{j=1}^m \Inp{\p, \bu_j} \Inp{\bv_j, \y} \label{subg1}
\end{align}
By definition $\Inp{\p, \bu_j}$ and $\Inp{\bv_j, \y}$ are subgaussians with norms $\frac{1}{\sqrt{m}}\norm{\hB^{*\top}\pr{\hB\hB^\top - \bI_d}}_2= \frac{1}{\sqrt{m}}\dist\pr{\hB, \hB^*}$ and $\frac{1}{\sqrt{m}}\hat{\norm{\B}}_2 = \frac{1}{\sqrt{m}}$ respectively and thus for all $j\in [m]$ the product $\Inp{\p, \bu_j} \Inp{\bv_j, \y}$ is subexponential with norm at most $\frac{C'}{m}\dist\pr{\hB, \hB^*}$, for some constant $C'$. Note that 
\begin{align}
    \Ex{\Inp{\p, \bu_j}\Inp{\bv,\y}} = \p^\top \pr{\hB^{*\top}\pr{\bI_d- \hB\hB^\top}\hB}\y = 0 
\end{align}
and thus we can use Bernstein's inequality to bound the sum of $m$ zero mean subexponential random variables, for any fixed pair $\p,\y \in \mathcal{N}_k$:
\begin{align}
    \prb{\sum_{i=1}^m\Inp{\p, \bu_j} \Inp{\bv_j, \y}\geq s} &\leq \exp\pr{-c_2\min\ag{ \frac{s^2m^2}{\dist^2\pr{\hB,\hB^*}},\frac{sm}{\dist\pr{\hB, \hB^*}}}}\\
    &\leq \exp\pr{-c_2m\min\ag{ \frac{s^2}{\dist^2\pr{\hB,\hB^*}},\frac{s}{\dist\pr{\hB, \hB^*}}}}
\end{align}
for constant $c_2$. Thus taking Union Bound over all the point in the net we derive
\begin{align}
     \prb{\forall \p,\y \in \mathcal{N}_k, \quad 2\sum_{j=1}^m\Inp{\p, \bu_j} \Inp{\bv_j, \y}\geq 2s} &\leq 
     9^{2k}\exp\pr{-c_2m\min\ag{ \frac{s^2}{\dist^2\pr{\hB,\hB^*}},\frac{s}{\dist\pr{\hB, \hB^*}}}}
\end{align}
Since $m > Ck^2\log(N)$, by setting $s=\dist\pr{\hB, \hB^*}\sqrt{\frac{Ck^2\log(N)}{4m}}$ and \eqref{subg1} we obtain
\begin{align}
    &\prb{\frac{1}{m} \norm{\pr{\hB^\top\X_i^\top} \X_i\pr{\hB \hB^\top - \bI_d}\hB^*}_2\geq \dist\pr{\hB, \hB^*}\sqrt{\frac{Ck^2\log(N)}{m}} }\nonumber\\ &\hspace{3.5in}\leq 9^{2k}\exp \pr{-  c_2 m\frac{s^2}{\dist^2\pr{\hB, \hB^*}}}\\
    & \hspace{3.5in} \leq 9^{2k}\exp \pr{- C \cdot c_2 mk^2\log(N)}\\
    &\hspace{3.5in} \leq \exp \pr{-120k^2\log(N)} 
\end{align}
for sufficiently large $C$. Using Union Bound again over all participating clients we get
\begin{align}
    \prb{\forall i \in [n] \quad \norm{\bH_i}_2\leq \dist\pr{\hB, \hB^*}\sqrt{\frac{Ck^2\log(N)}{m}} }& \geq 1- n\exp \pr{-120k^2\log(N)} \\
    & \geq 1- \exp\pr{-115k^2\log(N)}\label{Hperclient}
\end{align}
The above also implies
\begin{align}
    \prb{\frac{1}{n} \sum_{i=1}^n\norm{\bH_i}^2_2\leq C \dist^2\pr{\hB, \hB^*}\frac{k^2\log(N)}{m}} & \geq 1- \exp\pr{-115k^2\log(N)}\\
    \prb{\frac{k}{n}\norm{\W^*}^2_2 \sum_{i=1}^n\norm{\bH_i}^2_2\leq C \norm{\W^*}^2_2 \dist^2\pr{\hB, \hB^*}\frac{k^3\log(N)}{m}} & \geq 1-\exp\pr{-115k^2\log(N)}\label{intermed1}
\end{align}
Finally notice that 
\begin{align}
    \sum_{i=1}^n\norm{\bH_i\w^*_i}^2_2 &\leq \sum_{i=1}^n\norm{\bH_i}_2^2k \leq \frac{\norm{\W^*}_F^2}{n}\sum_{i=1}^n \norm{\bH_i}_2^2 \leq \frac{k}{n}\norm{\W^*}_2^2\sum_{i=1}^n \norm{\bH_i}_2^2,
\end{align}
where we used Assumption \eqref{Assum3} and the fact that the rank of $\W^*$ is $k$. Combining this with \eqref{intermed1} and choosing sufficiently large $c$ we derive the result.
\end{proof}
Building on the previous lemmas we can now bound the norm of $\F_i$.
\begin{lemma}
Let $\delta := c \frac{k^{3/2} \sqrt{\log(N)}}{\sqrt{m}}$ for some absolute constant $c$ and for all $i\in [n]$ let $\F_i$ given by \eqref{Feq}. Further let matrix $\F\in\mathbb{R}^{n \times k}$ such that its rows are the concatenation of $\F_i$'s. Then with probability at least $1- \exp \pr{-110k^2\log(N)}$ we have
\begin{align}
    \forall i \in[n] \quad \norm{\F_i}_2 &\leq \frac{\delta}{1- \delta}\dist\pr{\hB, \hB^*}\norm{\w_i^*}_2,\label{Flemma}\\
    \norm{\F}_F &\leq \frac{\delta}{1-\delta} \dist\pr{\hB, \hB^*}\norm{\W^*}_2\label{matrixFlemma}
\end{align}
\end{lemma}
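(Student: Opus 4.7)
The plan is to recognize that the middle factor appearing in the definition of $\F_i$ is precisely $-\bH_i$ as defined in Lemma~\ref{Hlemma}, then combine the inverse bound from Lemma~\ref{LemmaInvert} with the per-client and aggregate bounds on $\bH_i$.

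First I would rewrite \eqref{Feq} by pulling the shared $\frac{1}{m}\hB^\top \X_i^\top \X_i$ out of the difference:
\begin{align}
\F_i
 = \pr{\tfrac{1}{m}\hB^\top \X_i^\top \X_i \hB}^{-1}\cdot\tfrac{1}{m}\hB^\top \X_i^\top \X_i \pr{\bI_d - \hB\hB^\top}\hB^*\w_i^*
 = -\pr{\tfrac{1}{m}\hB^\top \X_i^\top \X_i \hB}^{-1}\bH_i\w_i^*,
\end{align}
so that $\F_i$ factors as the inverse term times $-\bH_i\w_i^*$. By Lemma~\ref{LemmaInvert}, on an event of probability at least $1-\exp(-115k^3\log N)$ the inverse has spectral norm at most $\frac{1}{1-\delta}$ (equation \eqref{sigmax}) for all $i\in[n]$. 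Hence, on this event, $\|\F_i\|_2 \leq \frac{1}{1-\delta}\|\bH_i\w_i^*\|_2 \leq \frac{1}{1-\delta}\|\bH_i\|_2\|\w_i^*\|_2$.

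Next I would invoke the per-client tail bound \eqref{Hperclient} inside the proof of Lemma~\ref{Hlemma}, which after a union bound over $i\in[n]$ gives $\|\bH_i\|_2 \leq \dist(\hB,\hB^*)\sqrt{Ck^2\log(N)/m}$ uniformly in $i$, on an event of probability at least $1-\exp(-115k^2\log N)$. Choosing the absolute constant $c$ in $\delta$ large enough makes $\sqrt{Ck^2\log(N)/m}\leq \delta$ (note $\delta = c k^{3/2}\sqrt{\log N}/\sqrt{m}$ is strictly larger than this threshold for $k\geq 1$), which yields the per-client bound
\begin{align}
\|\F_i\|_2 \leq \frac{\delta}{1-\delta}\,\dist(\hB,\hB^*)\,\|\w_i^*\|_2,
\end{align}
establishing \eqref{Flemma}.

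For the Frobenius bound I would sum the squared row norms, apply the spectral bound on the inverse row-by-row, and then plug in the aggregate inequality \eqref{Hsum} from Lemma~\ref{Hlemma}:
\begin{align}
\|\F\|_F^2 = \sum_{i=1}^n\|\F_i\|_2^2 \leq \frac{1}{(1-\delta)^2}\sum_{i=1}^n\|\bH_i\w_i^*\|_2^2 \leq \frac{\delta^2}{(1-\delta)^2}\,\|\W^*\|_2^2\,\dist^2(\hB,\hB^*),
\end{align}
which after taking square roots yields \eqref{matrixFlemma}. Finally, a union bound over the invertibility event and the $\bH_i$ event gives overall probability at least $1-\exp(-110k^2\log N)$, as claimed. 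No step here is genuinely hard since the algebraic identity $\hB^* - \hB\hB^\top\hB^* = (\bI_d-\hB\hB^\top)\hB^*$ decouples the problem cleanly; the only mild subtlety is ensuring the constants in $\delta$ are consistent with the constant $C$ appearing in Lemma~\ref{Hlemma}'s proof, which is handled by choosing $c$ sufficiently large, and absorbing the two failure probabilities into the stated bound.
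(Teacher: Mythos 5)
Your proposal is correct and follows essentially the same route as the paper's proof: both factor $\F_i$ as the inverse Gram term times $-\bH_i\w_i^*$, bound the inverse via Lemma~\ref{LemmaInvert}, use the per-client bound \eqref{Hperclient} for the spectral-norm claim and the aggregate bound \eqref{Hsum} for the Frobenius claim, and finish with a union bound over the two events. Your explicit identification of the middle factor with $-\bH_i$ and the check that the constant $c$ dominates $\sqrt{C}$ are details the paper leaves implicit, but the argument is the same.
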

\begin{proof}
\begin{align}
\norm{\F_i}^2_2 &\leq \norm{\pr{\frac{1}{m}\hB^\top \X_i^\top \X_i \hB}^{-1}}_2^2 \norm{\bH_i}_2^2 \norm{ \w_i^*}_2^2\\
&\leq \frac{\delta^2}{\pr{1-\delta}^2}  \cdot \dist^2\pr{\hB, \hB^*} \norm{\w_i^*}^2_2
\end{align}
which holds for all $i \in [n]$ with probability at least $1-\exp \pr{-110k^2\log(N)}$ by using Union Bound on the failure probability of \eqref{sigmax} and \eqref{Hperclient}. 
Similarly, we have
\begin{align}
    \norm{\F}_F^2 = \sum_{i=1}^n \norm{\F_i}^2_2 &\leq \sum_{i=1}^m \norm{\pr{\frac{1}{m}\hB^\top \X_i^\top \X_i \hB}^{-1}}_2^2 \norm{\bH_i \w_i^*}_2^2\\
    &\leq \frac{1}{\pr{1-\delta}^2} \sum_{i=1}^m\norm{\bH_i \w_i^*}_2^2\\
    &\leq \frac{\delta^2}{\pr{1-\delta}^2}  \cdot \dist^2\pr{\hB, \hB^*} \norm{\W^*}^2_2
\end{align}
which holds with probability at least $1-\exp \pr{-110k^2\log(N)}$ taking Union Bound on the failure probability on  \eqref{sigmax} and \eqref{Hsum}. 
\end{proof}
We now turn our attention on deriving a bound for $\norm{\G_i}_2$. 
\begin{lemma}
Let $\delta := c \frac{k^{3/2} \sqrt{\log(N)}}{\sqrt{m}}$ for some absolute constant $c$ and for all $i\in [n]$ let $\G_i$ given by \eqref{Geq}. Further let matrix $\G\in\mathbb{R}^{n \times k}$ such that its rows are the concatenation of $\G_i$'s. Then with probability at least $1- \exp \pr{-110k^2\log(N)}$ we have
\begin{align}
    \forall i \in[n] \quad \norm{\G_i}_2 &\leq \frac{\delta}{1- \delta}\var^2,\label{Glemma}\\
    \norm{\G}_F &\leq \frac{\delta}{1-\delta} \sqrt{n}\var^2\label{matrixGlemma}
\end{align}
\end{lemma}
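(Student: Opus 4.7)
The plan is to mirror the proof of the preceding lemma on $\F_i$: decompose $\G_i$ into the inverse factor times a noise-projected-samples factor, bound each separately, and combine via a union bound over clients. Concretely, by submultiplicativity of the operator norm,
\[
\norm{\G_i}_2 \;\leq\; \norm{\pr{\tfrac{1}{m}\hB^\top \X_i^\top \X_i \hB}^{-1}}_2 \cdot \norm{\tfrac{1}{m}\hB^\top \X_i^\top \Z_i}_2,
\]
and the first factor is already bounded by $\tfrac{1}{1-\delta}$ with probability at least $1-\exp(-115k^3\log N)$ by \cref{LemmaInvert}. All the new work is the tail bound on the second factor.

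Next, I would bound $\bu_i := \tfrac{1}{m}\hB^\top \X_i^\top \Z_i = \tfrac{1}{m}\sum_{j=1}^m z_{i,j}\hB^\top \x_i^j$ via an $\epsilon$-net argument, exactly as in \cref{Hlemma}. Fix $\p\in\mathcal{S}^{k-1}$; since $\hB$ is orthonormal, $(\hB\p)^\top\x_i^j$ is subgaussian with parameter $\norm{\hB\p}_2=1$, and it is independent of $z_{i,j}$ which is subgaussian with parameter $\var$. Therefore each term $z_{i,j}(\hB\p)^\top\x_i^j$ is zero-mean and sub-exponential with norm at most $C\var$. Applying Bernstein's inequality yields
\[
\prb{\abs{\Inp{\p,\bu_i}}\geq t} \;\leq\; 2\exp\pr{-c_3\, m\min\ag{t^2/\var^2,\; t/\var}}.
\]
Taking a $1/4$-net $\mathcal{N}_k$ of $\mathcal{S}^{k-1}$ of cardinality at most $9^k$ and invoking equation (4.13) in \cite{vershynin_2018} gives $\norm{\bu_i}_2\leq 2\max_{\p\in\mathcal{N}_k}\abs{\Inp{\p,\bu_i}}$. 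Union-bounding over $\mathcal{N}_k$ and choosing $t$ proportional to $\var\, k^{3/2}\sqrt{\log N}/\sqrt m$ so as to match the definition of $\delta$, the lower branch of Bernstein dominates (since $m$ is large enough by hypothesis), and I obtain $\norm{\bu_i}_2 \leq \delta\,\var$ with probability at least $1-\exp(-120k^2\log N)$ per client.

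For the per-client statement, I would then union bound over $i\in[n]$ (which costs a factor $n\leq N$ absorbed into the exponent) and combine with the inverse-norm bound from \cref{LemmaInvert}, so that the total failure probability is at most $\exp(-110k^2\log N)$, giving the first claim. For the Frobenius bound, since all $n$ per-client bounds hold on the same event,
\[
\norm{\G}_F^2 \;=\; \sum_{i=1}^n\norm{\G_i}_2^2 \;\leq\; n\pr{\tfrac{\delta}{1-\delta}}^2\var^{2},
\]
from which $\norm{\G}_F\leq \tfrac{\delta}{1-\delta}\sqrt{n}\,\var$ follows immediately on the same event.

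The only real technical obstacle is the concentration step for $\norm{\bu_i}_2$: because $z_{i,j}(\hB\p)^\top\x_i^j$ is a product of two independent subgaussians it is only sub-exponential, so one has to handle both Bernstein regimes and verify that the prescribed sample size $m$ places us in the Gaussian tail regime. Everything else (the covering argument, the Frobenius aggregation, and the two union bounds) is mechanical and structurally identical to the bound on $\F_i$ already established in the paper.
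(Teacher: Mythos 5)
Your proposal is correct and follows essentially the same route as the paper's proof: the same factorization of $\G_i$ into the inverse Gram factor (controlled by \cref{LemmaInvert}) times the projected noise term, the same $1/4$-net plus Bernstein argument for the latter, and the same union bounds over the net and over clients before aggregating into the Frobenius bound. The only substantive difference is that you (correctly) identify the sub-exponential norm of $z_i^j\langle\hB\p,\x_i^j\rangle$ as $O(\var)$ rather than the paper's $O(\var^2)$, so your final bounds carry $\var$ where the lemma states $\var^2$ — this traces back to what appears to be a notational slip in the paper's own derivation, and your version is the sharper and dimensionally consistent one, though strictly speaking it only implies the stated inequality when $\var\geq 1$.
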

\begin{proof}
Notice that we can write 
\begin{align}
    \G_i = \pr{\frac{1}{m}\hB^\top \X_i^\top \X_i \hB}^{-1} \frac{1}{m}\hB^\top \X_i^\top \Z_i = \pr{\frac{1}{m}\hB^\top \X_i^\top \X_i \hB}^{-1} \frac{1}{m} \sum_{i=1}^m z_{i}^j \hB^\top \x_i^j,
\end{align}
and since $z_i^j \sim \mathcal{N}\pr{0,\var^2}$ we can conclude that for all $i,j,\hspace{0.15cm} z_{i}^j \hB^\top \x_i^j$ is an i.i.d. zero mean subexponential with norm at most $C'_2 \var^2 \hat{\norm{B}}_2= C'_2 \var^2$ for some constant $C_2$. 
Once again we denote by $\mathcal{S}^{k-1}$ the unit sphere in $k$ dimensions and by $\mathcal{N}_k$ the $1/4$-net with cardinality $9^k$. Using Bernstein's inequality and Union Bound over all the points on the net we follow the derivations from Lemma \ref{Hlemma} to get
\begin{align}
    \prb{\norm{\frac{1}{m} \sum_{i=1}^mz_{i}^j \hB^\top \x_i^j}_2\geq 2s} \leq 9^{k+1} \exp\pr{-c_3m\min \ag{\frac{s^2}{\var^4}, \frac{s}{\var^2}}}
\end{align}
Since $m>C_2 k^2 \log(N)$, by setting $s=\var^2\sqrt{\frac{C_2k^2\log(N)}{4m}}$ we derive
\begin{align}
    \prb{\norm{\frac{1}{m} \sum_{i=1}^mz_{i}^j \hB^\top \x_i^j}_2\geq \var^2\sqrt{\frac{C_2k^2\log(N)}{m}}} &\leq 9^{k+1} \exp\pr{-c_3m \frac{s^2}{\var^4}}\\
    &\leq 9^{k+1} \exp\pr{-C_2 \cdot c_3 k^2 \log(N)}\\
    &\leq \exp\pr{-115 k^2 \log(N)}
\end{align}
for sufficiently large $C_2$. Choosing $c$ large enough and taking Union Bound over all $i \in [n]$ we can obtain
\begin{align}
    \prb{\forall i \in [n] \quad \norm{\frac{1}{m} \sum_{i=1}^mz_{i}^j \hB^\top \x_i^j}_2\leq \var^2\delta} &\leq 1 - n\exp\pr{-115 k^2 \log(N)} \\
    &\leq 1- \exp\pr{-113 k^2 \log(N)} \\\label{noiseineq}
\end{align}
Finally taking Union Bound over the failure probabilities of \eqref{sigmax}  and \eqref{noiseineq} we get
\begin{align}
    \forall i \in [n] \quad \norm{\G_i}_2 \leq \norm{\pr{\frac{1}{m}\hB^\top \X_i^\top \X_i \hB}^{-1}}_2 \norm{\frac{1}{m} \sum_{i=1}^m z_{i}^j \hB^\top \x_i^j}_2 \leq \frac{\delta}{1-\delta} \var^2
\end{align}
with probability at least $1 - n\exp\pr{-110 k^2 \log(N)}$.
It follows that with the same probability
\begin{align}
    \norm{\G}_F^2 = \sum_{i=1}^n \norm{\G_i}^2_2 \leq n \pr{\frac{\delta}{1-\delta}}^2\var^4
\end{align}
\end{proof}
For all $i \in [n]$ we define $\q_i := \hB\w^+_i - \hB^*\w_i^*$. The following lemma provides upper bounds on the norms of $\w_i^+$ and $\q_i$
\begin{lemma} \label{wqlemma}
Let $\delta := c \frac{k^{3/2} \sqrt{\log(N)}}{\sqrt{m}}$ for some absolute constant $c$ and $\hd=\delta/(1-\delta)$. Then with probability at least $1-\exp\pr{-105k^2\log(N)}$ we have
\begin{align}
    \forall i \in [n] \quad \normt{\w_i^+} \leq 2\sqrt{k} +\var^2\hd
\end{align}
Further with probability at least $1-\exp\pr{-105k^2\log(N)}$ we have
\begin{align}
    \forall i \in [n] \quad \normt{\q_i} \leq 2\sqrt{k}\cdot \dist\pr{\hB, \hB^*} +\var^2\hd
\end{align}
\end{lemma}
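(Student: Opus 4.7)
The plan is to leverage the decomposition \eqref{w+lemma}, namely $\w_i^+ = \hB^\top \hB^* \w_i^* + \F_i + \G_i$, together with the per-client norm bounds \eqref{Flemma} and \eqref{Glemma} previously established for $\F_i$ and $\G_i$. Both inequalities then reduce to a triangle-inequality argument, with a final union bound to aggregate the failure probabilities.

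For the bound on $\normt{\w_i^+}$, I would first write
\begin{equation*}
\normt{\w_i^+} \leq \normt{\hB^\top \hB^* \w_i^*} + \normt{\F_i} + \normt{\G_i}.
\end{equation*}
Since $\hB$ and $\hB^*$ have orthonormal columns, $\normt{\hB^\top \hB^*} \leq 1$, so $\normt{\hB^\top \hB^* \w_i^*} \leq \normt{\w_i^*} = \sqrt{k}$ by Assumption~\ref{Assum3}. Combining with $\normt{\F_i} \leq \hd \cdot \dist(\hB,\hB^*) \normt{\w_i^*} \leq \hd \sqrt{k}$ (using $\dist(\hB,\hB^*) \leq 1$) and $\normt{\G_i} \leq \hd \var^2$ yields $\normt{\w_i^+} \leq (1+\hd)\sqrt{k} + \hd \var^2$. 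Finally, the hypotheses on $m$ force $\delta$ (and hence $\hd$) to be small enough that $1 + \hd \leq 2$, giving the claimed $2\sqrt{k} + \hd \var^2$.

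For $\q_i = \hB \w_i^+ - \hB^*\w_i^*$, I would substitute the same decomposition to obtain
\begin{equation*}
\q_i = \hB\hB^\top \hB^* \w_i^* - \hB^* \w_i^* + \hB \F_i + \hB \G_i = (\hB\hB^\top - \bI_d)\hB^* \w_i^* + \hB \F_i + \hB \G_i.
\end{equation*}
The key observation is the standard identity $\normt{(\bI_d - \hB\hB^\top)\hB^*} = \dist(\hB,\hB^*)$ for orthonormal $\hB,\hB^*$, which follows directly from the definition of principal angle distance. Applying the triangle inequality and using $\normt{\hB} = 1$ along with the bounds on $\normt{\F_i}$, $\normt{\G_i}$ produces $\normt{\q_i} \leq \dist(\hB,\hB^*)\sqrt{k} + \hd \dist(\hB,\hB^*)\sqrt{k} + \hd \var^2 \leq 2\sqrt{k}\cdot \dist(\hB,\hB^*) + \hd \var^2$, again absorbing $1+\hd \leq 2$.

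The only real bookkeeping step is the probability budget. Both \eqref{Flemma} and \eqref{Glemma} hold with probability at least $1 - \exp(-110 k^2 \log N)$ each, and I would take a union bound over these two events to secure the stated $1 - \exp(-105 k^2 \log N)$ guarantee (the loss of constants in the exponent absorbing the factor of two and any $n \leq N$ indexing already baked into those lemmas). No new concentration inequalities or delicate estimates are required; the entire argument is a direct corollary of the earlier work, so I do not anticipate any substantive obstacle beyond carefully tracking the constants.
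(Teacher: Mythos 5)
Your proposal matches the paper's proof essentially line for line: both use the decomposition \eqref{w+lemma}, the triangle inequality with $\normt{\hB^\top\hB^*}\le 1$ and $\normt{(\hB\hB^\top-\bI_d)\hB^*}=\dist(\hB,\hB^*)$, the bounds \eqref{Flemma} and \eqref{Glemma}, the absorption $1+\hd\le 2$, and a union bound to account for the failure probabilities. No gaps; this is the paper's argument.
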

\begin{proof}
\begin{align}
\normt{\w_i^+} & \leq   \normt{\hB^{\top}}\normt{ \hB^*}\normt{\w^*_i} +\normt{\F_i} + \normt{\G_i}\\
& \leq   \normt{\w_i^*} +\hd \normt{\w_i^*} \cdot \dist\pr{\hB, \hB^*} + \hd \var^2\\
& \leq 2\sqrt{k} + \hd \var^2
\end{align}
where the first inequality comes from \eqref{w+lemma} and the third from Assumption \ref{Assum3}. For the second inequality we take Union Bound over the failure probability of \eqref{Flemma} and \eqref{Glemma} and thus the above result holds with probability at least $1-\exp\pr{-107k^2\log(N)}$. Taking Union Bound for all $i\in [n]$ we get that with probability at least $1-\exp\pr{-105k^2\log(N)}$
\begin{align}
    \forall i \in [n] \quad \normt{\w_i^+} \leq 2\sqrt{k} +\var^2\hd
\end{align}
and the first result of the lemma follows. For the second part we have
\begin{align}
    \normt{\q_i} = \normt{\hB\w^+_i - \hB^*\w_i^*} &\leq \normt{\hB\hB^\top \hB^* \w_i^* + \hB \F_i+ \hB \G_i - \hB^* \w_i^*}\\
    &\leq \normt{\pr{\hB\hB^\top - \bI_d}\hB^*\w_i^*} + \normt{\hB \F_i}+ \normt{\hB \G_i}\\
    &  \leq  \normt{\hB_\bot \hB^*}\normt{\w^*_i} +\normt{\F_i} + \normt{\G_i}\\
    & \leq \dist\pr{\hB, \hB^*} \normt{\w_i^*} + \dist\pr{\hB, \hB^*}\hd \normt{\w_i^*} +\var^2 \hd\\
    & \leq \dist\pr{\hB, \hB^*} 2\sqrt{k} + \var^2 \hd
\end{align}
where the first inequality comes from \eqref{w+lemma}. For the forth inequality we take Union Bound over the failure probability of \eqref{Flemma} and \eqref{Glemma} and thus the above result holds with probability at least $1-\exp\pr{-107k^2\log(N)}$. Taking Union Bound for all $i\in [n]$ we get that with probability at least $1-\exp\pr{-105k^2\log(N)}$
\begin{align}
    \forall i \in [n] \quad \normt{\q_i} \leq 2\sqrt{k} \cdot \dist\pr{\hB, \hB^*} + \var^2 \hd
\end{align}
\end{proof}

\begin{lemma}
Let $\delta := c \frac{k^{3/2} \sqrt{\log(N)}}{\sqrt{m}}$ for some absolute constant $c$ and $\hd=\delta/(1-\delta)$. Then with probability at least $1-\exp\pr{-105d}-\exp\pr{-105k^2\log(N)}$ we have
\begin{align}
   \normt{\frac{1}{mn} \sum_{i=1}^n \X_i^\top \Z_i\w^{+ \top}_i} \leq c\cdot \frac{\var^2\pr{\sqrt{k}+\hd \var^2}\sqrt{d}}{\sqrt{mn}} \label{noiseterm}
\end{align}
\end{lemma}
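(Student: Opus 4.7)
The central difficulty is that $\w_i^+$ depends on both $\X_i$ and $\Z_i$, so we cannot treat the summands $\X_i^\top \Z_i \w_i^{+\top}$ as simple products of independent factors and appeal directly to matrix concentration. My plan is to remove this entanglement by substituting the decomposition $\w_i^+ = \hB^\top \hB^*\w_i^* + \F_i + \G_i$ established in \eqref{w+lemma}, which isolates a clean ``leading'' piece whose randomness is carried only by $\X_i, \Z_i$ and whose coefficient $\w_i^*$ is deterministic, plus two ``small'' remainders that are already controlled by the $\F$ and $\G$ lemmas.

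\textbf{Step 1 (Decomposition).} Using \eqref{w+lemma}, write
\begin{equation*}
    \sum_{i=1}^n \X_i^\top \Z_i \w_i^{+\top} \;=\; \underbrace{\Big(\sum_{i=1}^n \X_i^\top \Z_i \w_i^{*\top}\Big)\hB^{*\top}\hB}_{T_1} \;+\; \underbrace{\sum_{i=1}^n \X_i^\top \Z_i \F_i^\top}_{T_2} \;+\; \underbrace{\sum_{i=1}^n \X_i^\top \Z_i \G_i^\top}_{T_3}.
\end{equation*}
Since $\|\hB^{*\top}\hB\|_2\le 1$, it suffices to bound $\|T_1\|_2,\|T_2\|_2,\|T_3\|_2$ separately and divide by $mn$.

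\textbf{Step 2 (Leading term $T_1$).} Take $1/4$-nets $\mathcal{N}_d, \mathcal{N}_k$ of the unit spheres in $\mathbb{R}^d$ and $\mathbb{R}^k$, of cardinality at most $9^d$ and $9^k$. For any fixed $\bu\in\mathcal{N}_d,\bv\in\mathcal{N}_k$,
\begin{equation*}
    \bu^\top\Big(\sum_{i=1}^n \X_i^\top \Z_i \w_i^{*\top}\Big)\bv \;=\; \sum_{i=1}^n \sum_{j=1}^m (\bu^\top \x_i^j)\, z_i^j\, (\w_i^{*\top}\bv).
\end{equation*}
These summands are independent, mean zero, subexponential with norm $O(\var\,|\w_i^{*\top}\bv|) = O(\var\sqrt{k})$ (by Assumption~\ref{Assum3}), and the total variance is $\var^2 m\sum_i (\w_i^{*\top}\bv)^2 \le \var^2 m n k$. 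Bernstein's inequality together with a union bound over the $9^{d+k}$ net points (choosing the deviation threshold large enough to absorb the cardinality and leave a residual exponent $\gtrsim d$) yields $\|T_1\|_2 \le C\var\sqrt{k}\sqrt{mnd}$ with probability at least $1-\exp(-105d)$, using the standard net-to-sphere conversion (doubling factor $2$).

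\textbf{Step 3 (Remainder terms $T_2, T_3$).} The remainders are handled using the already-established row bounds. A net argument on the unit sphere in $\mathbb{R}^d$ (exactly as in the proof of \eqref{noiseineq}, extended to the whole vector $\X_i^\top \Z_i / \sqrt{m}$ rather than just its projection onto $\hB$) produces $\|\X_i^\top \Z_i\|_2 \le C\var\sqrt{md}$ for all $i\in[n]$ with probability at least $1-\exp(-108k^2\log N)$ after a union bound over $i$. Combined with the Frobenius bounds \eqref{matrixFlemma} and \eqref{matrixGlemma} via Cauchy--Schwarz,
\begin{equation*}
    \|T_2\|_2 \;\le\; \Big(\sum_i \|\X_i^\top\Z_i\|_2^2\Big)^{1/2}\|\F\|_F \;\le\; C\var\sqrt{mnd}\cdot\hd\,\dist(\hB,\hB^*)\,\sqrt{n}\,\ssigmax\sqrt{k},
\end{equation*}
and analogously $\|T_3\|_2 \le C\var^3 \sqrt{mnd}\cdot \hd\sqrt{n}$. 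Both remainders scale with an extra $\hd$ and therefore, after dividing by $mn$, contribute only a lower-order correction matching the $\hd\var^2\cdot\sqrt{d/(mn)}$ portion of the target bound (absorbed into the factor $\sqrt{k}+\hd\var^2$).

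\textbf{Step 4 (Combine).} Dividing the triangle inequality $\|T_1\|_2+\|T_2\|_2+\|T_3\|_2$ by $mn$ and consolidating constants via the crude bound $\|\w_i^+\|_2 \le 2\sqrt{k}+\var^2\hd$ from Lemma~\ref{wqlemma} to repackage the $\F,\G$-contributions gives the claimed bound. The failure probability is the union of the three events: $\exp(-105d)$ from the $\mathbb{R}^d$ net in Step~2, and $\exp(-105k^2\log N)$ from the calls to Lemma~\ref{wqlemma}, \eqref{matrixFlemma}, \eqref{matrixGlemma} and \eqref{noiseineq} in Step~3 (after absorbing constants).

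\textbf{Main obstacle.} The essential subtlety is Step~2: because $\w_i^+$ is a complicated random function of $(\X_i,\Z_i)$, directly discretizing it by a net would be wasteful and would introduce dimension-dependent factors of the wrong order. The decomposition via \eqref{w+lemma} is the key device, because it replaces $\w_i^+$ by the \emph{deterministic} vector $\w_i^*$ (up to the small $\F_i,\G_i$ remainders), which makes each summand in the Bernstein sum a product of $\x_i^j$, $z_i^j$, and a constant. A secondary technical point is calibrating the Bernstein deviation threshold so that the $9^{d+k}$ net cardinality is absorbed while still leaving the clean $\exp(-105d)$ tail demanded by the statement.
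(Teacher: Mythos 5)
Your instinct to decompose $\w_i^+$ via \eqref{w+lemma} is reasonable, and your treatment of the leading term $T_1$ is sound: with the deterministic coefficients $\w_i^*$, the Bernstein-plus-net argument gives $\normt{T_1}/(mn) \lesssim \var\sqrt{k}\sqrt{d}/\sqrt{mn}$, which matches the claimed bound. For the record, the paper takes a different and shorter route: it does not decompose $\w_i^+$ at all, but conditions on the event $\cE_1=\bigcap_{i=1}^n\ag{\normt{\w_i^+}\le 2\sqrt{k}+\hd\var^2}$ from Lemma~\ref{wqlemma} and applies Bernstein directly to $\sum_{i,j}\frac{z_i^j}{mn}\Inp{\x_i^j,\p}\Inp{\w_i^+,\y}$ over the $9^{d+k}$ net points, treating the summands as zero-mean subexponentials of norm $\frac{C\var^2\normt{\w_i^+}}{mn}$; the dependence of $\w_i^+$ on $(\X_i,\Z_i)$ that motivated your decomposition is simply absorbed into that conditioning.

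The genuine problem is your Step 3. Bounding $\normt{T_2}$ by $\sum_i\normt{\X_i^\top\Z_i}\,\normt{\F_i}\le\pr{\sum_i\normt{\X_i^\top\Z_i}^2}^{1/2}\norm{\F}_F$ forgoes all cancellation across clients and costs a factor of $\sqrt{n}$: with $\normt{\X_i^\top\Z_i}\lesssim\var\sqrt{md}$ and $\norm{\F}_F\lesssim\hd\,\distB\,\sqrt{n}\,\ssigmax$, your bound yields $\normt{T_2}/(mn)\lesssim\var\,\hd\,\ssigmax\,\sqrt{d}/\sqrt{m}$, which exceeds the target $\var^2\pr{\sqrt{k}+\hd\var^2}\sqrt{d}/\sqrt{mn}$ by a factor on the order of $\hd\sqrt{n}$ that is not $\OO(1)$ under the paper's assumption on $m$; the same issue affects $T_3$. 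So the assertion that the remainders are ``lower-order corrections absorbed into $\sqrt{k}+\hd\var^2$'' is not justified by the estimates you actually derive. To close the gap you would need to exploit cancellation across $i$, e.g.\ write $T_2=\A\F$ with $\A=[\X_1^\top\Z_1,\dots,\X_n^\top\Z_n]\in\mathbb{R}^{d\times n}$ and use $\normt{T_2}\le\normt{\A}\norm{\F}_F$ together with a spectral-norm concentration bound $\normt{\A}\lesssim\var\sqrt{m}\pr{\sqrt{d}+\sqrt{n}}$ for a matrix of independent noise columns --- or fold $\w_i^+$ back into a single conditioned Bernstein argument as the paper does.
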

\begin{proof}
Let $\mathcal{S}^{d-1}, \mathcal{S}^{k-1}$ denote the unit spheres in $d$ and $k$ dimensions and $\mathcal{N}_d, \mathcal{N}_k$ the $1/4$-nets of cardinality $9^d$ and $9^k$, respectively. By equation $4.13$ in \cite{vershynin_2018} we have 
\begin{align}
    \normt{\frac{1}{mn} \sum_{i=1}^n \X_i^\top \Z_i\w^{+ \top}_i} &\leq 2 \max_{\p \in \mathcal{N}_d,\y\in \mathcal{N}_k}  \p^\top \pr{\sum_{i=1}^n \frac{1}{mn}\X_i^\top \Z_i \w_i^{+\top}}\y\\
    & = \leq 2 \max_{\p \in \mathcal{N}_d,\y\in \mathcal{N}_k}  \p^\top \pr{\sum_{i=1}^n \sum_{j=1}^m \frac{z_i^j}{mn}\x_i^j \w_i^{+\top}}\y \\
    & = \leq 2 \max_{\p \in \mathcal{N}_d,\y\in \mathcal{N}_k} \sum_{i=1}^n \sum_{j=1}^m \pr{\frac{z_i^j}{mn} \Inp{\x_i^j, \p} \Inp{ \w_i^+, \y}} \label{subg2}
\end{align}
Notice that for any fixed $ \p, \y$ and $\forall i \in [n], j\in[m]$ the random variables $\frac{z_i^j}{mn}\Inp{\x_i^j, \p} \Inp{ \w_i^+, \y}$ are i.i.d. zero mean subexponentials with norm at most $\frac{C'_3\var^2\normt{\w_i^+}}{mn}$, for some absolute constant $C'_3$. Conditioning on the event \begin{align}
    \mathcal{E}_1 := \bigcap_{i=1}^n \ag{\normt{\w_i^+}\leq 2\sqrt{k}+\hd \var^2},
\end{align}
which holds with probability at least $1- \exp \pr{-105k^2\log(N)}$ by Lemma \ref{wqlemma}, we can invoke Bernstein's inequality to get \begin{align}
    \prb{ \sum_{i=1}^n\sum_{j=1}^m\frac{z_i^j}{mn}\left.\Inp{\x_i^j, \p} \Inp{ \w_i^+, \y}\geq s \right| \cE_1} &\leq \exp\pr{-c_4 mn \min \ag{\frac{s^2}{\var^4\pr{2\sqrt{k}+\var^2 \hd}^2}, \frac{s}{\var^2\pr{2\sqrt{k}+\var^2\hd}}}}
\end{align}
Since $m>\frac{d\cdot C_3}{n_0}\geq \frac{d\cdot C_3}{n}$ by setting $s= \frac{\var^2\pr{2\sqrt{k}+\hd \var^2 }\sqrt{d \cdot C_3}} {8\sqrt{mn}}$ the above quantity simplifies as follows
\begin{align}
    \prb{ \sum_{i=1}^n\sum_{j=1}^m\frac{z_i^j}{mn}\left.\Inp{\x_i^j, \p} \Inp{ \w_i^+, \y}\geq \frac{\var^2\pr{2\sqrt{k}+\hd \var^2}\sqrt{d \cdot C_3}}{\sqrt{mn}} \right| \cE_1} &\leq \exp\pr{-c_4 mn \frac{s^2}{\var^4\pr{2\sqrt{k}+\var^2 \hd}^2}}\nonumber\\
    &\leq \exp\pr{-C_3  \cdot c_4 \cdot d}\\
    &\leq \exp\pr{-110d}
\end{align}
for $C_3$ large enough.
Taking Union Bound over all points $\p, \y$ on the $\mathcal{N}_d, \mathcal{N}_k$ and using \eqref{subg2} we derive
\begin{align}
    \prb{\normt{\frac{1}{mn} \sum_{i=1}^n \X_i^\top \Z_i\w^{+ \top}_i} \left. \geq \sqrt{C_3}\frac{\var^2\pr{\sqrt{k}+\hd \var^2}\sqrt{d}}{\sqrt{mn}} \right| \cE_1} &\leq 9^{d+k} \exp\pr{-110d}\\
    &\leq \exp\pr{-105d}
\end{align}
and removing the conditioning on $\cE_1$ we get 
\begin{align}
      \prb{\normt{\frac{1}{mn} \sum_{i=1}^n \X_i^\top \Z_i\w^{+ \top}_i}  \geq \sqrt{C_3}\frac{\var^2\pr{\sqrt{k}+\hd \var^2}\sqrt{d}}{\sqrt{mn}}} &\leq \exp\pr{-105d} + \prb{\cE_1^C}\\
    &\leq \exp\pr{-105d}+ \exp\pr{-105k^2\log(N)}
\end{align}
Choosing $c$ large enough and taking the complementary event derives the result.
\end{proof}
\begin{lemma}
Let $\delta := c \frac{k^{3/2} \sqrt{\log(N)}}{\sqrt{m}}$ for some absolute constant $c$ and $\hd=\delta/(1-\delta)$. Then with probability at least $1-\exp\pr{-100d}-\exp\pr{-100k^2\log(N)}$ we have
\begin{align}
   & \normt{\frac{1}{n}\sum_{i=1}^n \pr{\frac{1}{m}\X_i^\top \X_i \pr{\hB\w_i^+ - \hB^* \w^*_i}- \pr{\hB\w_i^+ - \hB^* \w^*_i}}\w^{+\top}_i}\nonumber \\
   &\leq c\cdot \frac{\sqrt{ d}\pr{\dist \pr{\hB, \hB^*}k + \sqrt{k}\hd\var^2 + \pr{\hd\var^2}^2}}{\sqrt{mn}}\label{indicator}
\end{align}
\end{lemma}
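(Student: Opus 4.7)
The plan is to emulate the $1/4$-net plus Bernstein argument of \cref{Hlemma} and the preceding bound \eqref{noiseterm}, now applied to the matrix
\begin{equation*}
\bS \coloneqq \frac{1}{n}\sum_{i=1}^n \pr{\frac{1}{m}\X_i^\top\X_i - \bI_d}\q_i\w_i^{+\top}, \qquad \q_i := \hB\w_i^+ - \hB^*\w_i^*.
\end{equation*}
Using equation $(4.13)$ of \cite{vershynin_2018}, I would first reduce $\normt{\bS}$ to $2\max_{\p\in\mathcal{N}_d,\y\in\mathcal{N}_k}\p^\top\bS\y$ over $1/4$-nets of sizes $9^d$ and $9^k$, so that for any fixed unit $\p,\y$ I need to concentrate the scalar bilinear form
\begin{equation*}
\p^\top\bS\y = \frac{1}{mn}\sum_{i=1}^n\sum_{j=1}^m \pr{\Inp{\x_i^j,\p}\Inp{\x_i^j,\q_i} - \Inp{\p,\q_i}}\Inp{\w_i^+,\y}.
\end{equation*}

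The main technical obstruction, absent in \cref{Hlemma}, is that $\q_i$ and $\w_i^+$ both depend on $\X_i$ through the head update \eqref{w+lemma}, so the $m$ summands for a fixed $i$ are not mutually independent. To bypass this I would decompose
\begin{equation*}
\q_i = \underbrace{(\hB\hB^\top - \bI_d)\hB^*\w_i^*}_{\q_i^{(d)}} + \hB\F_i + \hB\G_i, \qquad \w_i^+ = \underbrace{\hB^\top\hB^*\w_i^*}_{\w_i^{(d)}} + \F_i + \G_i,
\end{equation*}
which splits the bilinear form into nine pieces. The leading $(d,d)$-piece involves only the deterministic factors $\q_i^{(d)},\w_i^{(d)}$, whose norms satisfy $\normt{\q_i^{(d)}}\leq \dist(\hB,\hB^*)\sqrt{k}$ and $\normt{\w_i^{(d)}}\leq\sqrt{k}$ by \cref{Assum3}. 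For this piece the $mn$ summands are independent zero-mean subexponentials of norm $\OO(\dist(\hB,\hB^*)k/m)$, and a direct Bernstein step (analogous to the one used for \eqref{noiseterm}) with threshold $\OO(\dist(\hB,\hB^*)k\sqrt{d/(mn)})$ yields the first term of the target bound after a $9^{d+k}$ union bound.

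For the remaining eight pieces, each contains at least one factor of $\F_i$ or $\G_i$. I would condition on the intersection of the high-probability events of \eqref{Flemma}--\eqref{matrixFlemma} and \eqref{Glemma}--\eqref{matrixGlemma}, on which $\normt{\F_i}\leq\hd\sqrt{k}\dist(\hB,\hB^*)$ and $\normt{\G_i}\leq\hd\var^2$ uniformly in $i$. Combined with the standard sub-gaussian covariance bound $\normt{\frac{1}{m}\X_i^\top\X_i-\bI_d}\leq c\sqrt{d/m}$ (valid since $m\geq c_0 d/n_0$), each client's contribution becomes an independent, norm-bounded $d\times k$ matrix. The $1/\sqrt{n}$ variance reduction across the $n$ independent clients can then be extracted either by a matrix-Bernstein argument directly on $\bS$, or by pushing $\p,\y$ inside and applying scalar Bernstein in the spirit of \eqref{noiseterm}. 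This produces the $\sqrt{k}\hd\var^2$ contributions (from pieces like $\q_i^{(d)}\otimes\G_i$ and $\hB\G_i\otimes\w_i^{(d)}$) and the $(\hd\var^2)^2$ contribution (from $\hB\G_i\otimes\G_i$).

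The hardest part is the book-keeping in these mixed pieces: one must verify that after conditioning on the $\F_i/\G_i$ events, the effective summands retain sufficiently small bias that Bernstein can be invoked without an extra $\sqrt{n}$ loss from the mean. I expect the cleanest route is to treat each rank-one matrix $\pr{\tfrac{1}{m}\X_i^\top\X_i-\bI_d}\q_i\w_i^{+\top}$ as a single bounded independent summand and invoke matrix Bernstein, using the above norm bounds to control the uniform and variance parameters, and showing via a short auxiliary argument that the expectation correction contributes at most the claimed order. Union-bounding over the $9^{d+k}$ net points and over the conditioning events of \cref{wqlemma} and the $\F,\G$ bounds should then yield the stated failure probability $\exp(-100d)+\exp(-100k^2\log(N))$, matching the form used throughout the surrounding lemmas.
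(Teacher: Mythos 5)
Your proposal is correct in outline but follows a genuinely different route from the paper's own proof. The paper does not decompose the bilinear form into nine pieces. Instead it conditions on the single event $\cE_2$ from Lemma \ref{wqlemma}, on which $\normt{\w_i^+}\leq 2\sqrt{k}+\hd\var^2$ and $\normt{\q_i}\leq 2\sqrt{k}\,\dist\pr{\hB,\hB^*}+\hd\var^2$ uniformly in $i$; it then applies exactly the $1/4$-net reduction you describe and treats each scalar summand $\frac{1}{mn}\pr{\Inp{\x_i^j,\q_i}\Inp{\p,\x_i^j}-\Inp{\p,\q_i}}\Inp{\w_i^+,\y}$ as a single zero-mean subexponential whose norm $\frac{C}{mn}\pr{\dist\pr{\hB,\hB^*}k+\sqrt{k}\hd\var^2+\pr{\hd\var^2}^2}$ is read off directly from the product of the two $\cE_2$ bounds, so a single application of Bernstein followed by the $9^{d+k}$ union bound and removal of the conditioning finishes the argument. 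Your decomposition $\q_i=\q_i^{(d)}+\hB\F_i+\hB\G_i$, $\w_i^+=\w_i^{(d)}+\F_i+\G_i$ arrives at the same three-term bound (the cross terms in the product of the two norm bounds correspond to your nine pieces, with $\dist\pr{\hB,\hB^*}\leq 1$ absorbing one of them), at the cost of considerably more bookkeeping; what it buys is an honest confrontation with the fact that $\q_i$ and $\w_i^+$ depend on $\X_i$ through the head update, a dependence the paper silently ignores when it asserts that the centered summand has mean zero. Note, however, that your mixed pieces still carry $\F_i$ and $\G_i$, which are themselves functions of $\X_i$, so the bias issue you flag as the ``hardest part'' is not fully discharged by the decomposition either; and the matrix-Bernstein variant you float would typically cost an extra logarithmic factor in $d$ relative to the scalar net-plus-Bernstein argument, so the scalar route (which is what the paper uses throughout, as in the proofs of \eqref{Hsum} and \eqref{noiseterm}) is the one that matches the stated bound and failure probability.
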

\begin{proof}
Let us define the event 
\begin{align}
    \cE_2:= \bigcap_{i=1}^n\ag{\normt{\w_i^+} \leq 2\sqrt{k}+\hd \var^2 \quad \bigcap \quad \normt{\q_i}\leq \dist\pr{\hB, \hB^*}2\sqrt{k}+\hd \var^2}, \label{Etwotime}
\end{align}
which happens with probability at least $1- \exp\pr{-100k^2 \log(N)}$
by Union Bound and Lemma \ref{wqlemma}. For the rest of this proof we work conditioning on event $\cE_2$. Recall that $q_i:= \hB\w_i^+ - \hB^* \w^*_i$ and thus we can write
\begin{align*}
    &\frac{1}{n}\sum_{i=1}^n \pr{\frac{1}{m}\X_i^\top \X_i \pr{\hB\w_i^+ - \hB^* \w^*_i}- \pr{\hB\w_i^+ - \hB^* \w^*_i}}\w^{+\top}_i \\
    &= \frac{1}{n}\pr{\sum_{i=1}^n\X_i^\top \X_i \q_i\w_i^{+\top} - \sum_{i=1}^n\q_i\w_i^{+\top}}\\
    &= \frac{1}{n}\pr{\frac{1}{m}\sum_{i=1}^n \sum_{j=1}^m\Inp{\x_i^j, \q_i}\x_i^j\w_i^{+\top} - \sum_{i=1}^n\q_i\w_i^{+\top}}
\end{align*}
Let $\mathcal{S}^{d-1}, \mathcal{S}^{k-1}$ denote the unit spheres in $d$ and $k$ dimensions and $\mathcal{N}_d, \mathcal{N}_k$ the $1/4$-nets of cardinality $9^d$ and $9^k$, respectively. By equation $4.13$ in \cite{vershynin_2018} we have 
\begin{align}
   & \normt{\frac{1}{n}\sum_{i=1}^n \sum_{j=1}^m \frac{1}{m}\Inp{\x_i^j, \q_i}\x_i^j\w_i^{+\top} - \frac{1}{n}\sum_{i=1}^n\q_i\w_i^{+\top}} \\
    &\hspace{-0.05in}\leq 2 \hspace{-0.15in}\max_{\p \in \mathcal{N}_d,\y\in \mathcal{N}_k}  \frac{1}{n}\p^\top \pr{\sum_{i=1}^n \sum_{j=1}^m\frac{1}{m}\Inp{\x_i^j, \q_i}\x_i^j\w_i^{+\top} - \frac{1}{n}\sum_{i=1}^n\q_i\w_i^{+\top}}\y\\
    &\hspace{-0.05in} = 2 \hspace{-0.15in}\max_{\p \in \mathcal{N}_d,\y\in \mathcal{N}_k}  \frac{1}{mn} \sum_{i=1}^n \sum_{j=1}^m \hspace{-0.03in} \pr{\Inp{\x_i^j, \q_i}\Inp{\p, \x_i^j}\Inp{\w_i^+, \y} - \Inp{\p, \q_i} \Inp{\w_i^+, \y}\hspace{-0.03in}}
\end{align}
Notice that for any fixed $\p, \y$ the products $\Inp{\x_i^j, \q_i}$ are i.i.d. subgaussians with norm at most $\Tilde{c}_1\norm{\q_i}$ and $\Inp{\p, \x_i^j}$ are i.i.d. subgaussians with norm at most $
\Tilde{c}_2 \normt{\p}= \Tilde{c}_2$. Hence under the event $\cE_2$ the product $\frac{1}{mn}\Inp{\x_i^j, \q_i}\Inp{\p, \x_i^j}\Inp{\w_i^+, \y}$ are subexponentials with norm at most $\frac{C'_4}{mn}\pr{\dist\pr{\hB, \hB^*}k + \sqrt{k}\hd \var^2 + \pr{\hd \var^2}^2}$, for some constant $C'_4$. Also note that
\begin{align}
    \Ex{\Inp{\x_i^j, \q_i}\Inp{\p, \x_i^j}\Inp{\w_i^+, \y}- \Inp{\p, \q_i} \Inp{\w_i^+, \y}} = 0
\end{align}
and thus applying Bernstein's inequality we get
\begin{align}
    &\prb{\frac{1}{mn}\sum_{i=1}^n \sum_{j=1}^m\Inp{\x_i^j, \q_i}\Inp{\p, \x_i^j}\Inp{\w_i^+, \y} -  \left.\frac{1}{n} \sum_{i=1}^n \Inp{\p, \q_i} \Inp{\w_i^+, \y}\geq s\right|\cE_2}\nonumber \\ &\leq \exp\pr{-c_5 \cdot mn \min \ag{\frac{s^2}{\pr{\dist\pr{\hB, \hB^*}k+ \sqrt{k}\hd\var^2+\pr{\hd\var^2}^2}^2}, \frac{s}{\pr{\dist\pr{\hB, \hB^*}k+ \sqrt{k}\hd\var^2+\pr{\hd\var^2}^2}}}}
\end{align}
Since $m>\frac{d\cdot C_4}{n_0}\geq \frac{d \cdot C_4}{n}$ by setting $s = \frac{\sqrt{C_4 \cdot d}\pr{ \dist\pr{\hB, \hB^*}k + \sqrt{k}\hd\var^2 + \pr{\hd\var^2}^2}}{2\sqrt{mn}}$ and taking Union Bound over all $\p \in \mathcal{N}_d, \y\in \mathcal{N}_k$ we derive
\begin{align}
    &\hspace{-0.4in}\prb{\hspace{-0.05in}\normt{\frac{1}{n}\sum_{i=1}^n \pr{\frac{1}{m}\X_i^\top \X_i \pr{\hB\w_i^+ - \hB^* \w^*_i}- \pr{\hB\w_i^+ - \hB^* \w^*_i}}\w^{+\top}_i}\hspace{-0.1in} \geq \hspace{-0.05in} \left.\frac{\sqrt{C_4 \cdot d}\pr{ \dist\pr{\hB, \hB^*}k + \sqrt{k}\hd\var^2 + \pr{\hd\var^2}^2}}{\sqrt{mn}}\right| \cE_2\hspace{-0.05in}} \nonumber\\
    &\qquad  \hspace{5.2cm} \leq 9^{d+k}\exp\pr{  \frac{-c_5\cdot mn s^2}{\pr{\dist\pr{\hB, \hB^*}k+ \sqrt{k}\hd\var^2+\pr{\hd\var^2}^2}^2}}\\
    &\qquad  \hspace{5.2cm} \leq 9^{d+k}\exp\pr{-C_4 \cdot c_5 \cdot d }\\
    &\qquad  \hspace{5.2cm} \leq 9^{d+k}\exp\pr{-120d}\\
    &\qquad  \hspace{5.2cm} \leq \exp\pr{-100d}
\end{align}
choosing a  large enough constant $C_4$. Recall that $\prb{\cE_2^C} \leq \exp \pr{-100k^2\log(N)}$. Hence by removing the conditioning on $\cE_2$ we get that with probability at least $1- \exp\pr{-100d}-\exp\pr{-100k^2\log(N)}$
\begin{align}
    \normt{\frac{1}{n}\sum_{i=1}^n \pr{\frac{1}{m}\X_i^\top \X_i \pr{\hB\w_i^+ - \hB^* \w^*_i}- \pr{\hB\w_i^+ - \hB^* \w^*_i}}\w^{+\top}_i} \leq c\cdot \frac{\sqrt{ d}\pr{ \dist\pr{\hB, \hB^*}k + \sqrt{k}\hd\var^2 + \hd^2\var^4}}{\sqrt{mn}}
\end{align}
for sufficiently large $c$.
\end{proof}
Having set all the building blocks we now proceed to the proof of Theorem \ref{convergence}.
\FirstRestatable*
\begin{proof}
First let us recall the definition of $\delta := c \frac{k^{3/2} \sqrt{\log(N)}}{\sqrt{m}}$ for some absolute constant $c$ and $\hd=\delta/(1-\delta)$. Further notice that for our choice of $m$ and sufficiently large $c_0$ we have the following useful inequality
\begin{align}
    \hd = \frac{\delta}{1- \delta} \leq 2\delta \leq \frac{E_0}{20 \cdot \kappa^2 } \cdot \frac{1}{1+\var^2}  \leq \frac{1}{20}\label{delt}
\end{align}
From the update scheme of our algorithm \eqref{B_update} we have
\begin{align}
    \B^+&= \hB-\frac{\eta}{mn}\pr{\sumi \X_i^\top\X_i\hB\w_i^+\w_i^{+\top} - \sumi \X_i^\top\X_i\hB^*\w_i^*\w_i^{+\top}-\sumi \X_i^\top \Z_i \w_i^{+\top}} \label{Bupdatetwo} \\
    &= \hB - \frac{\eta}{n}\pr{\sumi\pr{\frac{1}{m}\X_i^\top \X_i \pr{\hB \w^+_i - \hB^* \w^*_i}- \pr{\hB \w^+_i - \hB^* \w^*_i}}\w_i^{+\top}} \nonumber\\
    &\quad - \frac{\eta}{n}\sumi\pr{\hB \w^+_i - \hB^* \w^*_i}\w_i^{+\top} + \frac{\eta}{n}\sumi \frac{1}{m}\X_i^\top \Z_i \w_i^{+\top}
\end{align}
where we added and subtracted terms. Multiplying both sides by $\hB_\bot^{*\top}$ we get 
\begin{align}
  \hB_\bot^{*\top}\B^+&=  \hB_\bot^{*\top}\hB - \frac{\eta}{n}\hB_\bot^{*\top}\pr{\sumi\pr{\frac{1}{m}\X_i^\top \X_i \pr{\hB \w^+_i - \hB^* \w^*_i}- \pr{\hB \w^+_i - \hB^* \w^*_i}}\w_i^{+\top}} \nonumber\\
    &\quad - \frac{\eta}{n}\sumi\pr{\hB_\bot^{*\top}\hB \w^+_i - \hB_\bot^{*\top}\hB^* \w^*_i}\w_i^{+\top} + \frac{\eta}{n}\hB_\bot^{*\top}\sumi \frac{1}{m}\X_i^\top \Z_i \w_i^{+\top}\\
    &= \hB_\bot^{*\top}\hB \pr{\bI_k - \frac{\eta}{n}\sumi\w_i^+\w_i^{+\top}} +\frac{\eta}{n}\hB_\bot^{*\top}\sumi \frac{1}{m}\X_i^\top \Z_i \w_i^{+\top}  \nonumber\\
    &\quad - \frac{\eta}{n}\hB_\bot^{*\top}\pr{\sumi\pr{\frac{1}{m}\X_i^\top \X_i \pr{\hB \w^+_i - \hB^* \w^*_i}- \pr{\hB \w^+_i - \hB^* \w^*_i}}\w_i^{+\top}}
\end{align}
where the second equality holds since $\hB_{\bot}^{*\top}\hB^*= 0$. Recall that from the \textit{QR} decomposition of $\B^+$we have $\B^+ = \hB^+ \R^+$. Hence multiplying by $\pr{\R^+}^{-1}$ and taking both sides the norm we derive
\begin{align}
    \dist\pr{\hB^+, \hB^*} &\leq \normt{\hB_\bot^{*\top}\hB \pr{\bI_k - \frac{\eta}{n}\sumi\w_i^+\w_i^{+\top}}}\normt{\pr{\R^+}^{-1}} + \normt{\frac{\eta}{n}\hB_\bot^{*\top}\sumi \frac{1}{m}\X_i^\top \Z_i \w_i^{+\top}} \normt{\pr{\R^+}^{-1}}\nonumber\\
    &\quad + \normt{\frac{\eta}{n}\hB_\bot^{*\top}\pr{\sumi\pr{\frac{1}{m}\X_i^\top \X_i \pr{\hB \w^+_i - \hB^* \w^*_i}- \pr{\hB \w^+_i - \hB^* \w^*_i}}\w_i^{+\top}}}\normt{\pr{\R^+}^{-1}} 
\end{align}
Let us define 
\begin{align}
    A_1&:=\distB \normt{\bI_k - \frac{\eta}{n}\sumi\w_i^+\w_i^{+\top}}\\
    A_2&:= \normt{\frac{\eta}{n}\hB_\bot^{*\top}\sumi \frac{1}{m}\X_i^\top \Z_i \w_i^{+\top}}\\
    A_3&:=\normt{\frac{\eta}{n}\hB_\bot^{*\top}\pr{\sumi\pr{\frac{1}{m}\X_i^\top \X_i \pr{\hB \w^+_i - \hB^* \w^*_i}- \pr{\hB \w^+_i - \hB^* \w^*_i}}\w_i^{+\top}}}
\end{align}
so that the following inequality holds
\begin{align}
   \dist\pr{\hB^+, \hB^*} &\leq \pr{A_1+A_2+A_3} \normt{\pr{\R^+}^{-1}}\label{distancesum}
\end{align}
For the rest of the proof we will work conditioning on the intersection of the events 
\begin{align}
 \cE_2&:= \bigcap_{i=1}^n\ag{\normt{\w_i^+} \leq 2\sqrt{k}+\hd \var^2 \quad \bigcap \quad \normt{\q_i}\leq \dist\pr{\hB, \hB^*}2\sqrt{k}+\hd \var^2}\label{Etwo}\\
 \cE_3 &:=\ag{\norm{\F}_F \leq  \dist\pr{\hB, \hB^*} \hd\norm{\W^*}_2\quad \bigcap \quad \norm{\G}_F \leq \hd \sqrt{n}\var^2}  \label{Ethree}\\
 \cE_4 &:= \ag{\normt{\frac{1}{mn} \sum_{i=1}^n \X_i^\top \Z_i\w^{+ \top}_i} \leq c\cdot \frac{\var^2\pr{\sqrt{k}+\hd \var^2}\sqrt{d}}{\sqrt{mn}}}\label{Efour} \\
 \cE_5  &:= \hspace{-0.02in} \ag{\hspace{-0.02in}\normt{\frac{1}{n}\sum_{i=1}^n \pr{\frac{1}{m}\X_i^\top \X_i \pr{\hB\w_i^+ - \hB^* \w^*_i}- \pr{\hB\w_i^+ - \hB^* \w^*_i}}\w^{+\top}_i} \hspace{-0.075in} \leq \hspace{-0.025in}\frac{c\sqrt{ d}\pr{ \dist\pr{\hB, \hB^*}k + \sqrt{k}\hd\var^2 + \hd^2\var^4}}{\sqrt{mn}}\hspace{-0.05in}}\label{Efive}
\end{align}
which happens with probability at least $1-\exp\pr{-90d}-\exp\pr{-90k^2\log(N)}$ by Union Bound on the failure probability of \eqref{matrixFlemma}, \eqref{matrixGlemma}, \eqref{noiseterm}, \eqref{indicator} and \eqref{Etwotime}. 

We will now provide bounds for each of the  terms of interest in \eqref{distancesum}, starting from $A_1$. Notice that by \eqref{matrixversion} we have
\begin{align}
 \lambda_{\max}\pr{\W^+ \W^+} = \normt{\W^+}^2 &= \normt{\W^*\hB^* \hB + \F+\G}^2\\
 &\leq 2\normt{\W^*}^2 + 4\normt{\F}^2+ 4\normt{\G}^2\\
 &\leq 2\normt{\W^*}^2 + \distB 4 \hd^2 \normt{\W^*}^2 + 4\hd^2 \var^4 n\\
 &\leq 4\pr{\normt{\W^*}^2 + n}\\
 & \leq 4n \pr{\ssigmax^2 +1}
\end{align}
where in the last inequality we use the fact that $\normt{\W^*}=\sqrt{n}\cdot\ssigmax$. Since $\eta < \pr{\ssigmax^2 +1}^{-1}$ the matrix $\bI_k - \frac{\eta}{n}\W^{+\top}\W^{+}$ is positive definite. Thus we have
\begin{align}
    \normt{\bI_k - \frac{\eta}{n}\W^{+\top}\W^{+}}&\leq 1-\frac{\eta}{n}\lambda_{\min}\pr{\W^{+\top}\W^+}\\
    &\leq 1-\frac{\eta}{n} \lambda_{\min}\pr{\pr{\W^*\hB^* \hB + \F+\G}^\top \pr{\W^*\hB^* \hB + \F+\G}}\\
    &\leq 1- \frac{\eta}{n}\pr{\sigmin^2\pr{\W^*\hB^*\hB}- \sigmin^2(\F)- \sigmin^2(\G)}\nonumber\\
    &\quad +\frac{2\eta}{n}\pr{\sigmax\pr{\F^\top \W^*\hB^{*\top}\hB} + \sigmax\hspace{-0.04in}\pr{\F^\top\G}+ \sigmax\hspace{-0.04in}\pr{\G^\top\W^*\hB^{*\top}\hB}\hspace{-0.02in}}\\
    &\leq 1- \frac{\eta}{n}\pr{\sigmin^2 \pr{\W^*}\sigmin^2\pr{\hB^{*\top}\hB}+2\normt{\hB^{*\top}\hB}\pr{\sigmax\pr{\F^\top \W^*}+\sigmax\pr{\G^\top \W^*}}}\nonumber\\
    &\quad + \frac{2\eta}{n}\sigmax(\F)\sigmax(\G)\\
    &\leq 1- \eta \cdot \ssigmin^2 \cdot \sigmin^2\pr{\hB^{*\top}\hB} + \frac{2\eta}{n}\pr{\normt{\F}+\normt{\G}}\normt{\W^*} + \frac{2\eta}{n}\pr{\normt{\F}\cdot \normt{\G}}
\end{align}
where we used that the norms of $\hB^*$ and $\hB$ are $1$ since the matrices are orthonormal and $\ssigmin~\leq~\sigmin(\W^*)$. Recall that we operate under $\cE_3$ and thus we can further write 
\begin{align}
    \normt{\bI_k - \frac{\eta}{n}\W^{+\top}\W^{+}}&\leq 1- \eta \cdot \ssigmin^2 \cdot \sigmin^2\pr{\hB^{*\top}\hB} + \frac{2\eta}{n}\pr{\distB \hd \normt{\W^*} + \sqrt{n} \hd \var^2}\normt{\W^*} \nonumber\\
    & \quad +\frac{2\eta}{n}\pr{\distB \hd^2 \var^2\sqrt{n}\normt{\W^*}}\\
    & \leq 1- \eta \cdot \ssigmin^2 \cdot \sigmin^2\pr{\hB^{*\top}\hB} +2\eta\pr{\hd \frac{\normt{\W^*}^2}{n} + \hd \var^2 \frac{\normt{\W^*}}{\sqrt{n}}+\hd^2 \var^4\frac{\norm{\W^*}}{\sqrt{n}}}\\
       & \leq 1- \eta \cdot \ssigmin^2 \cdot \sigmin^2\pr{\hB^{*\top}\hB} +3\eta\pr{\frac{E_0 \ssigmin^2}{20 \ssigmax^2} \cdot \ssigmax^2 } \\
       &\leq 1- \eta \cdot \ssigmin^2 \cdot \sigmin^2\pr{\hB^{*\top}\hB} +\frac{1}{6}\eta E_0 \ssigmin^2
\end{align}
where we upper bound $\distB$ by $1$, $\hd \leq  \frac{E_0}{20 \cdot \kappa^2 } \cdot \frac{1}{1+\var^2}$ and use Assumption \ref{Assum2} in the third inequality. Further by the definition of $E_0:=1-\dist^2\pr{\hB^0, \hB^*}\leq \sigmin^2\pr{\hB^{*\top}, \hB}$ we have
\begin{align}
  \normt{\bI_k - \frac{\eta}{n}\W^{+\top}\W^{+}}&\leq 1 - \eta E_0 \ssigmin^2   + \frac{1}{6}\eta E_0 \ssigmin^2
\end{align}
and it follows immediately that
\begin{align}
    A_1 \leq \distB \pr{1 - \eta E_0 \ssigmin^2   +\frac{1}{6}\eta E_0 \ssigmin^2}\label{Aone}
\end{align}
Further since we operate under $\cE_4$ \eqref{Efour} and $\normt{\hB^*_\bot}=1$ we have 
\begin{align}
    A_2 \leq \eta c\var^2\pr{\sqrt{k}+1}\frac{\sqrt{d}}{\sqrt{mn}} \label{Atwo}
\end{align}
and since we operate under $\cE_5$ \eqref{Efive} we obtain
\begin{align}
    A_3 \leq \eta c\pr{\distB k+ \sqrt{k}+1} \frac{\sqrt{d}}{\sqrt{mn}}\label{Athree}
\end{align}
Combining \eqref{distancesum}, \eqref{Aone}, \eqref{Atwo} and  \eqref{Athree} we get
\begin{align}
     \dist\pr{\hB^+, \hB^*} &\leq \distB \pr{1 - \frac{5}{6}\eta E_0 \ssigmin^2   +  \eta c k\frac{\sqrt{d}}{\sqrt{mn}}} \cdot \normt{\pr{\R^+}^{-1}} \nonumber\\
     &\quad + \eta c\pr{\sqrt{k}+1}\pr{\var^2 +1}\frac{\sqrt{d}}{\sqrt{mn}} \cdot \normt{\pr{\R^+}^{-1}}\label{distance}
\end{align}
The last part of the proof focuses on bounding $\normt{\pr{\R^+}^{-1}}$.

Let us define
\begin{align}
    \bS&:= \sumi \frac{1}{m} \X_i^{\top}\X_i\pr{\hB\w_i^+- \hB^* \w_i^{*}}\w_i^{+\top}\\
    \E&:= \sumi \frac{1}{m}\X_i^\top \Z_i \w_i^{+\top}
\end{align}
and hence \eqref{Bupdatetwo} takes the form 
 \begin{align}
     \B^+ = \hB - \frac{\eta}{n}\bS + \frac{\eta}{n}\E
 \end{align}
and also
\begin{align}
    \B^{+\top} \B^+ &= \hB^\top \hB - \frac{\eta}{n}\pr{\hB^\top \bS + \bS^\top \hB}+ \frac{\eta}{n}\pr{\hB^\top \E + \E^\top \hB}+ \frac{\eta^2}{n^2}\bS^\top \bS - \frac{\eta^2}{n^2} \pr{\E^\top\bS+ \bS^\top \E} + \frac{\eta^2}{n^2}\E^\top \E\\
    &= \bI_k - \frac{\eta}{n}\pr{\hB^\top \bS + \bS^\top \hB}+ \frac{\eta}{n}\pr{\hB^\top \E + \E^\top \hB} - \frac{\eta^2}{n^2} \pr{\E^\top\bS+ \bS^\top \E} + \frac{\eta^2}{n^2}\E^\top \E +\frac{\eta^2}{n^2}\bS^\top \bS
\end{align} 
By Weyl's inequality and since $\R^{+\top}\R^{+}= \hB^{+\top}\hB^+$  we derive
\begin{align}
    \sigmin^2\pr{\R^+}\geq 1- \frac{\eta}{n}\lambda_{\max}\pr{\hB^\top \bS + \bS^\top \hB} - \frac{\eta}{n}\lambda_{\max}\pr{\hB^\top \E + \E^\top \hB} - \frac{\eta^2}{\n^2}\lambda_{\max}\pr{\E^\top\bS+ \bS^\top \E}
\end{align}
Let us further define 
\begin{align}
    R_1&:= \frac{\eta}{n} \lambda_{\max}\pr{\hB^\top \bS + \bS^\top \hB}\\
    R_2&:= \frac{\eta^2}{\n^2}\lambda_{\max}\pr{\E^\top\bS+ \bS^\top \E}\\
    R_3&:= \frac{\eta}{n}\lambda_{\max}\pr{\hB^\top \E + \E^\top \hB}
\end{align}
So that we can succinctly rewrite the above inequality as follows
\begin{align}
 \sigmin^2\pr{\R^+}\geq 1- R_1 - R_2 - R_3\label{Rinequality}   
\end{align}
We work to bound separately each of the three terms.
\begin{align}
    R_1 &= \frac{2\eta}{n} \maxp  \p^\top \hB^\top \bS \p\\
    &= \maxp  \frac{2\eta}{n}\p^\top \hB^\top \br{\pr{\sumi\pr{\frac{1}{m}\X_i^\top \X_i \pr{\hB \w^+_i - \hB^* \w^*_i}- \pr{\hB \w^+_i - \hB^* \w^*_i}}\w_i^{+\top}}}\p \nonumber \\
    &\quad + \maxp \frac{2\eta}{n}\p^\top \hB^\top \br{\sumi \pr{\hB \w^+_i - \hB^* \w^*_i}\w_i^{+\top}}\p
\end{align}
and since we operate under $\cE_5$ \eqref{Efive} the above simplifies to
\begin{align}
 R_1 & \leq 2\eta \normt{\hB} c \pr{\distB k +  \sqrt{k}+1}\frac{\sqrt{d}}{\sqrt{mn}}    +\maxp \frac{2\eta}{n}\p^\top \hB^\top \br{\sumi \pr{\hB \w^+_i - \hB^* \w^*_i}\w_i^{+\top}}\p\\
 &\leq 3\eta c \pr{\distB k +  \sqrt{k}}\frac{\sqrt{d}}{\sqrt{mn}}    +\maxp \frac{2\eta}{n}\p^\top \hB^\top \br{\sumi \pr{\hB \w^+_i - \hB^* \w^*_i}\w_i^{+\top}}\p\label{R_one}
\end{align}
We focus on the second term and using \eqref{w+lemma} we get
\begin{align}
  \frac{2\eta}{n}\p^\top \hB^\top \br{\sumi \pr{\hB \w^+_i - \hB^* \w^*_i}\w_i^{+\top}}\p &= \frac{2\eta}{n}\cdot \tr{\sumi \pr{\hB \w^+_i - \hB^* \w^*_i}\w_i^{+\top}\p \p^\top \hB^\top} \\
  &= \frac{2\eta}{n}\cdot \tr{\sumi \pr{\hB \w^+_i - \hB^* \w^*_i}\pr{\hB^\top \hB^*\w_i^{*} +\F_i+\G_i }^{\top}\p \p^\top \hB^\top}\label{ineq}
\end{align}
We bound each term separately and to this end we define 
\begin{align}
    T_1 &:= \frac{2\eta}{n}\cdot \tr{\sumi \pr{\hB \w^+_i - \hB^* \w^*_i}\w_i^{*\top}\hB^{*\top}\hB\p \p^\top \hB^\top}\\
    T_2 &:= \frac{2\eta}{n}\cdot \tr{\sumi \pr{\hB \w^+_i - \hB^* \w^*_i}\F_i^{\top}\p \p^\top \hB^\top}\\
    T_3 &:=\frac{2\eta}{n}\cdot \tr{\sumi \pr{\hB \w^+_i - \hB^* \w^*_i}\G_i^{\top}\p \p^\top \hB^\top}
\end{align}
such that \eqref{ineq} can be expressed as
\begin{align}
    \frac{2\eta}{n}\p^\top \hB^\top \br{\sumi \pr{\hB \w^+_i - \hB^* \w^*_i}\w_i^{+\top}}\p &= T_1+ T_2 +T_3 \label{R_one_sec_term}
\end{align}
Further expanding $T_1$ we have
\begin{align}
    T_1 &=\frac{2\eta}{n} \tr{\sumi \pr{\hB\hB^\top\hB^* \w_i^* \w_i^{*\top} + \hB \F_i \w_i^{*\top} + \hB \G_i\w_i^{*\top}- \hB^* \w_i^* \w_i^{*\top}}\hB^{*\top}\hB\p \p^\top \hB^\top}\\
    &= \frac{2\eta}{n} \tr{\hB^\top\pr{\hB\hB^\top - \bI_d}\sumi\pr{\hB^{*\top}\w_i^* \w_i^{*\top}} \hB^{*\top}\hB\p \p^\top } + \frac{2\eta}{n} \tr{\sumi \pr{\hB \F_i \w_i^{*\top}}\hB^{*\top}\hB\p \p^\top \hB^\top} \nonumber \\
    &\quad + \frac{2\eta}{n} \tr{\sumi \pr{\hB \G_i \w_i^{*\top}}\hB^{*\top}\hB\p \p^\top \hB^\top} \\
    &= \frac{2\eta}{n}\tr{\hB^\top \hB \pr{\F^\top + \G^\top}\W^* \hB^{*\top}\hB\p \p^\top}\\
    &\leq \frac{2\eta}{n}\pr{\normf{\F}+\normf{\G}}\normt{\W^*}
\end{align}
where in the first equality we expand $w^{+}_i$ via \eqref{w+lemma} and in the third equality we use that $\hB^\top\hspace{-0.05in} \pr{\hB\hB^\top\hspace{-0.05in}- \bI_d}=~0 $ and $\F^\top = \sum\limits_{i=1}^n \F_i \w_i^{*\top}$, $\G^\top = \sum\limits_{i=1}^n \G_i \w_i^{*\top}$. The inequality is obtained by noticing that the norms of the orthonormal $\hB, \hB^*$ is one and also $\normt{\p \p^\top}\leq 1$. Conditioning on $\cE_3$  \eqref{Ethree} we can further simplify as follows
\begin{align}
    T_1 &\leq \frac{2\eta}{n} \pr{\distB \hd \normt{\W^*}^2 + \hd \var^2 \sqrt{n}\normt{\W}}\\
    &\leq \frac{1}{10} \eta E_0 \ssigmin^2  \label{Tone}
\end{align}
We now turn our attention to $T_2$
\begin{align}
    T_2 &= \frac{2\eta}{n}\cdot \tr{\sumi \pr{\hB \hB^\top \hB^* \w^*_i + \hB \F_i + \hB \G_i - \hB^* \w^*_i}\F_i^{\top}\p \p^\top \hB^\top}\\
    &= \frac{2\eta}{n} \tr{\hB^\top \pr{\hB \hB^\top - \bI_d} \sumi\pr{ \hB^* \w_i^*} \F_i^\top \p \p^\top} + \frac{2\eta}{n} \tr{\hB^\top \hB \sumi \F_i \F_i^\top \p \p^\top }  
    + \frac{2\eta}{n} \tr{\hB^\top \hB \sumi \G_i \F_i^\top \p \p^\top } \\
    &= \frac{2\eta}{n} \tr{\hB^\top \hB \pr{\F^\top \F + \G^\top \F}\p \p^\top}\\\
    &\leq \frac{2\eta}{n}\pr{\normf{\F}^2 +\normf{\G}\normf{\F}}
\end{align}
where in the third equality we used that $\hB^\top \pr{\hB \hB^\top - \bI_d}=0$ and in the forth that the norms of the orthonormal matrices is $1$ as well as the norm of $\p\p^\top$. Following the same calculations for $T_3$ we get
\begin{align}
    T_3 \leq \frac{2\eta}{n}\pr{\normf{\G}^2 +\normf{\G}\normf{\F}}
\end{align}
and thus summing the two terms we get the following
\begin{align}
    T_2 + T_3 \leq \frac{2\eta}{n}\pr{\normf{\F} + \normf{\G}}^2
\end{align}
Again conditioning on $\cE_3$  \eqref{Ethree} we derive
\begin{align}
    T_2 + T_3 &\leq \frac{2\eta}{n} 
    \pr{\hd \pr{\normt{\W^*} + \sqrt{n} \var^2}}^2
    \leq 2\eta \hd^2 \pr{\ssigmax^2 + \var^4 }
    \leq \frac{1}{10}\eta E_0 \ssigmin^2 \label{Ttwothree}
\end{align}
Hence combining \eqref{R_one}, \eqref{Tone} and \eqref{Ttwothree} we get a bound for $R_1$
\begin{align}
    R_1 &\leq 3\eta c \pr{k +  \sqrt{k}}\frac{\sqrt{d}}{\sqrt{mn}} + \frac{1}{5}\eta E_0 \ssigmin^2\\
    &\leq 6\eta\cdot c\cdot k \frac{\sqrt{d}}{\sqrt{mn}} + \frac{1}{5}\eta E_0 \ssigmin^2\label{OneR}
\end{align}
We work in similar fashion to derive the bound on $R_2$
\begin{align}
    R_2 &= \frac{\eta^2}{n^2}\lambda_{\max}\pr{\E^\top\bS+ \bS^\top \E}  \\
    &\leq \frac{2\eta^2}{n^2} \maxp\p^\top \br{ \sumi \pr{\frac{1}{m}\X_i^\top \X_i\pr{\hB \w_i^+ - \hB^* \w^*_i}-\pr{\hB \w_i^+ - \hB^* \w^*_i}}\w_i^{+\top}} \sumi\pr{\frac{1}{m}\X_i^\top \Z_i \w_i^{+ \top}}\p \nonumber\\
    &\quad +\frac{2\eta^2}{n^2} \maxp \p^\top \br{ \sumi \pr{\hB \w_i^+ - \hB^* \w^*_i}\w_i^{+\top}} \sumi\pr{\frac{1}{m}\X_i^\top \Z_i \w_i^{+ \top}}\p\\
    &\leq 2\eta^2 \normt{\frac{1}{n}\sumi \pr{\frac{1}{m}\X_i^\top \X_i\pr{\hB \w_i^+ - \hB^* \w^*_i}-\pr{\hB \w_i^+ - \hB^* \w^*_i}}\w_i^{+\top}} \cdot \normt{\frac{1}{mn} \sumi\pr{\X_i^\top \Z_i \w_i^{+ \top}}}\nonumber \\
    &\quad  +\frac{2\eta^2}{n} \normt{\sumi \pr{\hB \w_i^+ -\hB^*\w^*_i}\w_i^{+\top}} \cdot \normt{\frac{1}{mn} \sumi\pr{\X_i^\top \Z_i \w_i^{+ \top}}}
\end{align}
Since we work conditioning on the event $\cE_4 \bigcap \cE_5$ we further derive
\begin{align}
    R_2 & \leq 2\eta^2\pr{ c \frac{\sqrt{ d}\pr{ \dist\pr{\hB, \hB^*}k + \sqrt{k}\hd\var^2 + \pr{\hd\var^2}^2}}{\sqrt{mn}}} \cdot  \pr{c \frac{\var^2\pr{\sqrt{k}+\hd \var^2}\sqrt{d}}{\sqrt{mn}}} \nonumber \\ 
   & \quad +\frac{2\eta^2}{n} \normt{\sumi \pr{\hB \w_i^+ -\hB^*\w^*_i}\w_i^{+\top}} \cdot \pr{c \frac{\var^2\pr{\sqrt{k}+\hd \var^2}\sqrt{d}}{\sqrt{mn}}}\\
   &\leq 3\eta^2 \pr{ck\frac{\sqrt{d}}{\sqrt{mn}}}\pr{c\sqrt{k}\var^2 \frac{\sqrt{d}}{\sqrt{mn}}} +\frac{2\eta^2}{n} \sumi\normt{ \q_i}\normt{\w_i^+}\pr{c\sqrt{k}\var^2 \frac{\sqrt{d}}{\sqrt{mn}}}
\end{align}
And since we also condition on $\cE_2$ \eqref{Etwo} we finally get
\begin{align}
    R_2 &\leq 3\eta^2 c^2 k^{\frac{3}{2}}\var^2\frac{d}{mn} + \frac{2 \eta^2}{n} \sumi \pr{2 \sqrt{k} + \hd \var^2}^2 \pr{c\sqrt{k}\var^2 \frac{\sqrt{d}}{\sqrt{mn}}}\\
    &\leq 3\eta^2 c^2 k^{\frac{3}{2}}\var^2\frac{d}{mn} + 9\eta^2 \pr{ck^{\frac{3}{2}}\var^2 \frac{\sqrt{d}}{\sqrt{mn}}} \label{Rtwo}
\end{align}
The last term we need to bound is $R_3$
\begin{align}
     R_3 &= \frac{\eta}{n}\lambda_{\max}\pr{\E^\top\hB+ \hB^\top \E}  \\
    &= \frac{2\eta^2}{\n^2} \maxp \p^\top \hB^\top \E \p\\
    &\leq \frac{2\eta}{n}\normt{\hB} \normt{\sumi \frac{1}{m} \X_i^\top \Z_i \w_i^{+\top}}\\
    & \leq 3 \eta c\sqrt{k}\var^2\frac{\sqrt{d}}{\sqrt{mn}}\label{Rthree}
\end{align}
Combining \eqref{Rinequality} with \eqref{OneR}, \eqref{Rtwo} and \eqref{Rthree} we derive
\begin{align}
    \sigmin^2\pr{\R^+}&\geq 1- 6\eta\cdot c\cdot k \frac{\sqrt{d}}{\sqrt{mn}} - \frac{1}{5}\eta E_0 \ssigmin^2 - 3\eta^2 c^2 k^{\frac{3}{2}}\var^2\frac{d}{mn} - 9\eta^2 \pr{ck^{\frac{3}{2}}\var^2 \frac{\sqrt{d}}{\sqrt{mn}}} - 3 \eta c\sqrt{k}\var^2\frac{\sqrt{d}}{\sqrt{mn}}\nonumber \\
    &\geq 1- 14 \eta c k^{\frac{3}{2}} \var^2 \frac{\sqrt{d}}{\sqrt{mn}}- 3\eta^2 c^2 k^{\frac{3}{2}}\var^2\frac{d}{mn} - \frac{1}{5}\eta E_0 \ssigmin^2\\
    & \geq 1- 15 \eta c k^{\frac{3}{2}} \var^2 \frac{\sqrt{d}}{\sqrt{mn}}- \frac{1}{5}\eta E_0 \ssigmin^2\label{R}
\end{align}
where the last inequality holds since $\sqrt{mn}\geq c\sqrt{d}$. We can now combine \eqref{distance} and \eqref{R} to obtain the contraction inequality  
\vspace{-0.05in}
    \begin{align}
     \dist\pr{\hB^+, \hB^*} &\leq \distB \pr{1 - \frac{5}{6}\eta E_0 \ssigmin^2   +  \eta c k\frac{\sqrt{d}}{\sqrt{mn}}} \cdot \normt{\pr{1- 15 \eta c k^{\frac{3}{2}} \var^2 \frac{\sqrt{d}}{\sqrt{mn}}- \frac{1}{5}\eta E_0 \ssigmin^2}^{-\frac{1}{2}}} \nonumber\\
     &\quad + \eta c\pr{\sqrt{k}+1}\pr{\var^2 +1}\frac{\sqrt{d}}{\sqrt{mn}} \normt{\pr{1- 15 \eta c k^{\frac{3}{2}} \var^2 \frac{\sqrt{d}}{\sqrt{mn}}- \frac{1}{5}\eta E_0 \ssigmin^2}^{-\frac{1}{2}}}
\end{align}
We divide and multiply by $n_0$ and using our bounds on $m$ and $n$ this expression further simplifies,
\vspace{-0.05in}
\begin{align}
    \dist\pr{\hB^+, \hB^*} &\leq \distB \pr{1 - \frac{5}{6}\eta E_0 \ssigmin^2   +  \eta c k\frac{\sqrt{d}}{\sqrt{mn_0 \cdot \frac{n}{n_0}}}}  \pr{1- 15 \eta c k^{\frac{3}{2}} \var^2 \frac{\sqrt{d}}{\sqrt{mn_0 \cdot \frac{n}{n_0}}}- \frac{1}{5}\eta E_0 \ssigmin^2}^{-\frac{1}{2}} \nonumber\\
     &\quad + \eta c\pr{\sqrt{k}+1}\pr{\var^2 +1}\frac{\sqrt{d}}{\sqrt{mn_0 \cdot \frac{n}{n_0}}}  \pr{1- 15 \eta c k^{\frac{3}{2}} \var^2 \frac{\sqrt{d}}{\sqrt{mn_0 \cdot \frac{n}{n_0}}}- \frac{1}{5}\eta E_0 \ssigmin^2}^{-\frac{1}{2}}\\
     & \leq \distB \pr{1-\frac{1}{2}\eta E_0 \ssigmin^2} \pr{1-\frac{1}{2}\eta E_0 \ssigmin^2}^{-\frac{1}{2}} \hspace{-0.1in}+ \pr{\sqrt{\frac{n_0}{4n}}\eta E_0 \ssigmin^2}\pr{1-\frac{1}{2}\eta E_0 \ssigmin^2}^{-\frac{1}{2}}\\
     & \leq \distB \sqrt{\pr{1-\frac{1}{2}\eta E_0 \ssigmin^2}} + \frac{\pr{\frac{1}{2}\eta E_0 \ssigmin^2}}{\sqrt{\frac{n}{n_0}\pr{1-\frac{1}{2}\eta E_0 \ssigmin^2}}} \label{finalcont}
\end{align}
where in the second inequality we used that for our choices of $m$ and $n$ the following inequality holds $15 \eta c k^\frac{3}{2}(1+\sigma^2) \frac{\sqrt{d}}{\sqrt{mn_0} } \leq \frac{1}{10}\eta E_0 \ssigmin^2$.
Taking Union Bound over the total number of iterations $T$ we derive the result.
\end{proof}
\begin{corollary}\label{corollaryone}
Recall that our algorithm starts at stage $0$ with $n_0$ participating clients and doubles the number of participating clients at every subsequent stage. Thus, by slightly abusing notation, we can reformulate the contraction inequality of \cref{convergence} at stage $r$ as follows
\begin{align}
\dist^+& \leq \dist \sqrt{1-a}+\frac{a}{\sqrt{2^r(1-a)}} \quad \textit{with} \quad a \leq \frac{1}{4}
\end{align} 
\end{corollary}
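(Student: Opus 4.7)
The claim is essentially a direct rewriting of the contraction inequality \eqref{contraction_inequality} in \cref{convergence} under the specific node-participation schedule of \cref{alg:3}. The plan is simply to track how the ratio $n/n_0$ evolves across stages and substitute it into the denominator of the error term.

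First I would recall the overall structure of \cref{alg:3}: the algorithm proceeds in stages indexed by $r = 0, 1, 2, \ldots, \log(N/n_0)$, and at stage $r$ the number of participating clients is exactly $n_r = 2^r n_0$ (since we initialize $n \gets n_0$ and execute $n \gets \min\{N, 2n\}$ at the end of each stage). Throughout any single round $t$ within stage $r$, the quantity $n$ appearing on the right-hand side of \cref{convergence} is therefore $n = 2^r n_0$, so $n/n_0 = 2^r$.

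Next, by \cref{convergence} applied at any round within stage $r$, we have
\begin{equation*}
\dist\pr{\B^{t+1},\B^*} \leq \dist\pr{\B^{t},\B^*}\sqrt{1-a} + \frac{a}{\sqrt{\frac{n}{n_0}\pr{1-a}}},
\end{equation*}
with $a = \tfrac{1}{2}\eta E_0 \ssigmin^2 \leq \tfrac{1}{4}$ (this bound on $a$ is inherited verbatim from the hypothesis $\eta \leq 1/(8\ssigmax^2)$ together with $E_0 \leq 1$ and $\ssigmin \leq \ssigmax$). Substituting $n/n_0 = 2^r$ into the denominator and using the abbreviations $\dist^+ := \dist(\B^{t+1},\B^*)$ and $\dist := \dist(\B^t, \B^*)$ yields exactly the stated inequality
\begin{equation*}
\dist^+ \leq \dist \cdot \sqrt{1-a} + \frac{a}{\sqrt{2^r(1-a)}}.
\end{equation*}

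There is no real obstacle here; the corollary is a notational specialization of \cref{convergence} to the doubling schedule, and the only thing to verify is that the assumptions of \cref{convergence} (sample-size requirement, node-count lower bound $n \geq n_0$, and step-size condition) are preserved across all stages. The node-count bound $n_r = 2^r n_0 \geq n_0$ is immediate, the batch size $m$ is chosen once at the outset to satisfy the bound in \cref{convergence} uniformly in $n \in [n_0, N]$, and the step size $\eta \leq 1/(8\ssigmax^2)$ is fixed throughout. Hence the contraction applies at every round of every stage, and the rewriting is valid.
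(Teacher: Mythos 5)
Your proposal is correct and matches the paper's (implicit) justification exactly: the paper offers no separate proof of \cref{corollaryone}, treating it as a direct notational specialization of \cref{convergence} obtained by substituting $n/n_0 = 2^r$ under the doubling schedule, which is precisely what you do. Your added check that the hypotheses of \cref{convergence} (batch size chosen uniformly over $n \in [n_0, N]$, fixed step size, $n_r \geq n_0$) persist across stages is a reasonable bit of extra care, not a deviation.
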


\section{Appendix}
In the second part of our analysis we compute the expected `Wall Clock Time' of our proposed method and compare it to corresponding the `Wall Clock Time' of straggler-prone \texttt{FedRep}. We prove that when the computational speeds are drawn from the exponential distribution with parameter $\lambda$ and the communication cost is given by $\mC = c\frac{1}{\lambda}$, (for some constant $c$), then \texttt{SRPFL} guarantees a logarithmic speedup.  
Recall that in \cref{corollaryone} we get the following simplified version of the contraction inequality
\begin{align}\label{conttwo}
\dist^+& \leq \dist \sqrt{1-a}+\frac{a}{\sqrt{2^r(1-a)}}\quad \textit{with} \quad a \leq \frac{1}{4}. 
\end{align}
For the rest of this section w.l.g.\ we assume that the clients are re-indexed at every stage so that the expected computation times maintain a decreasing ordering i.e. $\forall r \quad \Ex{\mT^r_{1}}\leq \Ex{\mT^r_{2 }}\leq ..., \leq \Ex{\mT^r_{N }}$. For simplicity henceforth we drop the stage index $r$.
Notice that the decreasing ordering of the computation times in combination with \eqref{conttwo} imply that \texttt{SRPFL} initially benefits by including only few fast nodes in the training procedure. However, as the distance diminishes the improvement guaranteed by the contraction inequality becomes negligible and thus our method benefits by including slower nodes, thus decreasing the second term of the r.h.s. of \eqref{conttwo}.

Let us denote by $X_i$ the maximum distance for which the optimal number of participating nodes (for \texttt{SRPFL})  is $n_0 \cdot 2^i$. This definition immediately implies that $X_0=+ \infty$. 
To compute each $X_i$ we turn our attention on measuring the progress per unit of time achieved by \texttt{SRPFL}, when $2^r \cdot n_0$ nodes are utilized. This ratio at stage $r$ can be expressed as
\begin{align} 
\frac{\dist^+ - \dist \sqrt{1-a}-\frac{a}{\sqrt{2^r(1-a)}}}{{\Ex{\mT_{n_02^r}}+\mC}}\label{ratio1}.
\end{align}
Notice that by \eqref{conttwo} the nominator captures the progress per round while the algorithm incurs $\Ex{\mT_{n_02^r}}$ computation and $\mC$ communication cost.
Similarly the ration when $2^{r+1}\cdot n_0$ nodes are used is given by 
\begin{align}
\frac{\dist^+ - \dist \sqrt{1-a}-\frac{a}{\sqrt{2^{r+1}(1-a)}}}{{\Ex{\mT_{n_02^{r+1}}}+\mC}}\label{ratio2}.
\end{align}
Based on the above inequalities we can now compute the optimal doubling points (in terms of distance) and thus the values of $X_i$'s. Subsequently, we compute the number of iterations \texttt{SRPFL} spends in every stage.
\begin{lemma}
For all $i$ let $X_i$ denote the maximum distance for which the optimal number of nodes for \texttt{SRPFL} is $n_0 \cdot 2^i$. Then the following holds
\begin{align}
    \forall i>0  \quad X_i&= \frac{a}{\sqrt{2^r(1-a)}1-\sqrt{1-a})} \pr{1+\frac{(\TT{r}+\mC)\pr{1-\frac{1}{\sqrt{2}}}}{\TT{r+1}-\TT{r}}}\label{Xbound}\\
    X_0&= + \infty \nonumber
\end{align}
Further \texttt{SRPFL} spends at each stage $r$ at most $t^r$ communication rounds such that
\begin{align}
      t^r\geq \frac{2\log\pr{\frac{\sqrt{2}\pr{\TT{r+1}-\TT{r}}}{\TT{r}-\TT{r-1}}}}
    {\log(\frac{1}{1-a})}.
\end{align} 
\end{lemma}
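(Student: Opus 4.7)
The plan is to derive the formula for $X_r$ by equating the two progress-per-time ratios \eqref{ratio1} and \eqref{ratio2}, and then to bound $t^r$ by iterating the affine contraction \eqref{conttwo}. Interpreting the numerator of each ratio as the progress made in one round against the contraction bound, I would set the achievable progress rates equal at the threshold distance, namely
\begin{equation*}
\frac{\dist\,(1-\sqrt{1-a}) - \frac{a}{\sqrt{2^r(1-a)}}}{\TT{r}+\mC} \;=\; \frac{\dist\,(1-\sqrt{1-a}) - \frac{a}{\sqrt{2^{r+1}(1-a)}}}{\TT{r+1}+\mC}.
\end{equation*}
Cross-multiplying, using $\frac{a}{\sqrt{2^{r+1}(1-a)}}=\frac{1}{\sqrt{2}}\cdot\frac{a}{\sqrt{2^{r}(1-a)}}$, and rewriting $(\TT{r+1}+\mC)-\frac{1}{\sqrt{2}}(\TT{r}+\mC)=(\TT{r+1}-\TT{r})+(\TT{r}+\mC)(1-1/\sqrt{2})$, I would solve for $\dist$ and divide by $\TT{r+1}-\TT{r}$ to recover exactly the stated form of \eqref{Xbound}. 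The edge case $X_0=+\infty$ is immediate: stage $0$ has no cheaper-per-time predecessor to compare against.

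Turning to the bound on $t^r$, let $b_r:=\frac{a}{\sqrt{2^r(1-a)}}$ and $d^*_r:=\frac{b_r}{1-\sqrt{1-a}}$ denote the per-round noise level at stage $r$ and the corresponding fixed point. Subtracting $d^*_r$ from both sides of \eqref{conttwo} gives $\dist^+-d^*_r\leq \sqrt{1-a}\,(\dist-d^*_r)$, which iterates to $\dist_t-d^*_r\leq (1-a)^{t/2}(\dist_0-d^*_r)$. Since stage $r$ is entered at the threshold $X_{r-1}$ (at which point $2^r n_0$ nodes first become preferable to $2^{r-1} n_0$) and terminates once the distance drops below $X_r$, imposing $(1-a)^{t^r/2}(X_{r-1}-d^*_r)\leq X_r-d^*_r$ and taking logarithms yields
\begin{equation*}
t^r\;\geq\; \frac{2\log\!\pr{\frac{X_{r-1}-d^*_r}{X_r-d^*_r}}}{\log(1/(1-a))}.
\end{equation*}

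The remaining task, and the only genuinely fiddly one, is to simplify this ratio to the stated form. Substituting \eqref{Xbound}, a short computation gives $X_r-d^*_r=\frac{b_r}{1-\sqrt{1-a}}\cdot\frac{(\TT{r}+\mC)(1-1/\sqrt{2})}{\TT{r+1}-\TT{r}}$; using $b_{r-1}=\sqrt{2}\,b_r$ the analogous manipulation gives $X_{r-1}-d^*_r=\frac{b_r}{1-\sqrt{1-a}}\cdot\frac{(\TT{r}+\mC)(\sqrt{2}-1)}{\TT{r}-\TT{r-1}}$. The common prefactor $\frac{b_r(\TT{r}+\mC)}{1-\sqrt{1-a}}$ cancels, and the identity $(\sqrt{2}-1)/(1-1/\sqrt{2})=\sqrt{2}$ collapses the ratio to $\sqrt{2}(\TT{r+1}-\TT{r})/(\TT{r}-\TT{r-1})$, which is exactly the quantity inside the logarithm in the statement. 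I expect the main obstacle to be strictly algebraic, namely keeping the various $\sqrt{2}$ and $1-1/\sqrt{2}$ factors consistent so that both prefactor cancellations occur cleanly; the structural picture, equal-rates thresholds and affine-contraction convergence time, is transparent from the setup above.
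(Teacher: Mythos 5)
Your proposal is correct and follows essentially the same route as the paper: equating the progress-per-unit-time ratios at the transition distance to obtain the threshold formula, then iterating the affine contraction between consecutive thresholds and simplifying the resulting ratio (your fixed-point subtraction is just a cleaner packaging of the paper's geometric-series computation, and your key cancellation $(\sqrt{2}-1)/(1-1/\sqrt{2})=\sqrt{2}$ is exactly the step the paper performs). The only discrepancy is a harmless index shift --- the paper has stage $r$ run from $X_r$ down to $X_{r+1}$ with the displayed formula giving $X_{r+1}$ in terms of stage-$r$ quantities, whereas you attach the label $X_r$ to that same transition point --- which does not affect the final bound on $t^r$.
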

\begin{proof}
For each stage $r$ let us compute the point where the transitioning between $2^r \cdot \n_0$ and $2^{r+1} \cdot \n_0$ occurs. That is the distance at which \texttt{SRPFL} benefits by doubling the number of participation nodes to $2^{r+1}$. Thus equating the two ratios in \eqref{ratio1} and \eqref{ratio2} and solving for $X_{r+1}$ we get
\begin{align}
\frac{X_{r+1}-X_{r+1}\sqrt{1-a}-\frac{a}{\sqrt{2^r(1-a)}}}{\Ex{\mT_{n_02^r}}+\mC}&=
\frac{X_{r+1}-X_{r+1}\sqrt{1-a}-\frac{a}{\sqrt{2^{r+1}(1-a)}}}{\Ex{\mT_{n_02^{r+1}}}+\mC}\\
 X_{r+1}\pr{\Ex{\mT_{n_02^{r+1}}}-\Ex{\mT_{n_02^r}}}&=
\frac{a}{\sqrt{2^r(1-a)}(1-\sqrt{1-a})}\pr{\Ex{\mT_{n_02^{r+1}}}-\frac{1}{\sqrt{2}}\Ex{\mT_{n_02^r}}+\mC\pr{1-\frac{1}{\sqrt{2}}}}\\
 X_{r+1}&=
\frac{a}{\sqrt{2^r(1-a)}(1-\sqrt{1-a})} \pr{1+\frac{(\Ex{\mT_{n_02^r}}+\mC)\pr{1-\frac{1}{\sqrt{2}}}}{\Ex{\mT_{n_0 2^{r+1}}}-\Ex{\mT_{n_02^{r}}}}}
\end{align}
Let us now compute the number of rounds $t^r$ (henceforth denoted by $t$) required in stage $r$. That is the minimum number of iterations that \texttt{SRPFL} needs to decrease the distance from $X_r$ to $X_{r+1}$ using only the $n_02^r$ fastest participating nodes. Thus, starting off at $X_r$ and following \eqref{conttwo} for $t$ rounds we have
\begin{align}
    X_r^t \leq X_r(\sqrt{1-a})^t+\sum_{i=0}^{t-1}\frac{a}{\sqrt{2^r(1-a)}}(\sqrt{1-a})^i
\end{align}
As stated above we want to find the minimum number of rounds such that we reach the next doubling point i.e. we want $t$ large enough such that
\begin{align}
    X_{r+1}&\geq X_r(\sqrt{1-a})^t+\sum_{i=0}^{t-1}\frac{a}{\sqrt{2^r(1-a)}}(\sqrt{1-a})^i\\
    &\geq X_r(\sqrt{1-a})^t+\frac{a}{\sqrt{2^r(1-a)}}\cdot\frac{1-\sqrt{1-a}^t}{1-\sqrt{1-a}}
\end{align}
where in the last inequality we use geometric series properties. We proceed to solve for $t$ by rearranging and using \eqref{Xbound} and the fact that $X_r > \frac{a}{\sqrt{2^r(1-a)}(1-\sqrt{1-a})}$ ,
\\
\begin{align}
    (\sqrt{1-a})^t& \leq  \frac{\sqrt{2^r(1-a)}(1-\sqrt{1-a})X_{r+1}-a}{\sqrt{2^r(1-a)}(1-\sqrt{1-a})X_r-a}\\
        & \leq \frac{\frac{\pr{\Ex{\mT_{n_02^r}}+\mC}\pr{1-\frac{1}{\sqrt{2}}}}{\Ex{\mT_{n_02^{r+1}}}-\Ex{\mT_{n_02^{r}}}}}
    {\sqrt{2}\pr{\frac{(\Ex{\mT_{n_02^{r-1}}}+\mC)\pr{1-\frac{1}{\sqrt{2}}}}{\Ex{\mT_{n_02^{r}}}-\Ex{\mT_{n_02^{r-1}}}}+1-\frac{1}{\sqrt{2}}}}\\
    & \leq
    \frac{\Ex{\mT_{n_02^{r}}}-\Ex{\mT_{n_02^{r-1}}}}{\Ex{\mT_{n_02^{r+1}}}-\Ex{\mT_{n_02^{r}}}}\\
\end{align}
taking the logarithm on both sides we derive the required amount of rounds
\begin{align}\label{timebound}
    t\geq \frac{2\log\pr{\frac{\sqrt{2}\pr{\Ex{\mT_{n_02^{r+1}}}-\Ex{\mT_{n_02^{r}}}}}{\Ex{\mT_{n_02^{r}}}-\Ex{\mT_{n_02^{r-1}}}}}}
    {\log(\frac{1}{1-a})}\\
\end{align}

\end{proof}
The following lemmas compute the `Wall Clock Time' that \texttt{SRPFL} and \texttt{FedRep} require in order to achieve target accuracy $\epsilon$. As discussed in \cref{Theoretical_Results} for fair comparison we consider accuracy of the form 
\begin{align}\label{targetacc}
    \epsilon = \hat{c} \frac{\alpha}{\sqrt{\frac{N}{n_0}\pr{1-\alpha}}\pr{1-\sqrt{1-\alpha}}}, \qquad \textit{with} \qquad \sqrt{2}>\hat{c}>1.
\end{align}
When $\hat{c}$ takes values close to $\sqrt{2}$ we expect \texttt{SRPFL} to vastly outperform \texttt{FedRep} and as $\hat{c}$ takes values close to $1$ the performance gap diminishes.
\begin{lemma}\label{WCTLemma1}
Suppose at each stage the client's computational times are i.i.d. random variables drawn from the exponential distribution with parameter $\lambda$. Further, suppose that the expected communication cost per round is $\mathcal{C}=c\frac{1}{\lambda}$, for some constant $c$. Finally, consider target accuracy $\epsilon$ given in \eqref{accuracy}. Then the expected `Wall Clock Time' for \texttt{SRPFL} is upper bounded as follows 
\begin{align}
   \Ex{T_{\textit{SRPFL}}} \leq \log N \pr{\frac{6(c+1)+4\log (\frac{1}{\hat{c}-1})}{\log (\frac{1}{1-a})}}\frac{1}{\lambda}
\end{align}
\end{lemma}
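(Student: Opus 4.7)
My plan is to expand
\begin{equation*}
\mathbb{E}[T_{\texttt{SRPFL}}] \;=\; \sum_{r=0}^{R}\tau_r\bigl(\mathbb{E}[\mathcal{T}_{n_0 2^r}]+\mathcal{C}\bigr), \qquad R := \log(N/n_0),
\end{equation*}
and to bound each factor using the per-stage round count from the preceding lemma together with the classical order-statistic formula for exponentials.

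First I would invoke $\mathbb{E}[\mathcal{T}_{(k)}] = \lambda^{-1}(H_N-H_{N-k})$, valid for i.i.d.\ $\mathrm{Exp}(\lambda)$ variables. This yields $\mathbb{E}[\mathcal{T}_{n_0 2^r}]+\mathcal{C} \leq (\log 2 + c)/\lambda$ for every $r\leq R-1$ (since $n_0 2^r \leq N/2$ gives $H_N-H_{N-n_0 2^r}\leq \log 2$) and $\mathbb{E}[\mathcal{T}_N]+\mathcal{C}=(H_N+c)/\lambda \leq (\log N+c)/\lambda$ at the final stage. Plugging the same identity into the ratio $\Delta_r/\Delta_{r-1}$ that appears in the preceding lemma's round-count bound gives
\begin{equation*}
\frac{\Delta_r}{\Delta_{r-1}} \;=\; \frac{\log\!\bigl((N-n_0 2^r)/(N-n_0 2^{r+1})\bigr)}{\log\!\bigl((N-n_0 2^{r-1})/(N-n_0 2^r)\bigr)},
\end{equation*}
and a linearisation of $\log(1+x)$ for $n_0 2^{r+1}\ll N$ shows this ratio is $O(1)$ on stages $r<R-1$ and at most $O(\log N)$ at $r=R-1$. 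Hence $\tau_r = O(1/\log(1/(1-a)))$ for $r<R-1$ and $\tau_{R-1} = O(\log\log N /\log(1/(1-a)))$, so that $\sum_{r=0}^{R-1}\tau_r = O(\log N /\log(1/(1-a)))$.

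Second, I would handle the final stage separately by invoking the contraction \eqref{conttwo} directly. Let $d_N^* := a/[\sqrt{(N/n_0)(1-a)}(1-\sqrt{1-a})]$ be the fixed point of the stage-$R$ contraction, so that $\epsilon = \hat{c}\,d_N^*$. Using the formula for $X_R$ from the preceding lemma together with the order-statistic estimates above gives $X_R \leq \sqrt{2}\,d_N^*(1+o(1))$. Setting $\delta_t := \mathrm{dist}^t - d_N^*$ converts the contraction into the clean recursion $\delta_{t+1}\leq \sqrt{1-a}\,\delta_t$; therefore $\tau_R$ rounds suffice to reach $\epsilon$ provided $(\sqrt{1-a})^{\tau_R}(\sqrt{2}-1)\leq \hat{c}-1$, yielding $\tau_R \leq 2\log((\sqrt{2}-1)/(\hat{c}-1))/\log(1/(1-a))$.

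Combining the two regimes, the non-final stages contribute $\sum_{r=0}^{R-1}\tau_r\cdot(\log 2+c)/\lambda = O(\log N\cdot(c+1)/[\lambda\log(1/(1-a))])$, while the final stage contributes $\tau_R\cdot(\log N+c)/\lambda = O(\log(1/(\hat{c}-1))\cdot(\log N+c)/[\lambda\log(1/(1-a))])$. Adding these and tracking absolute constants produces the stated bound with coefficient $6(c+1)+4\log(1/(\hat{c}-1))$. The main obstacle will be the precise constant bookkeeping, specifically verifying that the $O(\log\log N)$ bulge in $\tau_{R-1}$ is absorbed by the small $(\log 2+c)/\lambda$ compute/communication cost at that stage, and that the bound $X_R\leq\sqrt{2}\,d_N^*(1+o(1))$ is tight enough that the last-stage count stays at $O(\log(1/(\hat{c}-1)))$ rather than degrading further.
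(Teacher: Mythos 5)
Your proposal is correct and follows essentially the same route as the paper: both decompose the cost into the pre-final stages plus the final full-participation stage, use the harmonic-number form of exponential order statistics to show the consecutive gaps $\Ex{\mT_{n_02^{r+1}}}-\Ex{\mT_{n_02^{r}}}$ are $\Theta(1/\lambda)$ except for the last one of order $\log N/\lambda$ (so per-stage round counts are $O(1)$ with a single $O(\log\log N)$ bulge), and handle the final stage by running the contraction from $X_{\log(N/n_0)}\approx\sqrt{2}\,d_N^*$ down to $\epsilon=\hat c\,d_N^*$, yielding $O(\log(1/(\hat c-1))/\log(1/(1-a)))$ rounds. The only differences are cosmetic bookkeeping (the paper bounds $\sum_r\Ex{\mT_{n_02^r}}\leq 1/\lambda$ by a geometric series rather than $\log 2/\lambda$ per stage, and absorbs the $\sqrt{2}-1$ factor rather than keeping it inside the logarithm).
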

\begin{proof}
First we upper bound the expected cost suffered by our method until the distance between the current representation and the optimal representation becomes smaller than $X_{\log (\nicefrac{N}{n_0})}$, i.e. the cost corresponding to the first $\log\pr{\frac{N}{2n_0}}$ stages of \texttt{SRPFL} (denoted by $\Ex{T_{\textit{SRPFL}}}$) .

    \begin{align}
        \Ex{T^1_{\textit{SRPFL}}}  &= \sum_{i=1}^{\log (\frac{N}{2n_0})} t^i \pr{\TT{i} + \mC}\\
        &\leq \sum_{i=1}^{\log (\frac{N}{2n_0})}2\pr{\Ex{\mT_{n_02^i}} + \mC}\cdot \frac{\log\pr{\frac{\sqrt{2}\pr{\Ex{\mT_{n_02^{i+1}}}-\Ex{\mT_{n_02^{i}}}}}{\Ex{\mT_{n_02^{i}}}-\Ex{\mT_{n_02^{i-1}}}}}}
    {\log(\frac{1}{1-a})}\\
    &\leq 
    \sum_{i=1}^{\log (\frac{N}{4n_0})}2\pr{\Ex{\mT_{n_0 \cdot 2^i}}+\mC}\frac{\log\pr{\frac{\sqrt{2}(\Ex{\mT_{\frac{N}{2}}}-\Ex{\mT_{\frac{N}{4}}})}{\Ex{\mT_{\frac{N}{4}}}-\Ex{\mT_{\frac{N}{8}}}}}}
    {\log(\frac{1}{1-a})}\nonumber \\
    &\quad +
    2(\Ex{\mT_{\nicefrac{N}{2}}}+\mC)\frac{\log\pr{\frac{\sqrt{2}(\Ex{\mT_{N}}-\Ex{\mT_{\frac{N}{2}}})}{\Ex{\mT_{\frac{N}{2}}}-\Ex{\mT_{\frac{N}{4}}}}}}
    {\log(\frac{1}{1-a})},\label{midway}
    \end{align}
where we used \ref{timebound}.
Since the computational times of the clients come from the exponential distribution it is straightforward to derive the following bounds
\begin{align}
    &\Ex{\mT_N}-\Ex{\mT_{\nicefrac{N}{2}}}=\frac{1}{\lambda}\sum_{i=1}^{\nicefrac{N}{2}}\frac{1}{i}
    \leq \frac{1}{\lambda}\pr{\ln(\nicefrac{N}{2})+1} \leq \frac{1}{\lambda}\log (N)\\
     &\Ex{\mT_{\nicefrac{N}{2}}}-\Ex{\mT_{\nicefrac{N}{4}}}=\frac{1}{\lambda}\pr{\frac{1}{\nicefrac{N}{2}+1}+\frac{1}{\nicefrac{N}{2}+2}+...+\frac{1}{\nicefrac{3N}{4}}}\geq \frac{1}{\lambda}\cdot\frac{N}{4}\cdot\frac{4}{3N}= \frac{1}{3\lambda}\\
     &\Ex{\mT_{\nicefrac{N}{2}}}-\Ex{\mT_{\nicefrac{N}{4}}} \leq \frac{1}{\lambda}\cdot\frac{N}{4}\cdot\frac{2}{N}= \frac{1}{2\lambda}\\
      &\Ex{\mT_{\nicefrac{N}{4}}}-\Ex{\mT_{\nicefrac{N}{8}}}=
      \frac{1}{\lambda}\pr{\frac{1}{\nicefrac{3N}{4}+1}+\frac{1}{\nicefrac{3N}{4}+2}+...+\frac{1}{\nicefrac{7N}{8}}}\geq\frac{1}{\lambda}\cdot\frac{N}{8}\cdot\frac{4}{3N}= \frac{1}{6\lambda}
\end{align}
Making use of the above bounds the expression in \ref{midway} further simplifies to
\begin{align}
    &\leq 
    2\sum_{i=1}^{\log \pr{\frac{N}{4n_0}}}
    \br{\pr{\Ex{\mT_{n_0 \cdot 2^i}}+\mC}\frac{\log (3\sqrt{2})}{\log (\frac{1}{1-a})}}
    +
    2\pr{\Ex{\mT_{\nicefrac{N}{2}}}+ \mC} \frac{\log (3\sqrt{2}\log(N))}{\log (\frac{1}{1-a})}\\
    &\leq
    \frac{5}{\log (\frac{1}{1-a})}\pr{\sum_{i=1}^{\log \pr{\frac{N}{4n_0}}}\Ex{\mT_{n_0 \cdot 2^i}}+\log\pr{\frac{N}{2n_0}}\cdot \mC}
    +
    2\frac{\log (3\sqrt{2}\log(N))}{\log (\frac{1}{1-a})}(\Ex{\mT_{\nicefrac{N}{2}}}+\mC)\label{beforesubstitution}
\end{align}
Further, notice that
\begin{align}\label{boundhalf}
 \Ex{\mT_{\nicefrac{N}{2}}} = \frac{1}{\lambda}\sum\limits^{\nicefrac{N}{2}}_{i=1} \frac{1}{\nicefrac{N}{2}+i}\leq \frac{1}{\lambda},
\end{align}
and similarly 
$\Ex{\mT_{\nicefrac{N}{4}}} \leq \frac{1}{\lambda}\cdot \frac{1}{3}, \quad \Ex{\mT_{\nicefrac{N}{8}}} \leq \frac{1}{\lambda}\cdot \frac{1}{7}, \quad \Ex{\mT_{\nicefrac{N}{16}}} \leq \frac{1}{\lambda}\cdot \frac{1}{15}$ and so on. Thus,
\begin{align}\label{boundsum}
    \sum_{i=1}^{\log \pr{\frac{N}{4n_0}}}\Ex{\mT_{n_0 \cdot 2^i}} = \Ex{2n_0} + \Ex{4n_0}+...+ \Ex{\nicefrac{N}{4}} \leq \frac{1}{\lambda}\sum^\infty_{i=1}\frac{1}{2^i}\leq \frac{1}{\lambda}
\end{align}
Combining the bounds from \ref{boundhalf} and \ref{boundsum} and substituting $\mC = c\frac{1}{\lambda}$ in expression \ref{beforesubstitution} we derive the following bound
\begin{align}
    \Ex{T^1_{\textit{SRPFL}}}  &\leq \frac{5}{\log\pr{\frac{1}{1-a}}}\pr{c\log(\nicefrac{N}{2n_0})+1}\frac{1}{\lambda} + \frac{2\log(3\sqrt{2}\log(N))}{\log\pr{\frac{1}{1-a}}(c+1)}\frac{1}{\lambda}\\
    &\leq \log(\nicefrac{N}{n_0}) \frac{6(c+1)}{\log\pr{\frac{1}{1-a}}}\cdot \frac{1}{\lambda}\label{costpart1}
\end{align}
Having derived an upper bound on the cost suffered by \texttt{SRPFL} on the first $\log\pr{\frac{N}{2n_0}}$ stages we now turn our attention on bounding the cost incurred from $X_{\log (\frac{N}{n_0})}$ until the target accuracy $\epsilon$ is achieved (denoted by $\Ex{T^2_{\textit{SRPFL}}}$ ). Recall that 
\begin{align}
    X_{\log ( \nicefrac{N}{n_0})}=\frac{a}{\sqrt{\frac{N}{2n_0}(1-a)}(1-\sqrt{1-a})}\pr{1+\frac{(\Ex{\mT_{\nicefrac{N}{2}}}+\mC)(1-\frac{1}{\sqrt{2}})}{\Ex{\mT_{N}}-\Ex{\mT_{\nicefrac{N}{2}}}}}\label{breakpoint}
\end{align}
and further during the last stage of \texttt{SRPFL}, $N$ clients are utilized deriving the following form in the contractions inequality from \eqref{conttwo}
\begin{align}\label{contfinal}
\dist^+& \leq \dist \sqrt{1-a}+\frac{a}{\sqrt{\frac{N}{n_0}(1-a)}}\quad \textit{with} \quad a \leq \frac{1}{4}. 
\end{align}
We first compute the number of rounds required in this second phase of the algorithm. Starting with distance $ X_{\log ( \nicefrac{N}{n_0})}$ and following the contraction in \eqref{contfinal} for $t$ rounds, we derive current distance at most
\begin{align}
    &X_{\log \pr{\frac{N}{n_0}}}\cdot(\sqrt{1-a})^t + \sum_{i=0}^{t-1}\frac{a}{\sqrt{\frac{N}{n_0}(1-a)}}(\sqrt{1-a})^i\\
    &\quad=
     X_{\log \pr{\frac{N}{n_0}}}\cdot(\sqrt{1-a})^t + \frac{a}{\sqrt{\frac{N}{n_0}(1-a)}}\frac{1-\sqrt{1-a}^t}{1-\sqrt{1-a}}\\
     &\quad=
     \frac{a}{\sqrt{\frac{N}{n_0}(1-a)(1-\sqrt{1-a})}}\pr{\sqrt{2}\pr{1+\frac{(\Ex{\mT_{\nicefrac{N}{2}}}+\mC)(1-\frac{1}{\sqrt{2}})}{\Ex{\mT_N}-\Ex{\mT_{\nicefrac{N}{2}}}}}(\sqrt{1-a}^t)+(1-\sqrt{1-a}^t)}\label{midwayn2}
\end{align}
where in the first equality we use geometric series properties and in the second we substitute according to \eqref{breakpoint}. Using the fact that $\sqrt{2}(\Ex{\mT\nicefrac{N}{2}}+\mC)(1-\frac{1}{\sqrt{2}})\leq \frac{1}{\lambda}(c+1)(\sqrt{2}-1)$ and $\Ex{\mT_N}-\Ex{\mT{\nicefrac{N}{2}}}\leq \frac{1}{\lambda}\log N$ expression \eqref{midwayn2} is upper bounded by
\begin{align}
     &\leq
     \frac{a}{\sqrt{\frac{N}{n_0}(1-a)(1-\sqrt{1-a})}}\pr{\pr{\sqrt{2}+\frac{(c+1)(\sqrt{2}-1)}{\log N}}(\sqrt{1-a}^t)+(1-\sqrt{1-a}^t)}\\
     &\leq \frac{a}{\sqrt{\frac{N}{n_0}(1-a)}(1-\sqrt{1-a})}+\frac{a}{\sqrt{\frac{N}{n_0}(1-a)}(1-\sqrt{1-a})}\cdot\sqrt{1-a}^t
\end{align}
The above implies that the number of rounds in this second phase is going to be the smallest $t$ that guarantees that the target accuracy has been achieved that is 
\begin{align}
    \epsilon \geq \frac{a}{\sqrt{\frac{N}{n_0}(1-a)}(1-\sqrt{1-a})}+\frac{a}{\sqrt{\frac{N}{n_0}(1-a)}(1-\sqrt{1-a})}\cdot\sqrt{1-a}^t
\end{align}
Further recall that from \eqref{targetacc} the accuracy can be expressed in terms of 
\begin{align}
    \epsilon = \hat{c} \frac{\alpha}{\sqrt{\frac{N}{n_0}\pr{1-\alpha}}\pr{1-\sqrt{1-\alpha}}}, \qquad \textit{with} \qquad \sqrt{2}>\hat{c}>1.
\end{align}
Combining the above and solving for $t$ we derive the required number of rounds for the second phase
\begin{align}
    t \geq \frac{2\log (\frac{1}{\hat{c}-1})}{\log (\frac{1}{1-a})}
\end{align}
The expected cost during phase $2$ can be computed as follows
\begin{align}
    \Ex{T^2_{\textit{SRPFL}}}  \leq &(\Ex{\mT_N}+\mC)\pr{\frac{2\log (\frac{1}{\hat{c}-1})}{\log (\frac{1}{1-a})}+1}\\
    \leq &(\ln (N)+1+c)\frac{1}{\lambda}\pr{\frac{2\log (\frac{1}{\hat{c}-1})}{\log (\frac{1}{1-a})}+1}\\
     \leq &4\log (N)\pr{\frac{\log (\frac{1}{\hat{c}-1})}{\log (\frac{1}{1-a}}}\frac{1}{\lambda}\\
     \leq &\log (N) \cdot\frac{4}{\log (\frac{1}{1-a})}\log \pr{\frac{1}{\hat{c}-1}}\frac{1}{\lambda}
\end{align}
Summing the two quantities of interest we can derive the promised upper bound on the expected `Wall Clock Time' of \texttt{SRPFL}. 
\begin{align}
    \Ex{T_{\textit{SRPFL}}} &= \Ex{T^1_{\textit{SRPFL}}} + \Ex{T^2_{\textit{SRPFL}}}\\
   &\leq  \log(\nicefrac{N}{n_0}) \frac{6(c+1)}{\log\pr{\frac{1}{1-a}}}\cdot \frac{1}{\lambda}+\log (N) \cdot\frac{4}{\log (\frac{1}{1-a})}\log \pr{\frac{1}{\hat{c}-1}}\frac{1}{\lambda}\\
    &\leq\log N \pr{\frac{6(c+1)+4\log (\frac{1}{\hat{c}-1})}{\log (\frac{1}{1-a})}}\frac{1}{\lambda}
\end{align}
\end{proof}
Having computed an upper bound on the expected `Wall Clock Time' of \texttt{SRPFL} we proceed to compute an lower bound on the expected `Wall Clock Time' of \texttt{FedRep}.
\begin{lemma}\label{WCTLemma2}
Suppose at each stage the client's computational times are i.i.d. random variables drawn from the exponential distribution with parameter $\lambda$. Further, suppose that the expected communication cost per round is $\mathcal{C}=c\frac{1}{\lambda}$, for some constant $c$. Finally, consider target accuracy $\epsilon$ given in \eqref{accuracy}. Then the expected `Wall Clock Time' for \texttt{FedRep} is lower bounded as follows 
\begin{align}
   \Ex{T_{\textit{FedRep}}} \geq \log N \pr{\frac{\log N +2\log \pr{\frac{1}{\hat{c}-1}}}{\log\pr{\frac{1}{1-a}}}} \frac{1}{\lambda}
\end{align}
\end{lemma}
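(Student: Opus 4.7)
The plan is to mirror the phase-2 analysis of Lemma \ref{WCTLemma1}, but applied from the initial representation rather than from a mid-stage breakpoint and with the full set of $N$ clients participating at every round, since \texttt{FedRep} uses no doubling scheme. First, I would specialize the contraction of Theorem \ref{convergence} to $n = N$, which in the notation of \eqref{conttwo} reads
\begin{align*}
    \dist^{+} \leq \dist \cdot \sqrt{1-a} \,+\, \frac{a}{\sqrt{(N/n_0)(1-a)}}.
\end{align*}
Iterating this one-step recursion from the method-of-moments initial distance $D_0 := \dist(\B^0, \B^*)$, which is a constant bounded by $1 - C_M$, and collapsing the resulting geometric series exactly as in the derivation leading to \eqref{midwayn2}, yields after $t$ rounds
\begin{align*}
    D_t \leq D_0 (\sqrt{1-a})^t \,+\, \frac{\epsilon}{\hat{c}}\bigl(1 - (\sqrt{1-a})^t\bigr),
\end{align*}
where the identification $\frac{a}{\sqrt{(N/n_0)(1-a)}(1-\sqrt{1-a})} = \epsilon/\hat{c}$ follows directly from the target accuracy \eqref{targetacc}.

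Next I would solve for the minimum number of rounds $t^{\star}$ for which the right-hand side drops below $\epsilon$. Rearranging $D_t \leq \epsilon$ and noting that $\epsilon/\hat{c} = O(1/\sqrt{N})$ is dominated by the constant $D_0$ gives $(\sqrt{1-a})^{t^{\star}} = \Theta\!\bigl(\epsilon(\hat{c}-1)\bigr)$; taking logarithms and substituting $\log(1/\epsilon) \geq \tfrac{1}{2}\log(N/n_0) - O(1)$, which comes from the explicit form of $\epsilon$ in \eqref{targetacc} (with $a$, $\hat{c}$, $n_0$ treated as constants), yields
\begin{align*}
    t^{\star} \geq \frac{\log N + 2\log\!\bigl(1/(\hat{c}-1)\bigr)}{\log\!\bigl(1/(1-a)\bigr)}.
\end{align*}
For the per-round cost, the maximum of $N$ i.i.d.\ $\mathrm{Exp}(\lambda)$ client times has harmonic-number expectation, so $\Ex{\mT_N} = H_N/\lambda \geq \ln N / \lambda$, and the additive $\mC = c/\lambda$ only strengthens the lower bound. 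Multiplying $t^{\star}$ by $\Ex{\mT_N} + \mC$ then delivers the claimed bound.

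The main obstacle is conceptual rather than computational: Theorem \ref{convergence} is an upper bound on $\dist^{+}$, so using it to produce a lower bound on the number of rounds required is only legitimate if the contraction is essentially tight. I would justify this by observing that the additive $a/\sqrt{(N/n_0)(1-a)}$ term encodes the unavoidable statistical-noise floor for $N$ clients with $m$ finite noisy samples each and coincides with the fixed point of the recursion, so no iterate of the alternating-update scheme can cross this floor faster than the $\sqrt{1-a}$ geometric rate per round. Since the matching upper bound for \texttt{SRPFL} in Lemma \ref{WCTLemma1} was derived by treating this same contraction as effectively an equality, both lemmas sit on equal analytical footing, and their ratio yields the logarithmic speedup of Theorem \ref{SpeedupTh}.
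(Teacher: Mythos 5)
Your proposal follows essentially the same route as the paper's proof: specialize the contraction to $n=N$, unroll it as a geometric series, solve for the smallest $t$ at which the bound reaches $\epsilon$ from \eqref{targetacc} to get $t \geq \bigl(\log N + 2\log(1/(\hat{c}-1))\bigr)/\log(1/(1-a))$, and multiply by $\Ex{\mT_N}+\mC \geq \log N/\lambda$. The only differences are cosmetic (the paper iterates from initial distance $1$ rather than $D_0$), and the tightness concern you raise about turning an upper-bound contraction into a round-count lower bound is equally present, and equally unaddressed, in the paper's own argument.
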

\begin{proof}
First we compute the number of rounds required by \texttt{FedRep} to achieve the target accuracy. Recall that \texttt{FedRep} utilizes $N$ clients at each round deriving the following form of the contractions inequality from \eqref{conttwo}
\begin{align}\label{contfinalFedRep}
\dist^+& \leq \dist \sqrt{1-a}+\frac{a}{\sqrt{\frac{N}{n_0}(1-a)}}\quad \textit{with} \quad a \leq \frac{1}{4}. 
\end{align}
Starting with distance equal $1$ and following the contraction in \eqref{contfinalFedRep} for $t$ rounds, we derive current distance at most
\begin{align}
    (\sqrt{1-a})^t + \sum_{i=0}^{t-1}\frac{a}{\sqrt{\frac{N}{n_0}(1-a)}}(\sqrt{1-a})^i=
    (\sqrt{1-a})^t + \frac{a}{\sqrt{\frac{N}{n_0}(1-a)}}\frac{1-\sqrt{1-a}^t}{1-\sqrt{1-a}},
\end{align}
using the properties of geometric series. Further recall that from \eqref{targetacc} the accuracy can be expressed as 
\begin{align}
    \epsilon = \hat{c} \frac{\alpha}{\sqrt{\frac{N}{n_0}\pr{1-\alpha}}\pr{1-\sqrt{1-\alpha}}}, \qquad \textit{with} \qquad \sqrt{2}>\hat{c}>1.
\end{align}
The above imply that the number of rounds is going to be the smallest $t$ that guarantees that the target accuracy has been achieved that is 
\begin{align}
    \hat{c} \frac{\alpha}{\sqrt{\frac{N}{n_0}\pr{1-\alpha}}\pr{1-\sqrt{1-\alpha}}} \geq (\sqrt{1-a})^t + \frac{a}{\sqrt{\frac{N}{n_0}(1-a)}}\frac{1-\sqrt{1-a}^t}{1-\sqrt{1-a}}
\end{align}
We use the fact that $\sqrt{\frac{N}{n_0}(1-a)}(1-\sqrt{1-a})-a >0$ for $a\leq \nicefrac{1}{4}$ and all reasonable values of $N$ to rearrange and solve for $t$. Thus, we derive
\begin{align}
  t \geq \frac{2\log \pr{\frac{1}{\hat{c}-1}}}{\log\pr{\frac{1}{1-a}}} + \frac{\log N}{\log\pr{\frac{1}{1-a}}}  
\end{align}
Multiplying the number of rounds with a lower bound on the expected cost incurred per round, results in the desired lower bound on the expected `Wall Clock Time' suffered by \texttt{FedRep}:
\begin{align}
    \Ex{T_{\textit{FedRep}}} &\geq  \pr{\Ex{\mT_N}+\mC}\pr{\frac{2\log \pr{\frac{1}{\hat{c}-1}}}{\log\pr{\frac{1}{1-a}}}}\\
    &\geq \log N \pr{\frac{\log N +2\log \pr{\frac{1}{\hat{c}-1}}}{\log\pr{\frac{1}{1-a}}}} \frac{1}{\lambda}
\end{align}
\end{proof}
Combining the results of \cref{WCTLemma1} and \cref{WCTLemma2} we obtain \cref{SpeedupTh}.
\SecondRestatable*
\begin{proof}
\begin{align}
\frac{\Ex{T_{\textit{SRPFL}}}}{\Ex{T_{\textit{FedRep}}}} \leq \frac{6(c+1)+4\log (\frac{1}{\hat{c}-1})}{\log N +2\log \pr{\frac{1}{\hat{c}-1}}} = \OO\pr{\frac{\log \pr{\frac{1}{\hat{c}-1}}}{\log(N)+\log\pr{\frac{1}{\hat{c}-1}}}}
\end{align}
\end{proof}
\begin{remark}
The initialization scheme in \cref{alg:3} guarantees that $\dist{\pr{\B^0,\B^*}}\leq 1-c$, with probability at least $1-\OO\pr{(mn)^{-100}}$, effectively without increasing the overall sample complexity. The formal statement and proof is identical to Theorem $3$ in \cite{collins2021exploiting} and is omitted.  
\end{remark}
\section{More on Experiments} \label{appendix_section_experiment}
\paragraph{Hyperparameters and choice of models.}
We set the hyperparameters mainly following the previous work \cite{collins2021exploiting}.
Specifically, the learning rate is set to $0.01$ and the batch size is fixed to $10$ in all of our experiments.
The number of local epochs is set to $1$ in experiments on CIFAR10 with $N=100$ and is set to $5$ on other datasets.\\
In terms of the choice of the neural network model, for CIFAR10, we use LeNet-5 including two convolution layers with $(64, 64)$ channels and three fully connected layers where the numbers of hidden neurons are $(120, 64)$.
The same structure is used for CIFAR100, but the numbers of channels in the convolution layers are increased to $(64, 128)$ and the numbers of hidden neurons are increased to $(256, 128)$.
Additionally, a dropout layer with parameter $0.6$ is added after the first two fully connected layers, which improves the testing accuracy.
For EMNIST and FEMNIST, we use an MLP including three hidden layers with the number of hidden neurons being $(512, 256, 64)$.\\
For \texttt{SRFRL}, we need to split the neural network model into two parts, the customized head $h_i$ and the common representation $\phi$.
In our experiments, we take the customized head to be the last hidden layer and the rest of the parameters are treated as the common representation.
Note that LG-FedAvg and LG-FLANP have a different head/representation split scheme and that the head is globally shared across all clients while a local version of the representation is maintained on every client.
For all included datasets, i.e. CIFAR10, CIFAR100, EMNIST, and FEMNIST, the common head include the last two fully connected layers and the rest of the layers are treated as the representation part.
\vspace{-0.1in}
\paragraph{Datasets.} We include four datasets in our empirical study: CIFAR10 which consists of 10 classes and a total number of 50,000 training data points, CIFAR100 which consists of 100 classes and the same amount of data points as CIFAR10, and EMNIST (balanced) which consists of 47 classes and 131,600 training data points.
Note that in Figure \ref{fig:experiment_main}, we use the first 10 classes from EMNIST as did in \cite{collins2021exploiting}. 
For FEMNIST, we use the same setting as \cite{collins2021exploiting} except that we allocate each client $150$ data points without using the log-normal distribution as \cite{collins2021exploiting}.
During our training procedure, we perform the data augmentation operations of standard random crop and horizontal flip on the first two datasets and use the input as it is for the last one.
\vspace{-0.1in}
\paragraph{Results under the configuration of random computation speed.}
We present results under the second type of system heterogeneity, random computation speed, in Figure \ref{fig:cifar-full-random}. The details about this type of system heterogeneity are described in the experiment section of the main body.
\begin{figure*}
    \centering
    \begin{tabular}{c@{} c@{} c@{} c}
        \includegraphics[width=.24\textwidth]{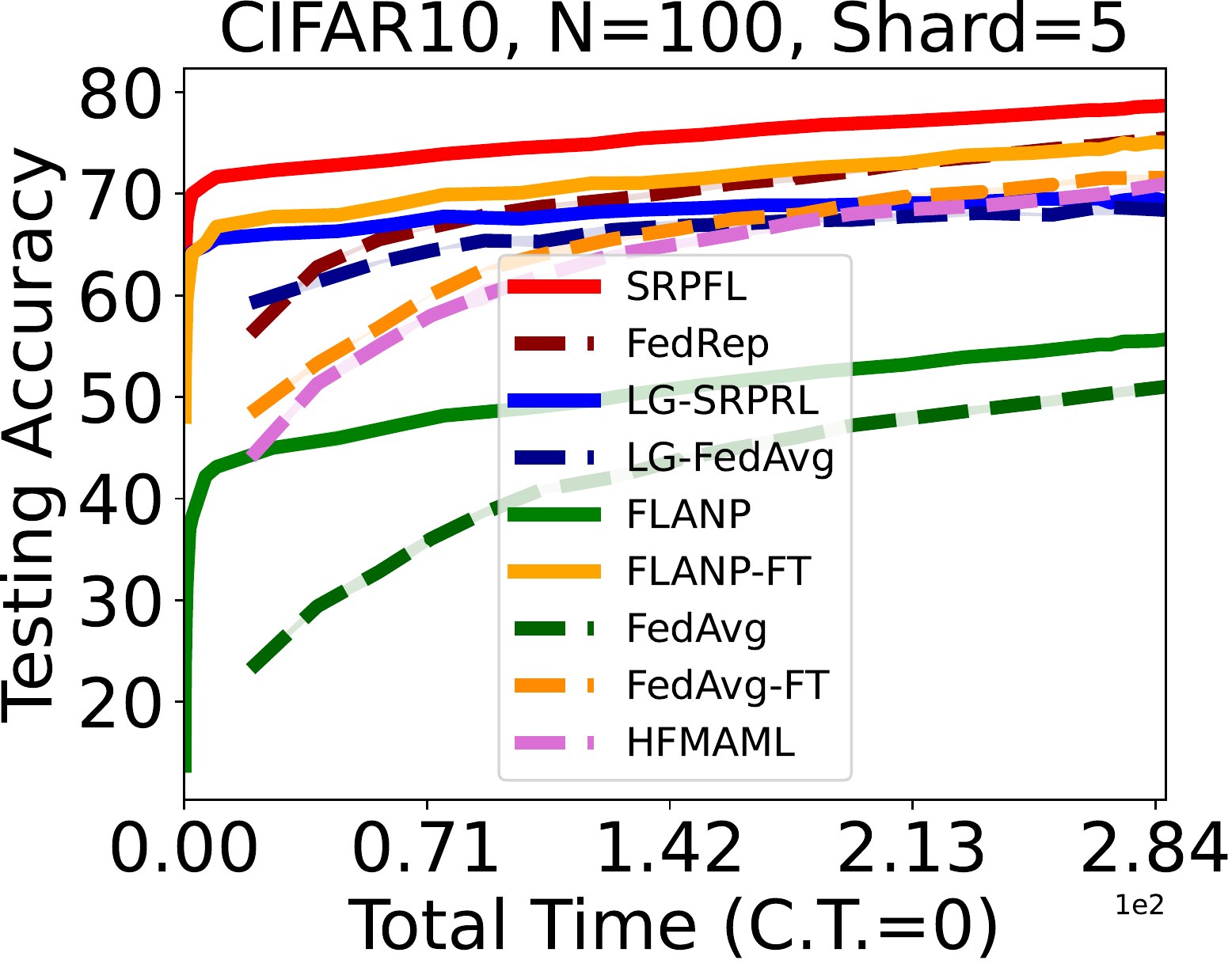} & \includegraphics[width=.24\textwidth]{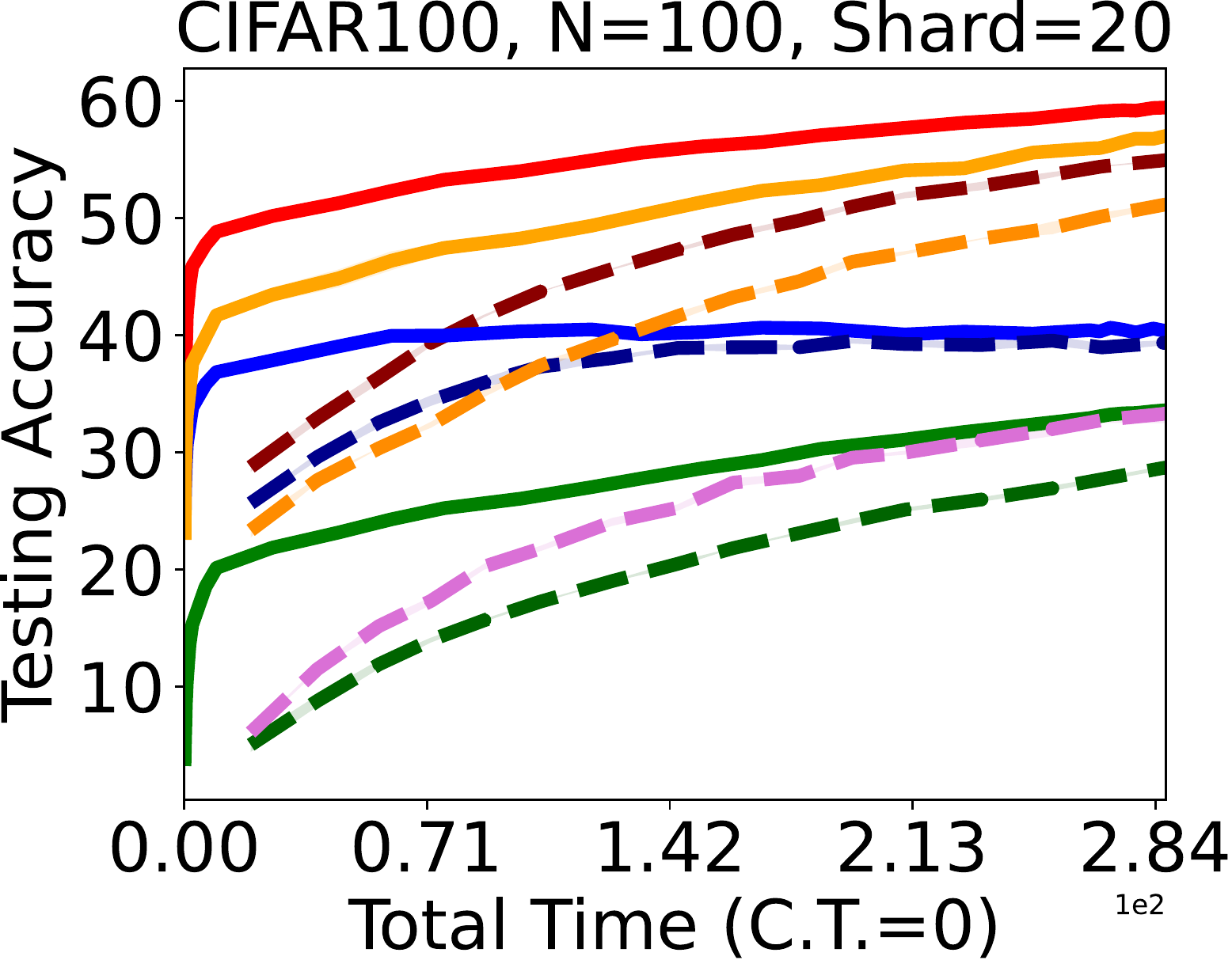} & \includegraphics[width=.24\textwidth]{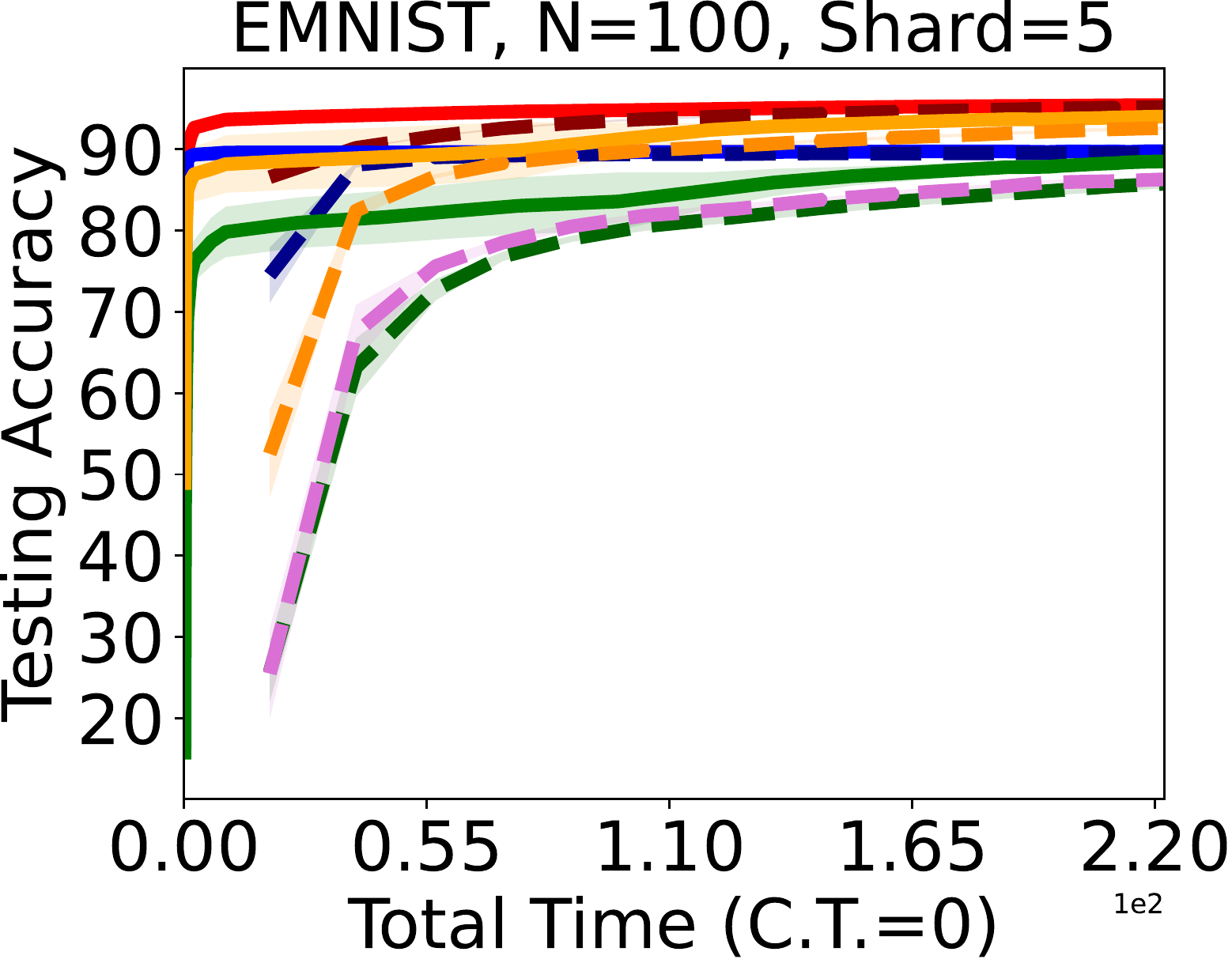} & \includegraphics[width=.24\textwidth]{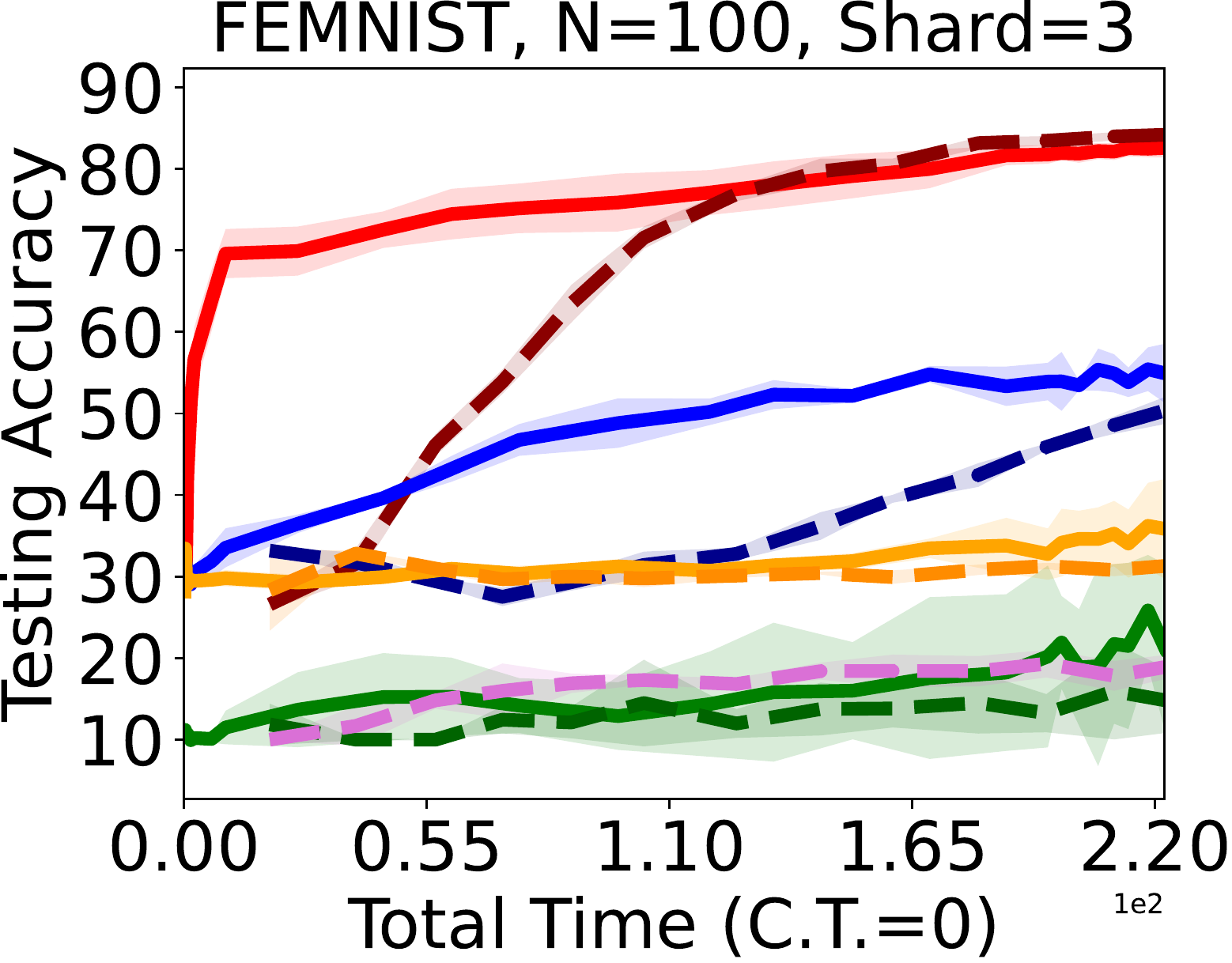}\\
        \includegraphics[width=.24\textwidth]{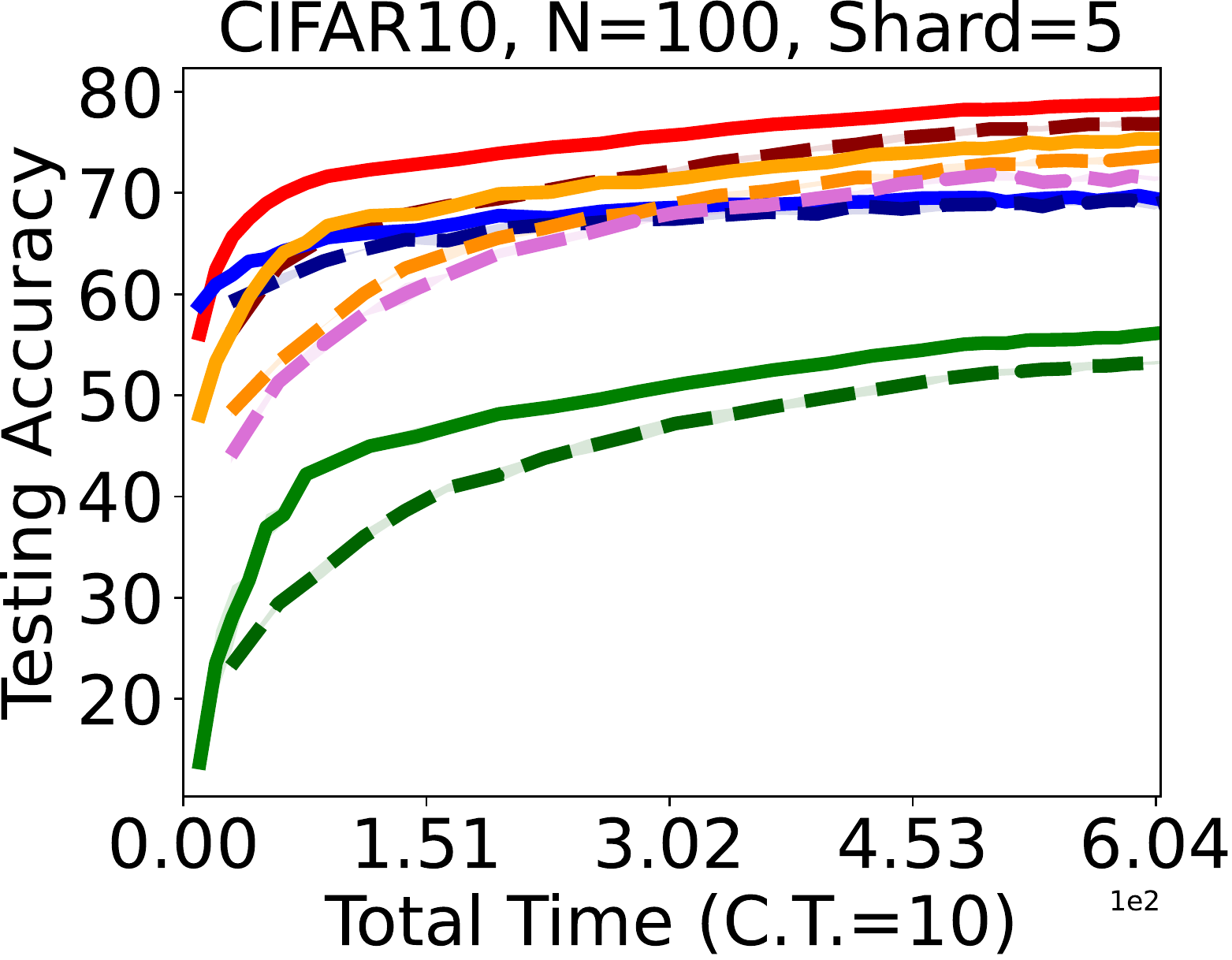} & \includegraphics[width=.24\textwidth]{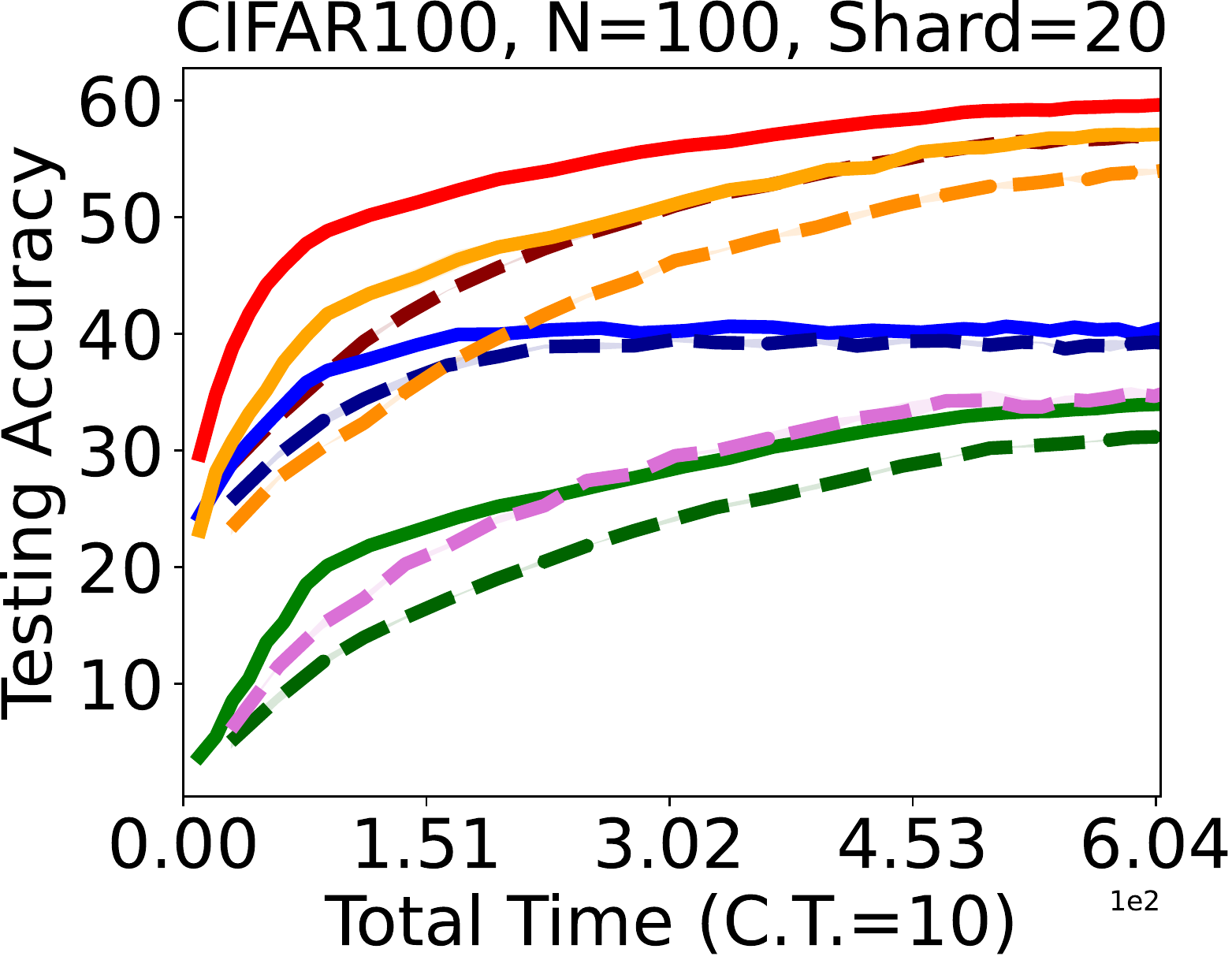} & 
        \includegraphics[width=.24\textwidth]{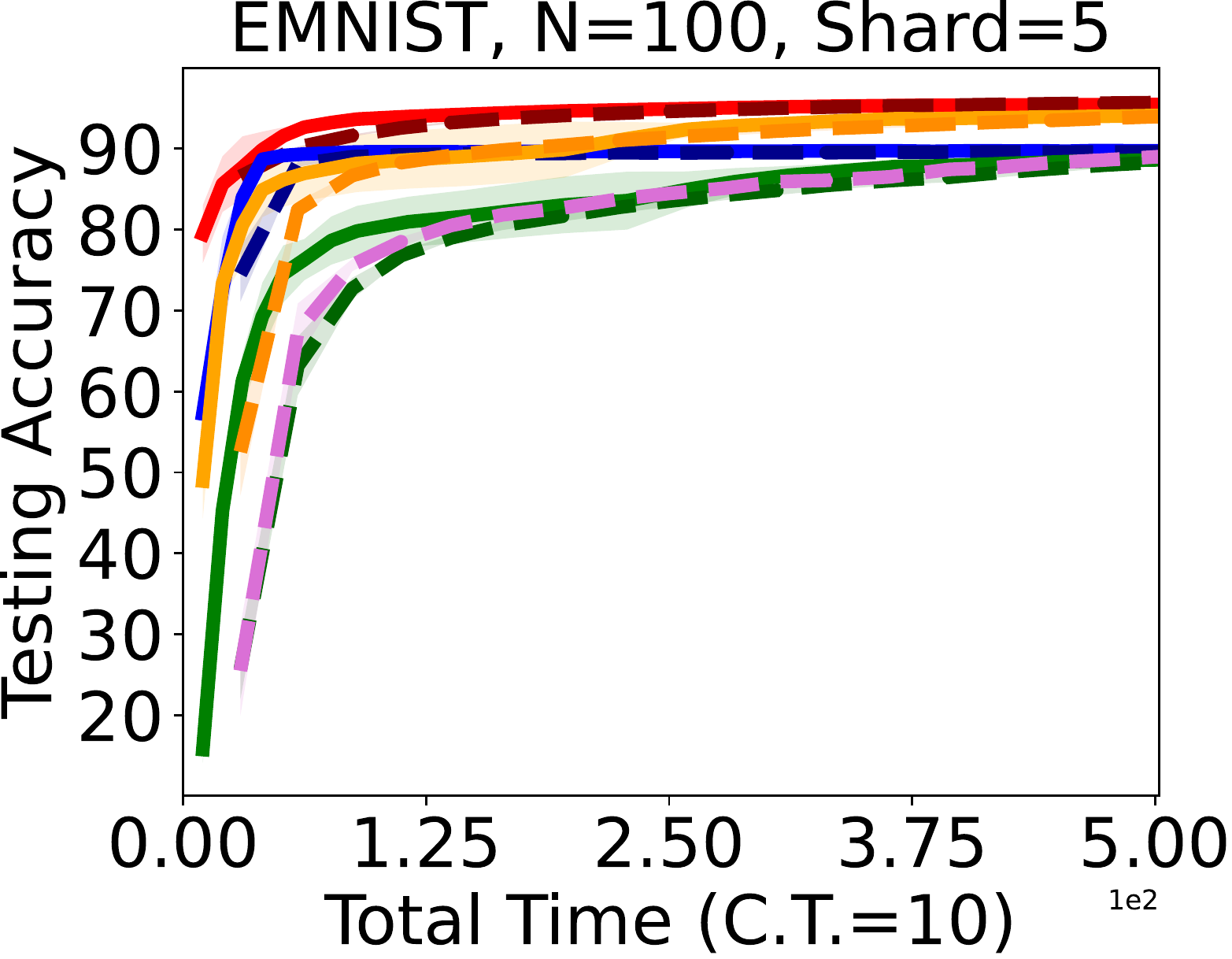}& \includegraphics[width=.24\textwidth]{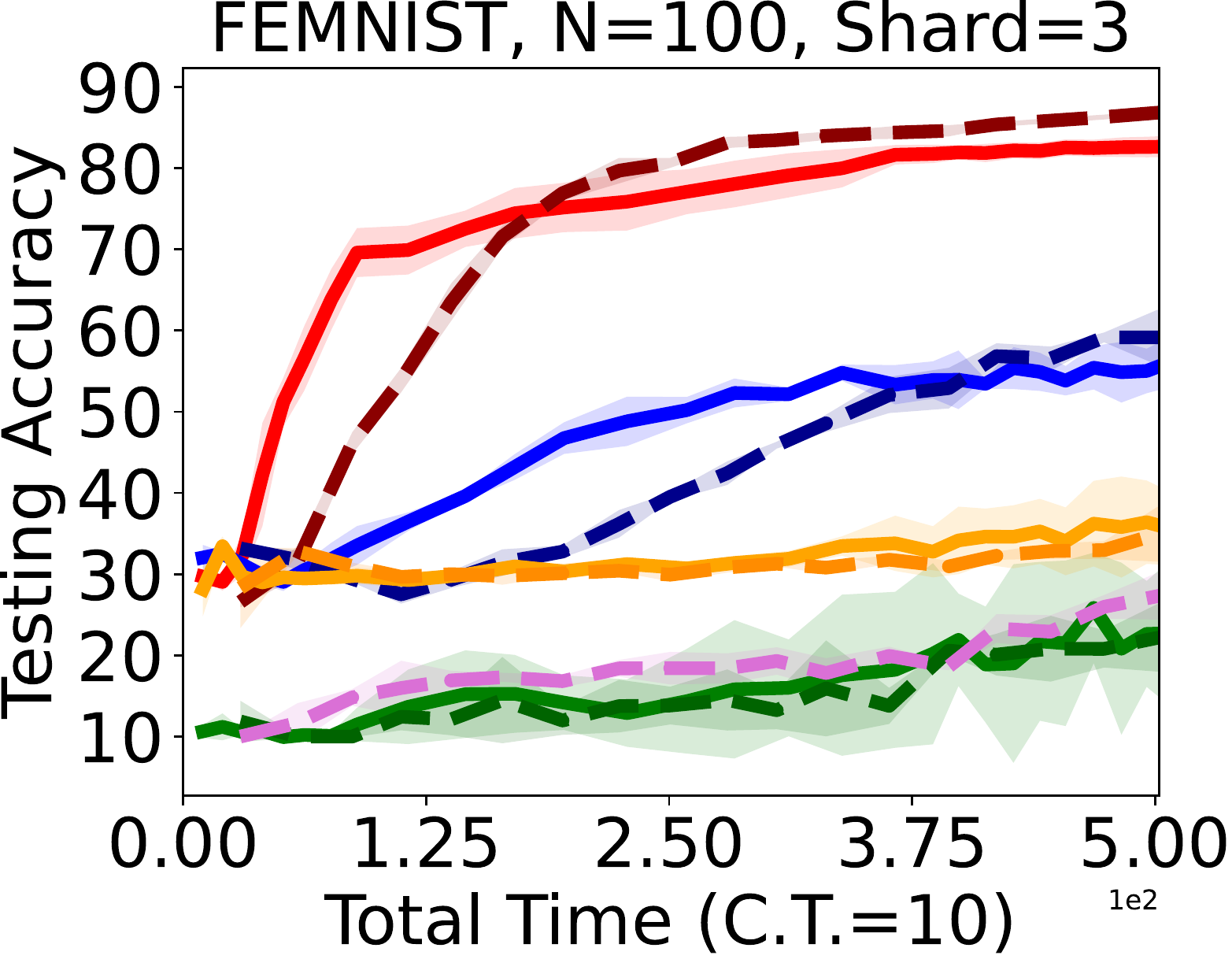} \\ 
        \includegraphics[width=.24\textwidth]{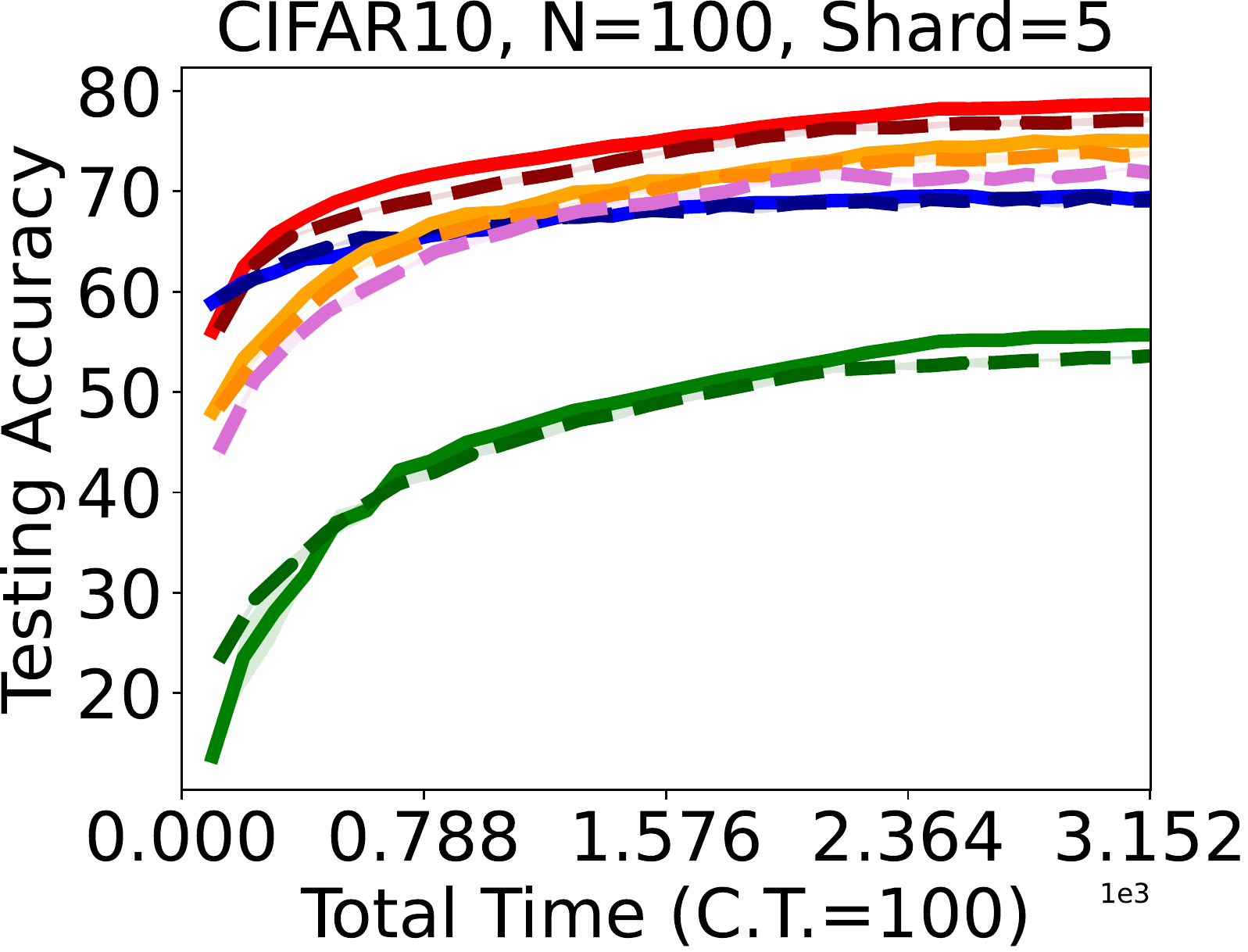} & \includegraphics[width=.24\textwidth]{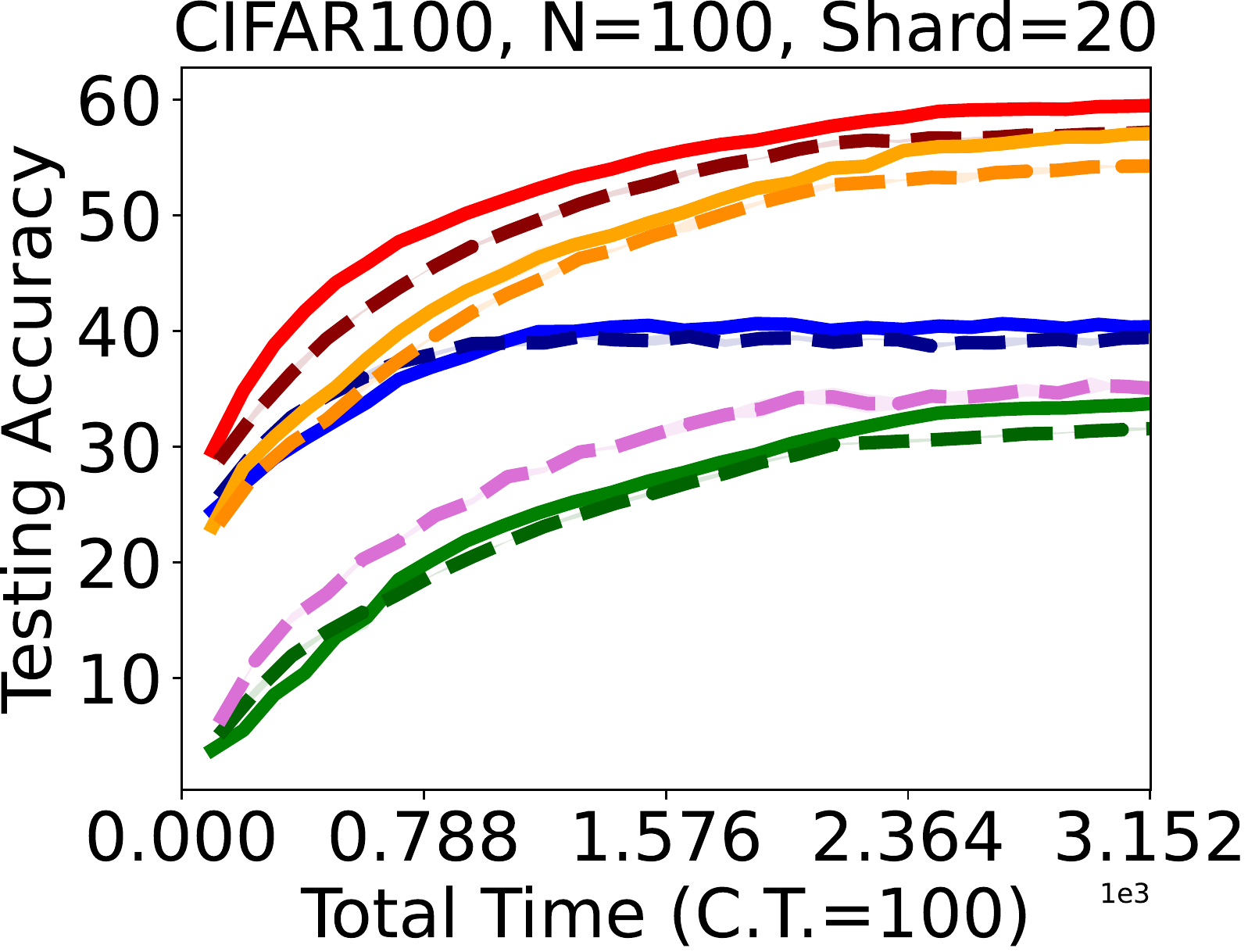} &
        \includegraphics[width=.24\textwidth]{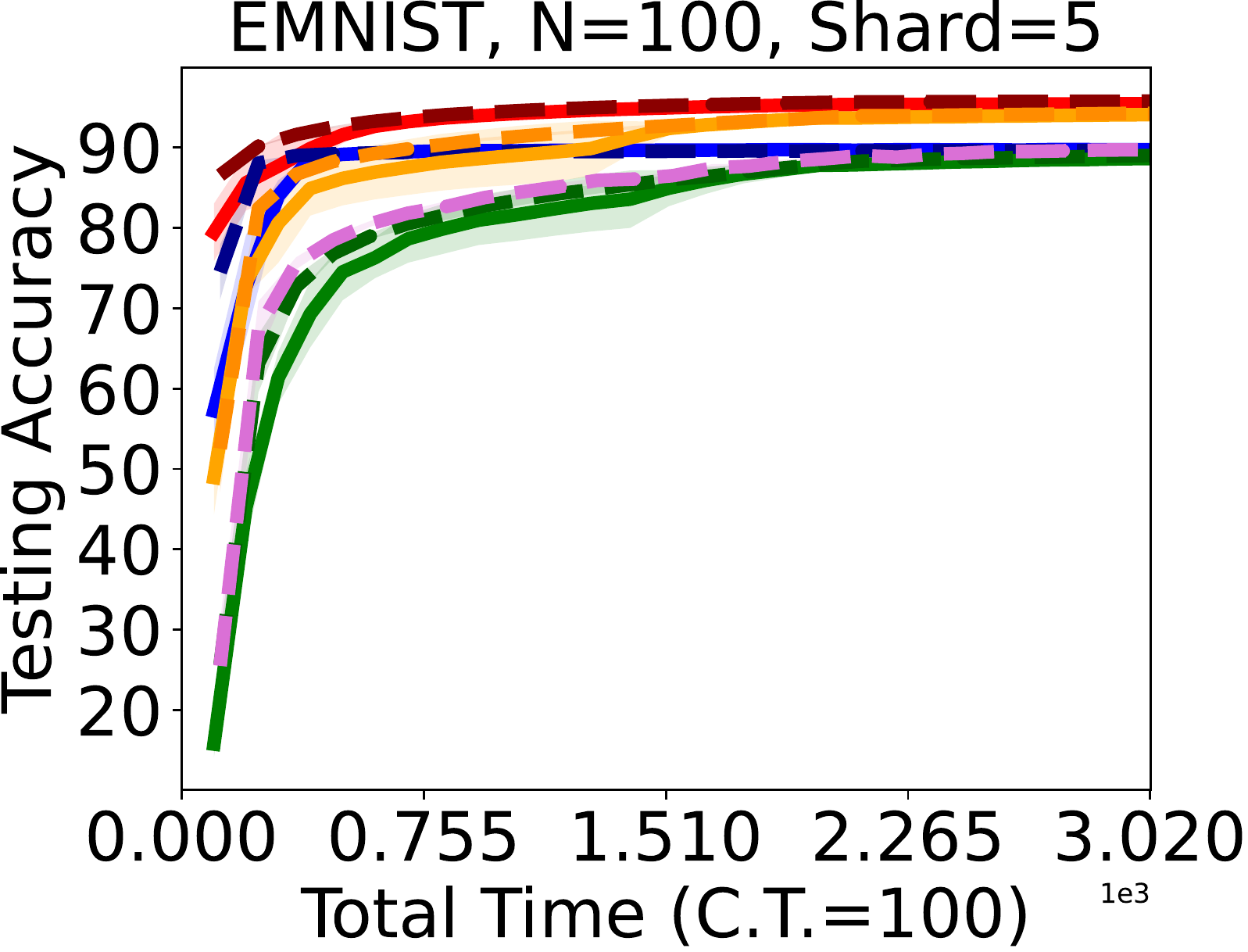}& \includegraphics[width=.24\textwidth]{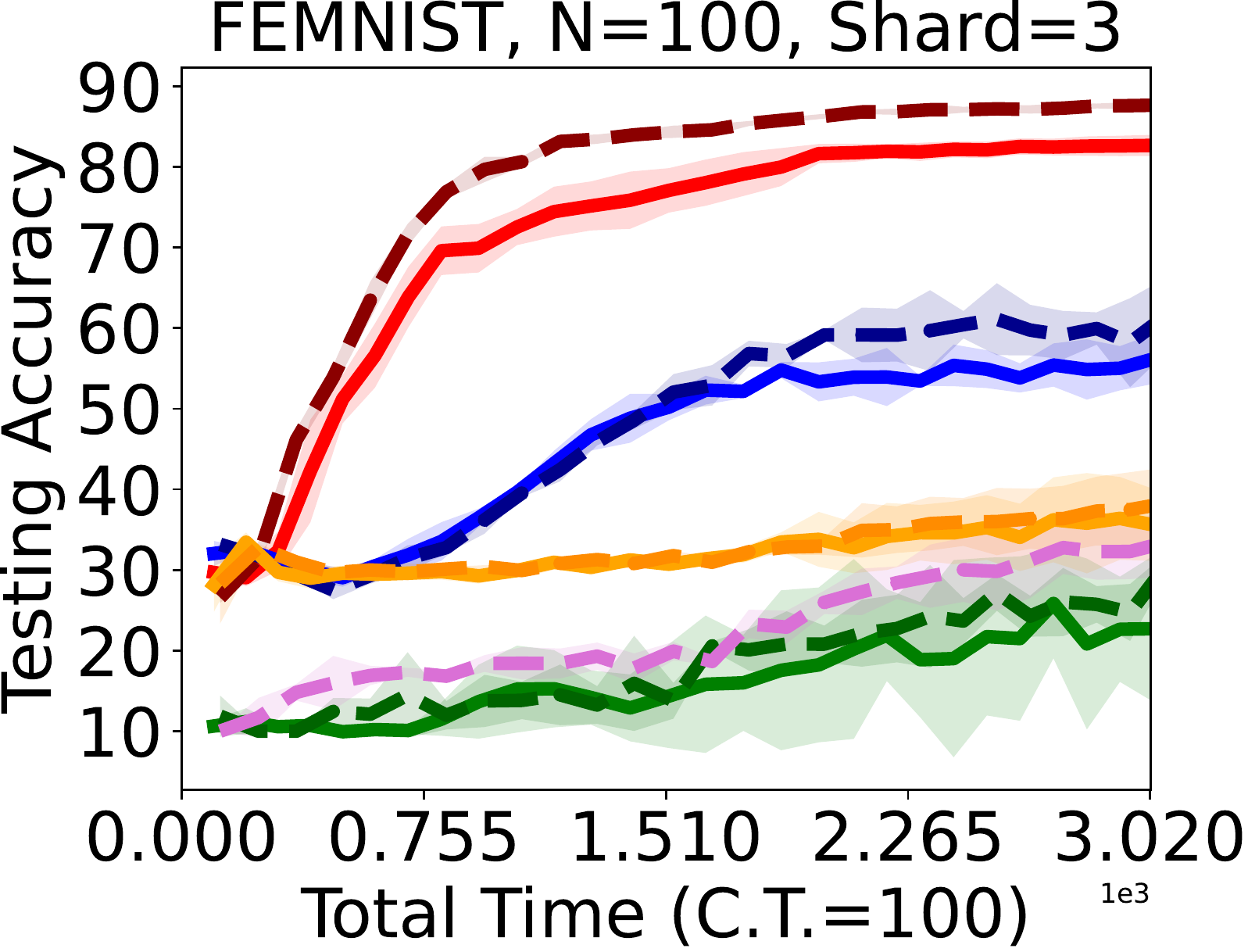}
    \end{tabular}
    \caption{Random computation speed setting. All the active nodes participate, i.e. $N = M$.}
    \label{fig:cifar-full-random}
\end{figure*}
\vspace{-0.1in}
\paragraph{Results under the configuration of partial participation.}
We present results when only a fraction ($20\%$ in the current experiment) of the agents participate in each global round, i.e. $N = M/5$, in Figures \ref{fig:cifar-partial-fixed} and \ref{fig:cifar-partial-random} for two types of system heterogeneity settings respectively.
\begin{figure*}
    \centering
    \begin{tabular}{c@{} c@{} c@{} c}
        \includegraphics[width=.24\textwidth]{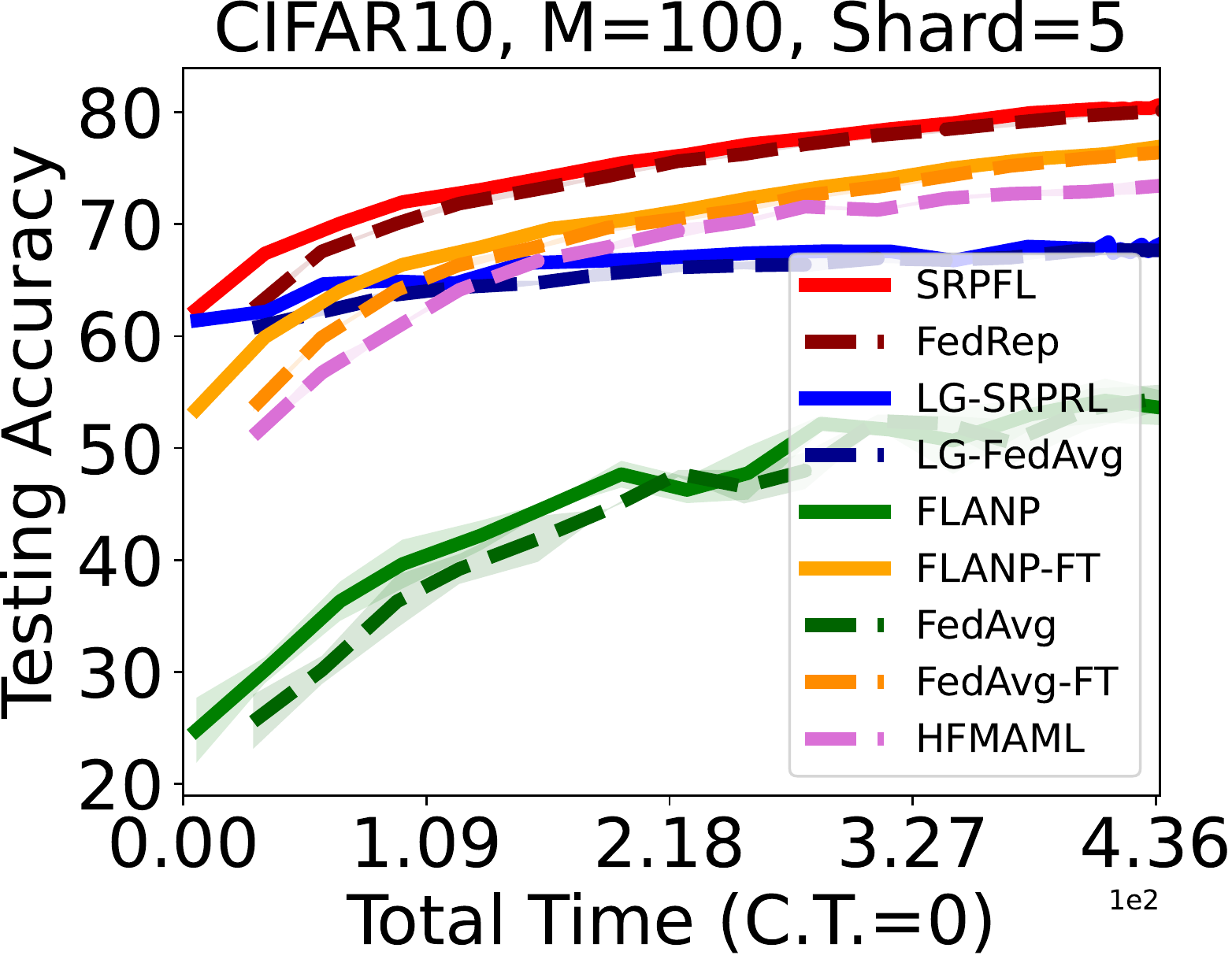} & \includegraphics[width=.24\textwidth]{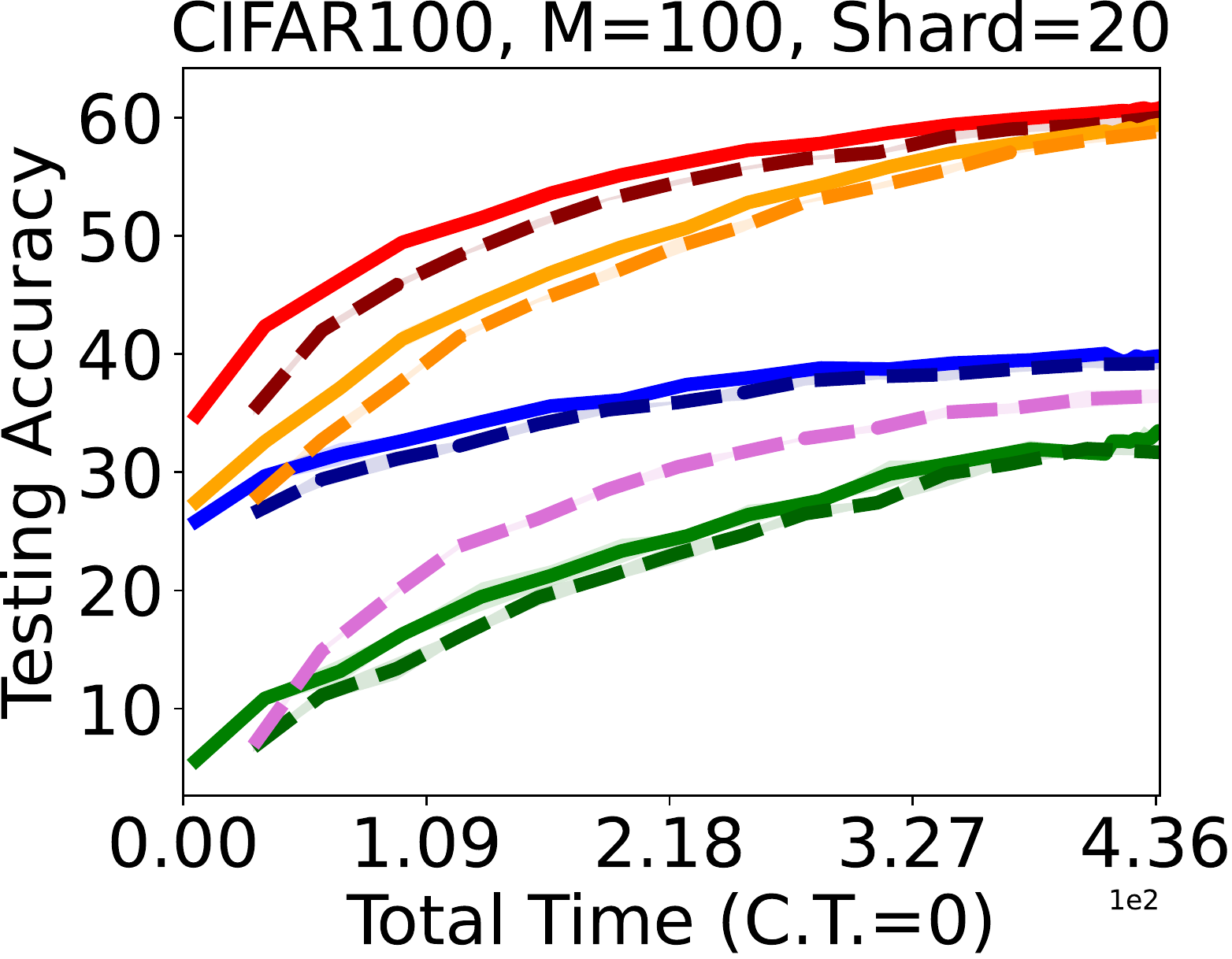} & \includegraphics[width=.24\textwidth]{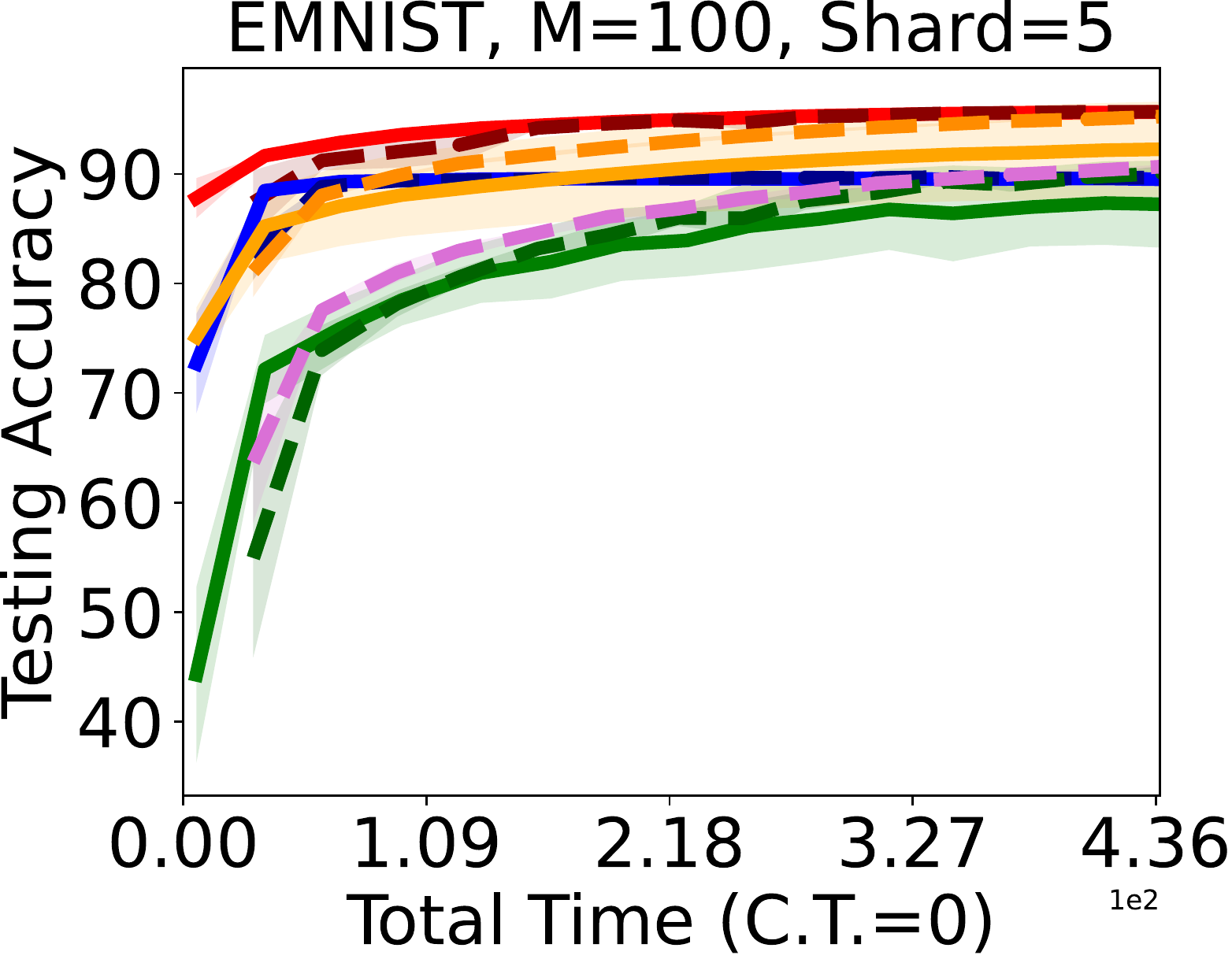} & \includegraphics[width=.24\textwidth]{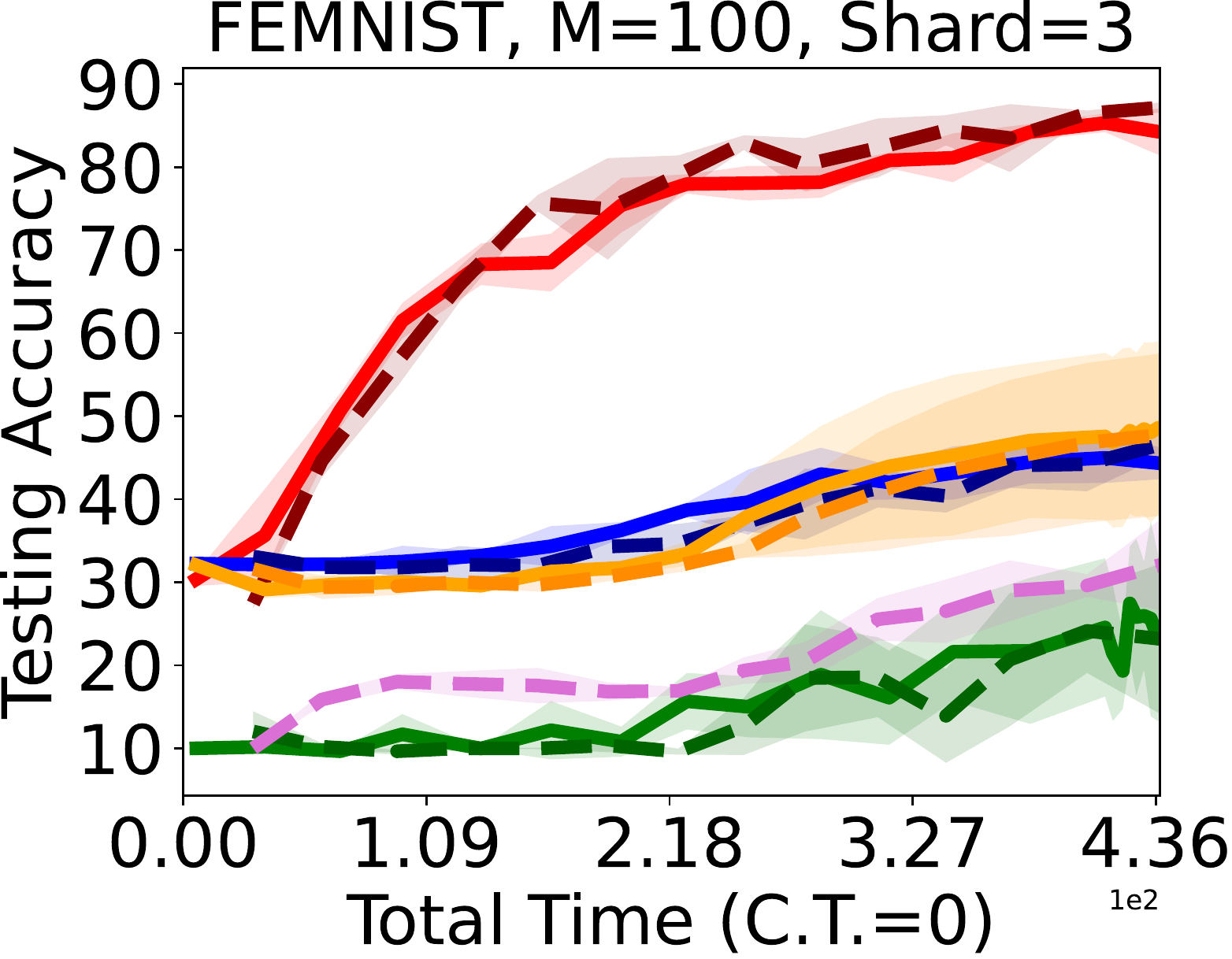}\\
        \includegraphics[width=.24\textwidth]{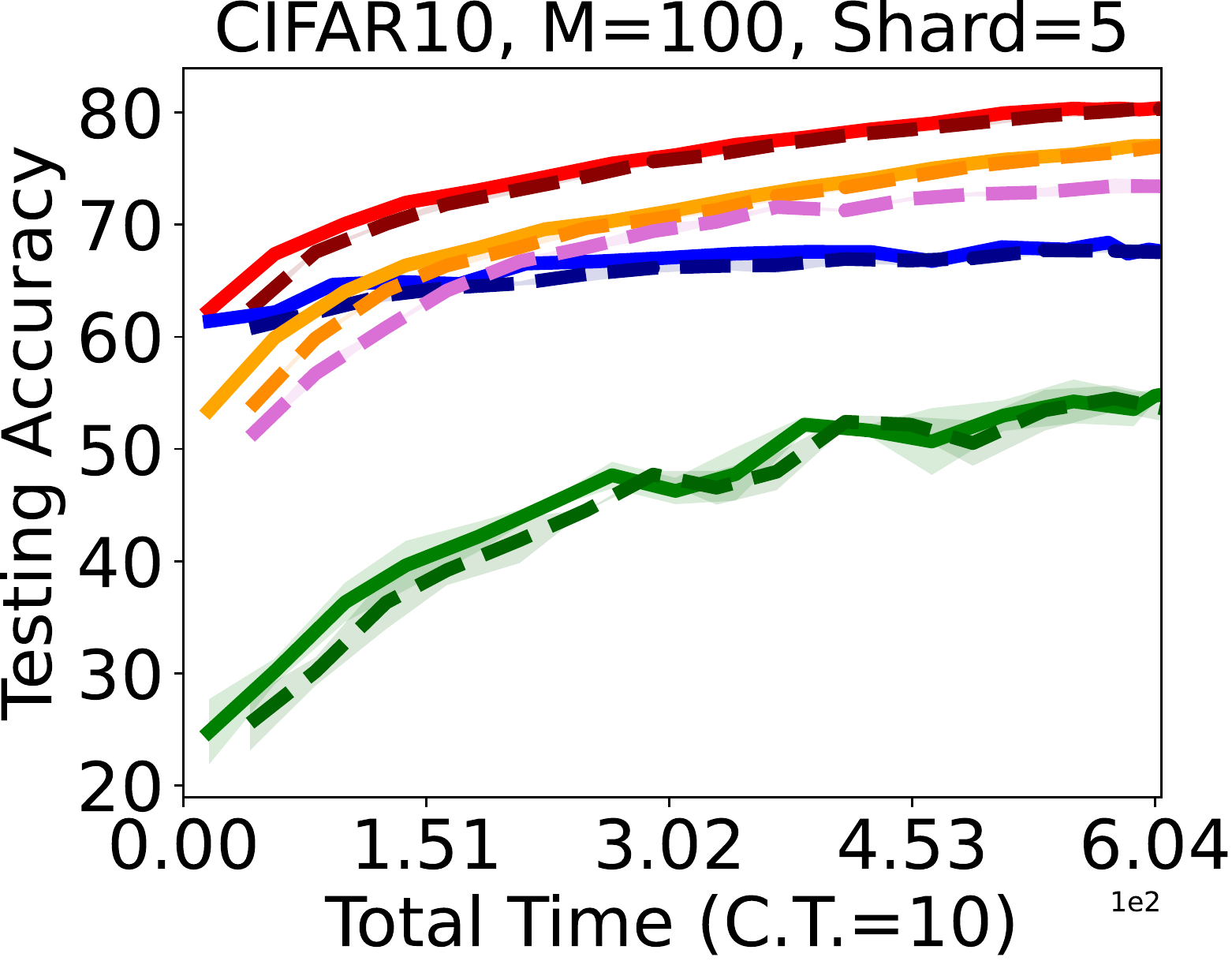} & \includegraphics[width=.24\textwidth]{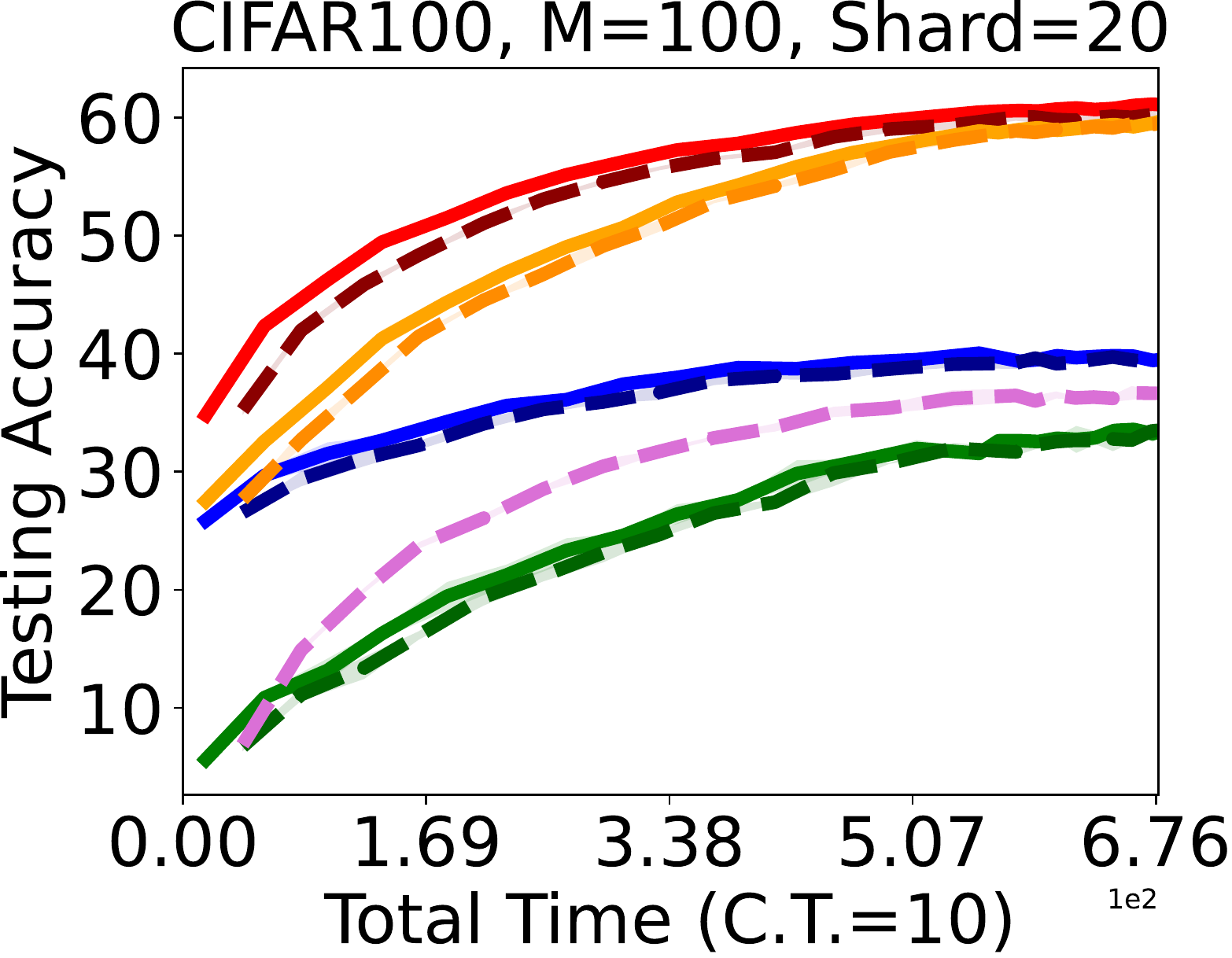} & \includegraphics[width=.24\textwidth]{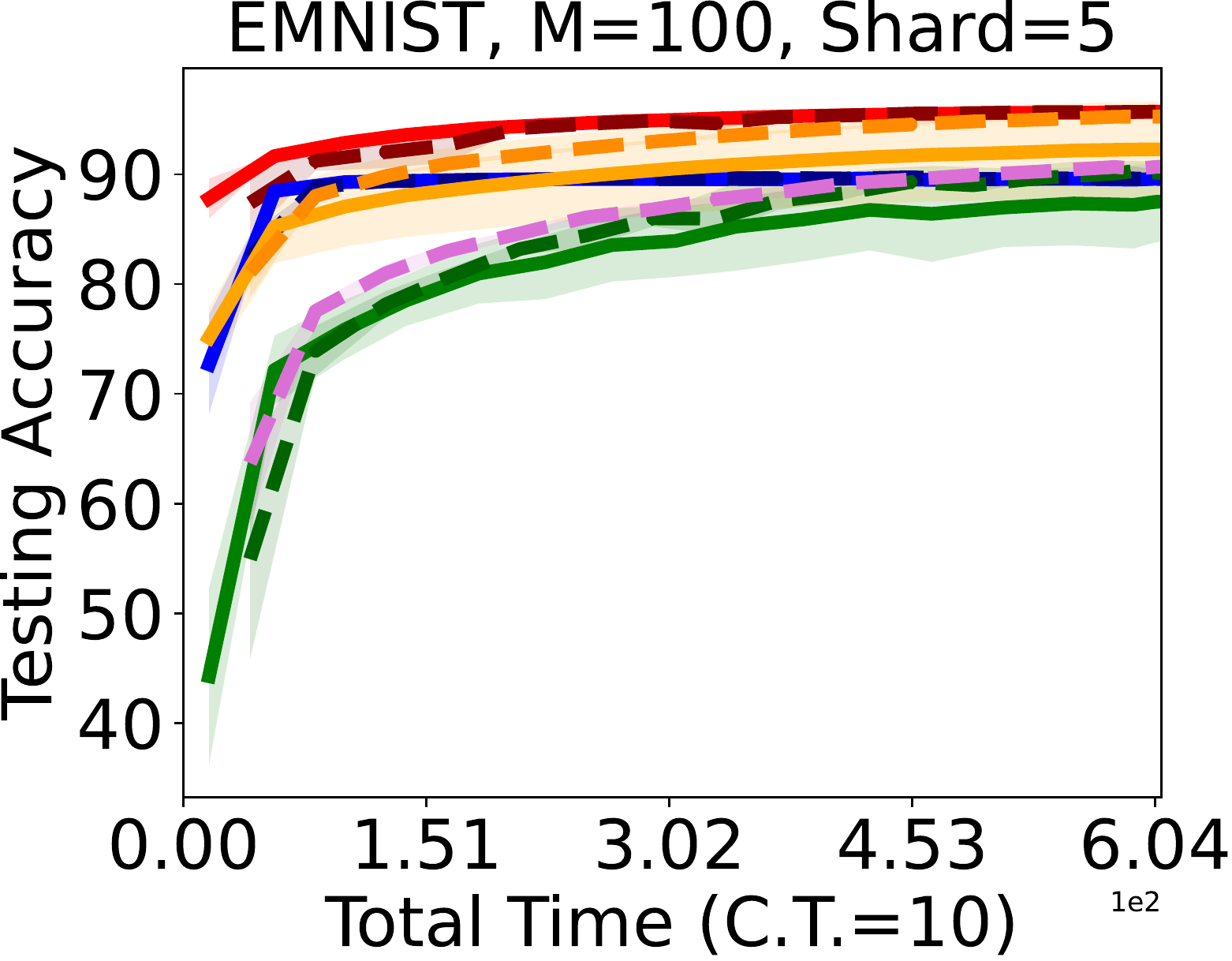}
        & 
        \includegraphics[width=.24\textwidth]{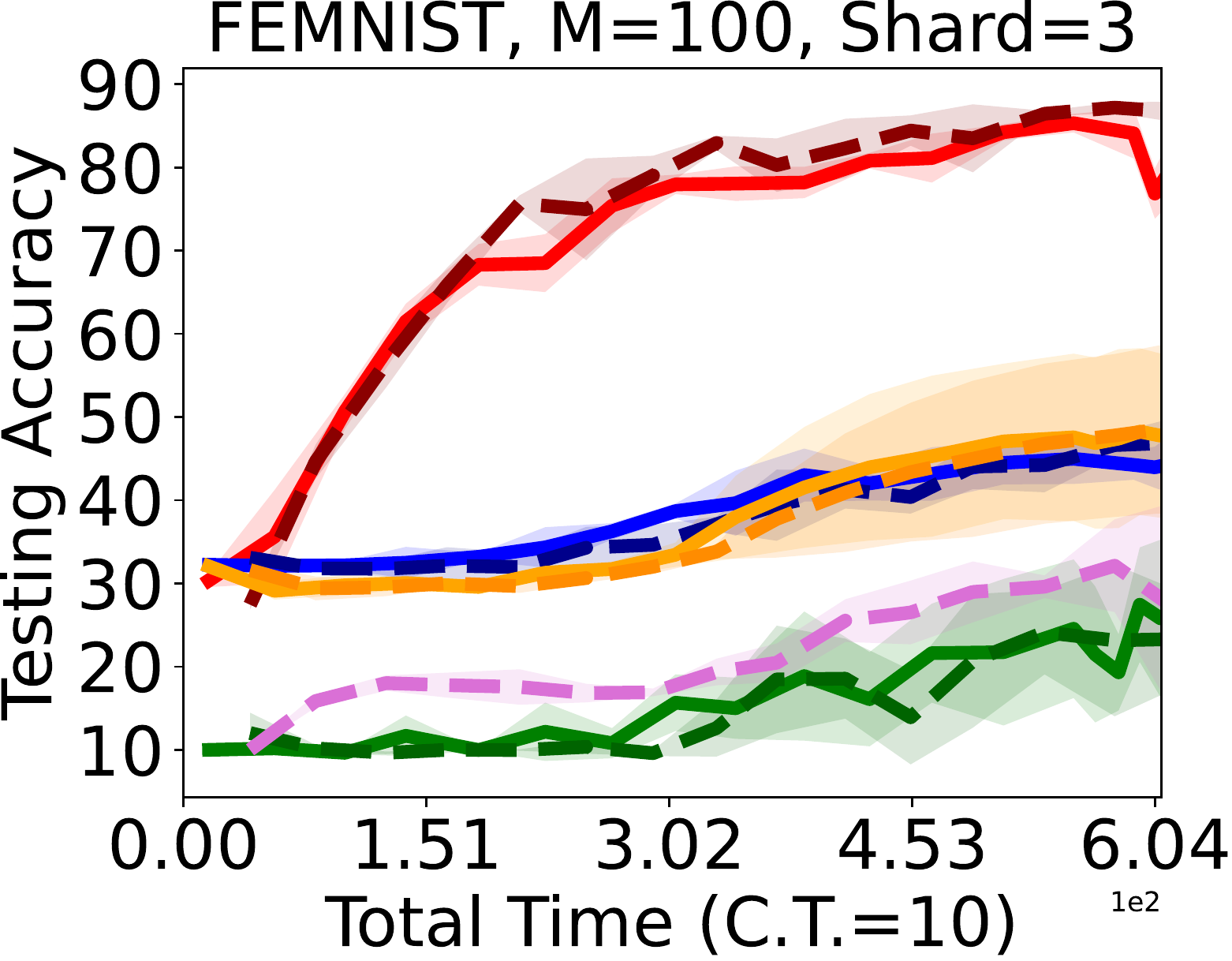}\\ 
        \includegraphics[width=.24\textwidth]{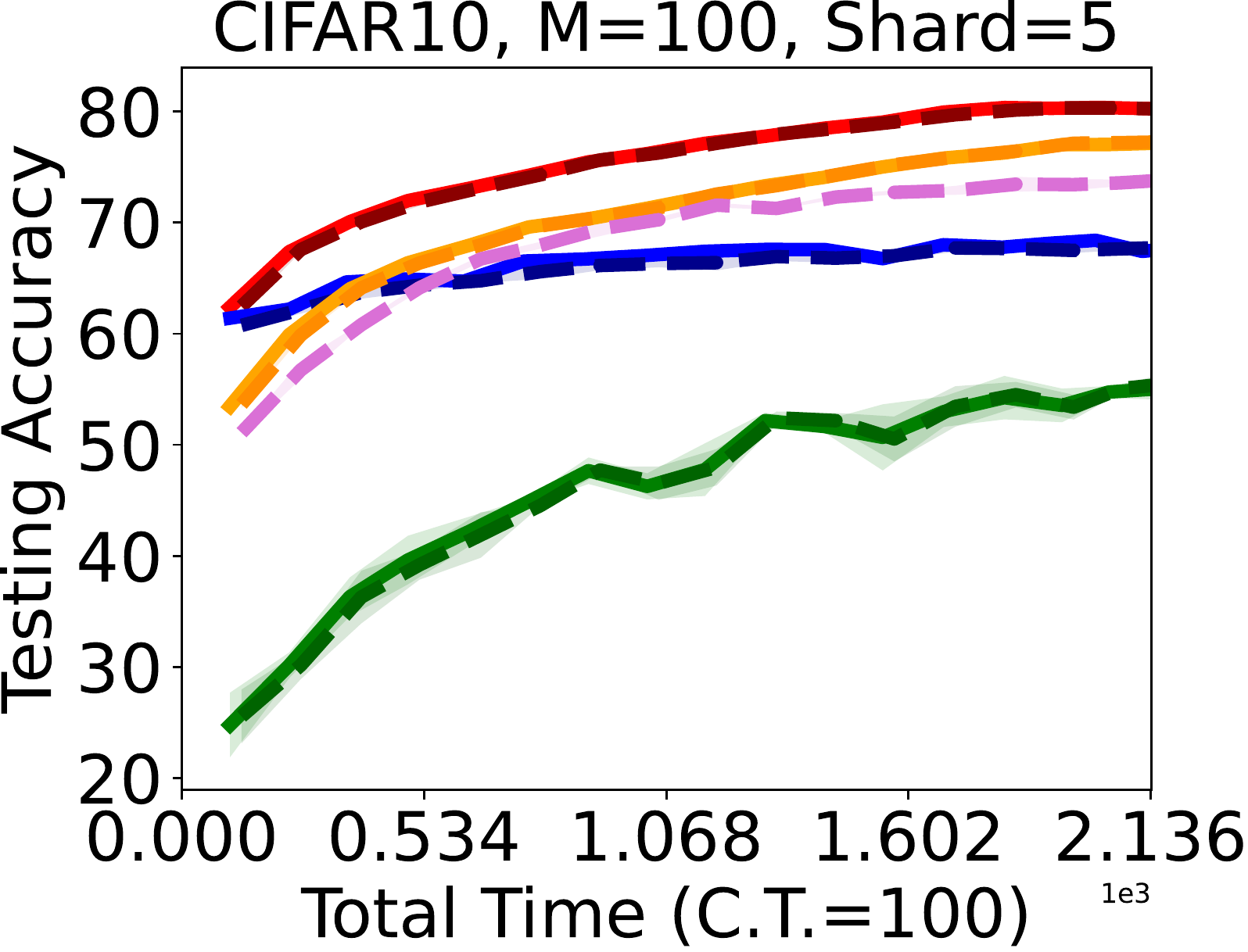} & \includegraphics[width=.24\textwidth]{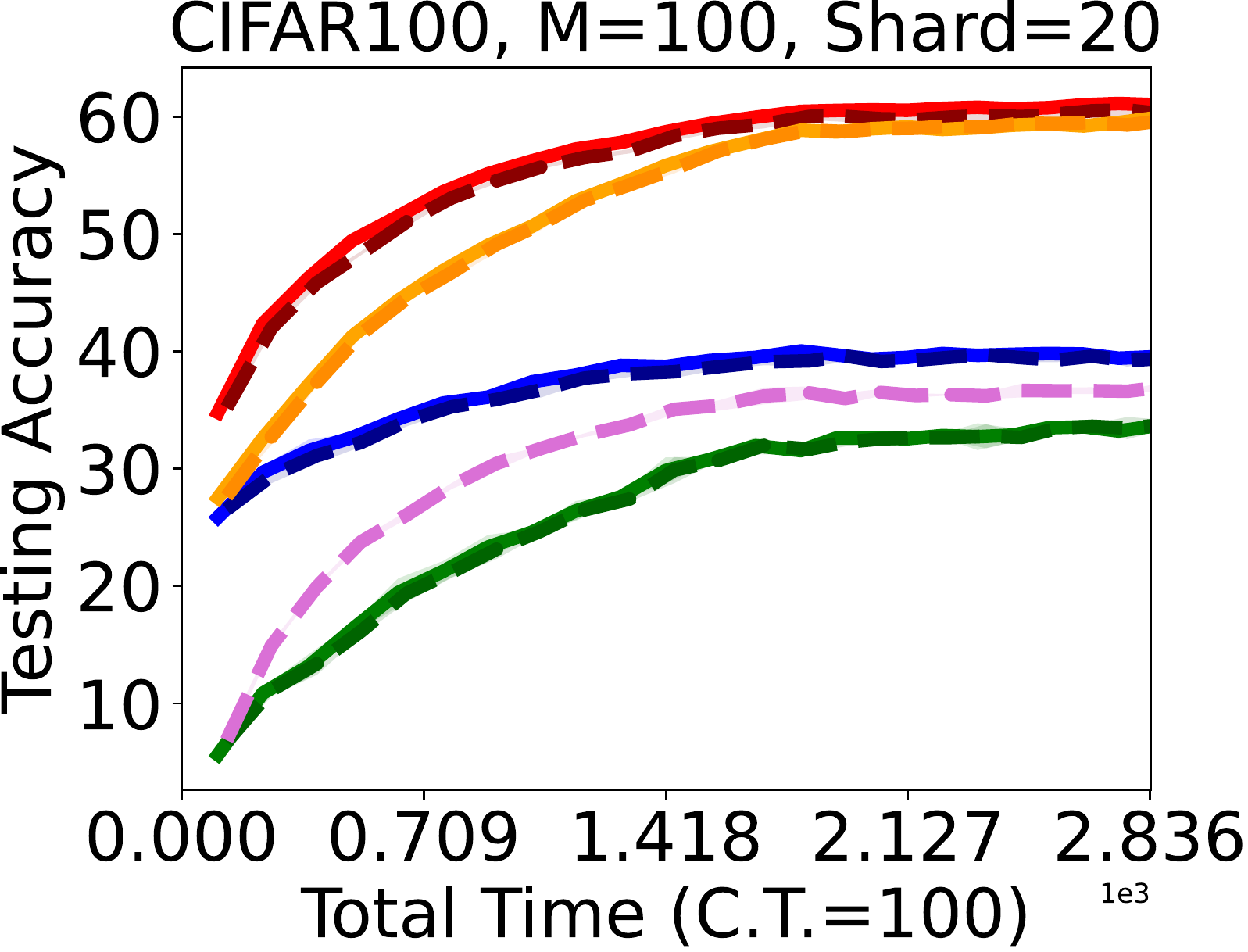} & 
        \includegraphics[width=.24\textwidth]{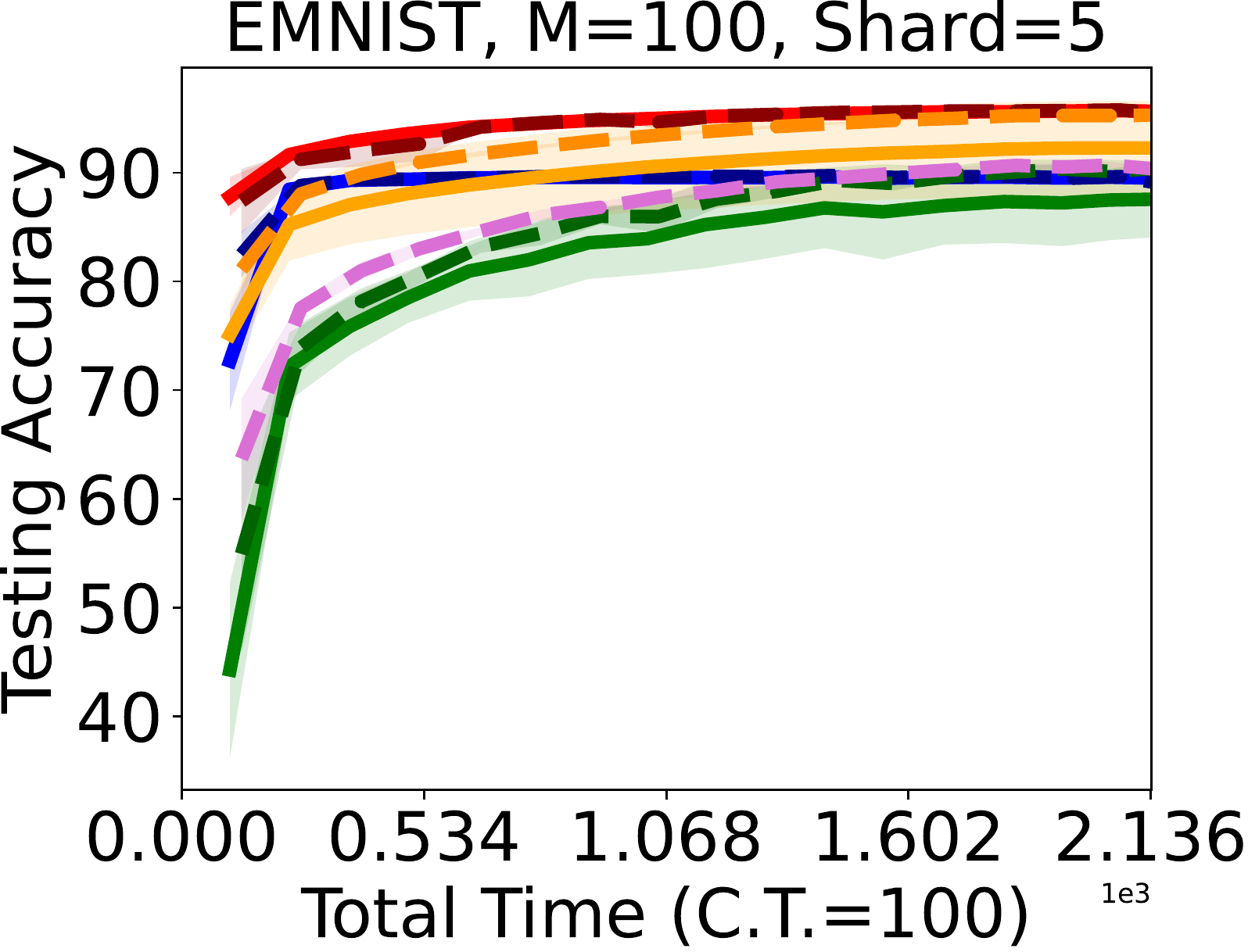}
        & \includegraphics[width=.24\textwidth]{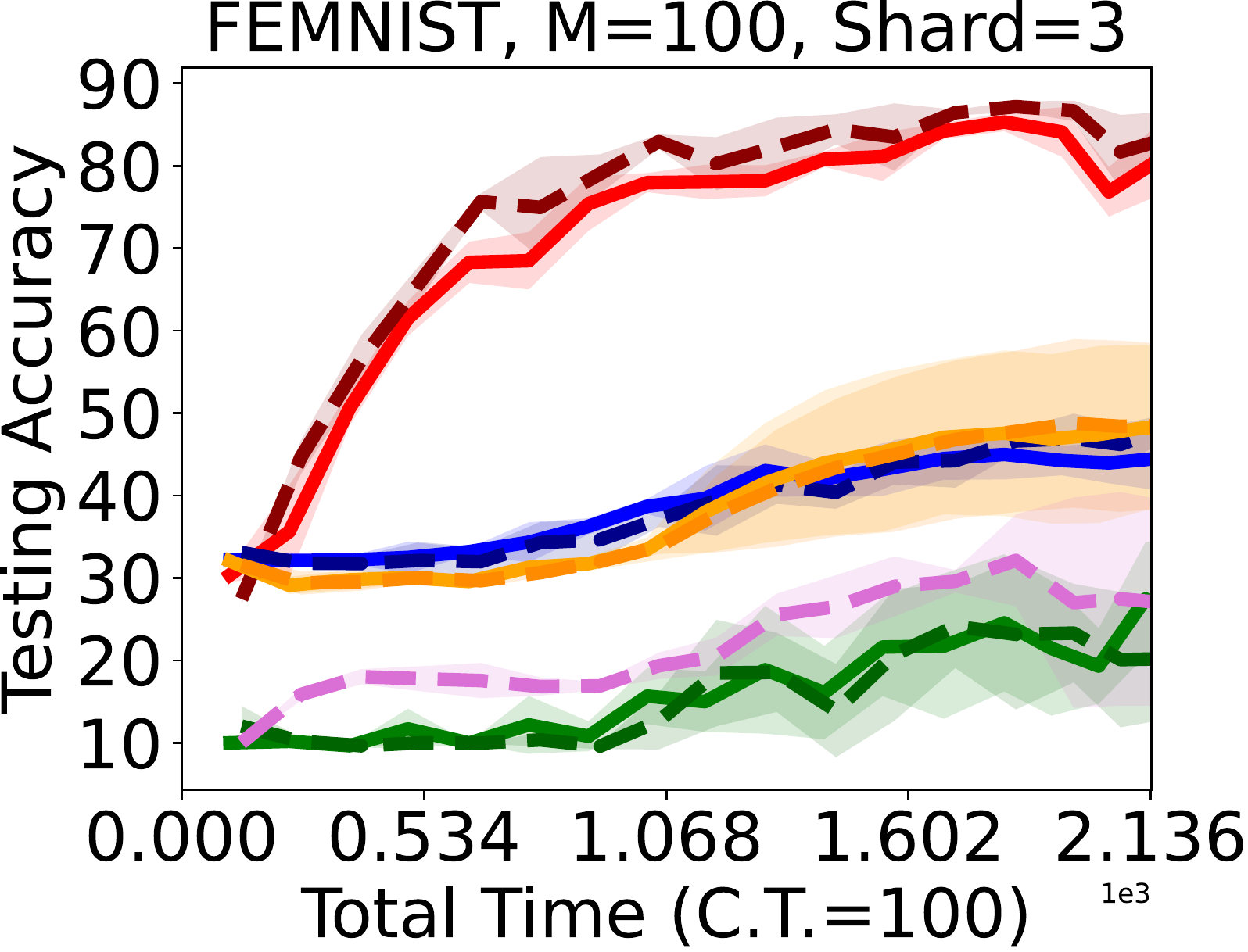}
    \end{tabular}
    \caption{Fixed computation speed setting. $20\%$ of the active nodes participate, i.e. $N = M/5$.}
    \label{fig:cifar-partial-fixed}
\end{figure*}
\begin{figure*}
    \centering
    \begin{tabular}{c@{} c@{} c@{} c}
        \includegraphics[width=.24\textwidth]{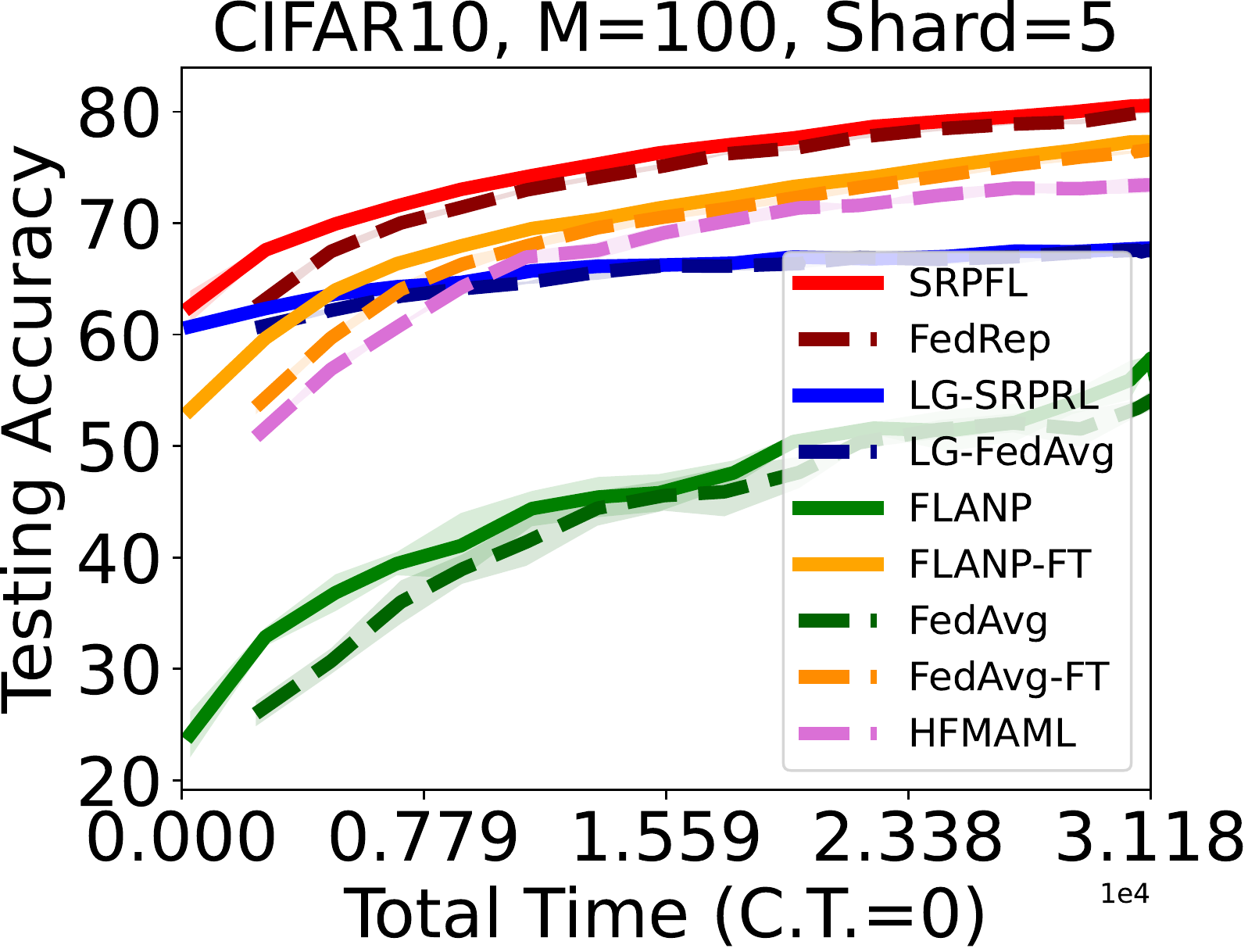} & \includegraphics[width=.24\textwidth]{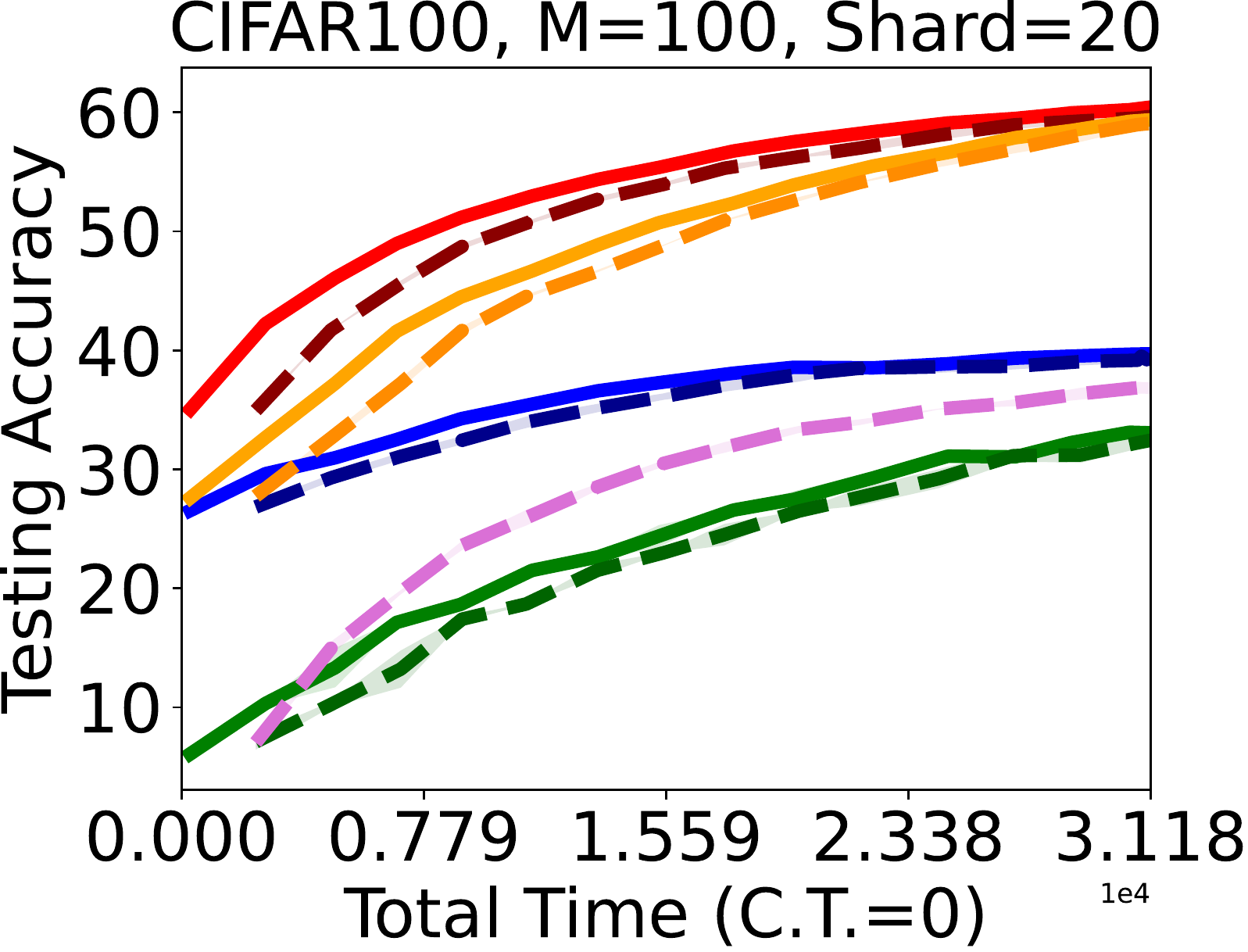} & \includegraphics[width=.24\textwidth]{figures/partial/test_acc_time_emnist_100_shard5-0-partial-participation.pdf} & \includegraphics[width=.24\textwidth]{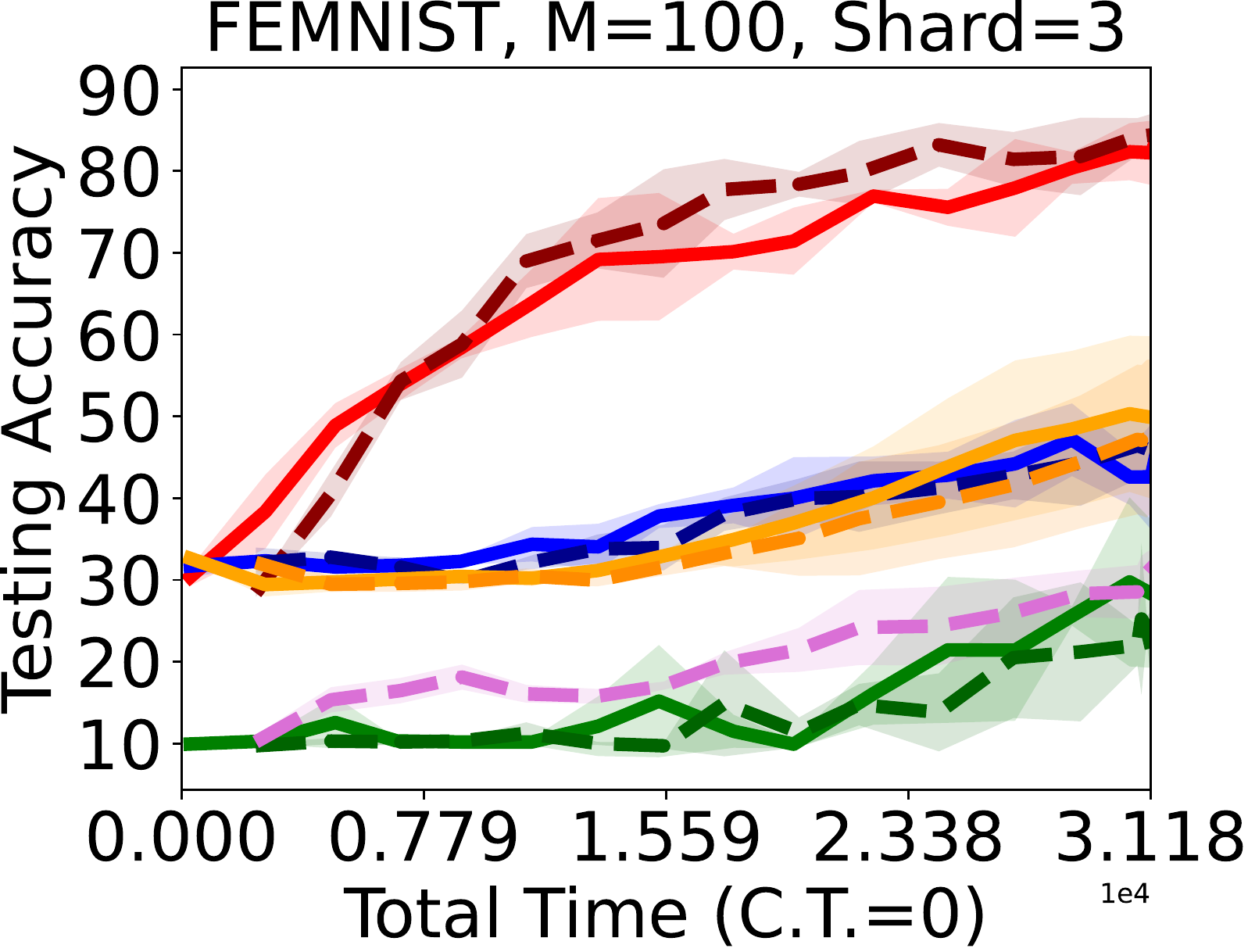}\\
        \includegraphics[width=.24\textwidth]{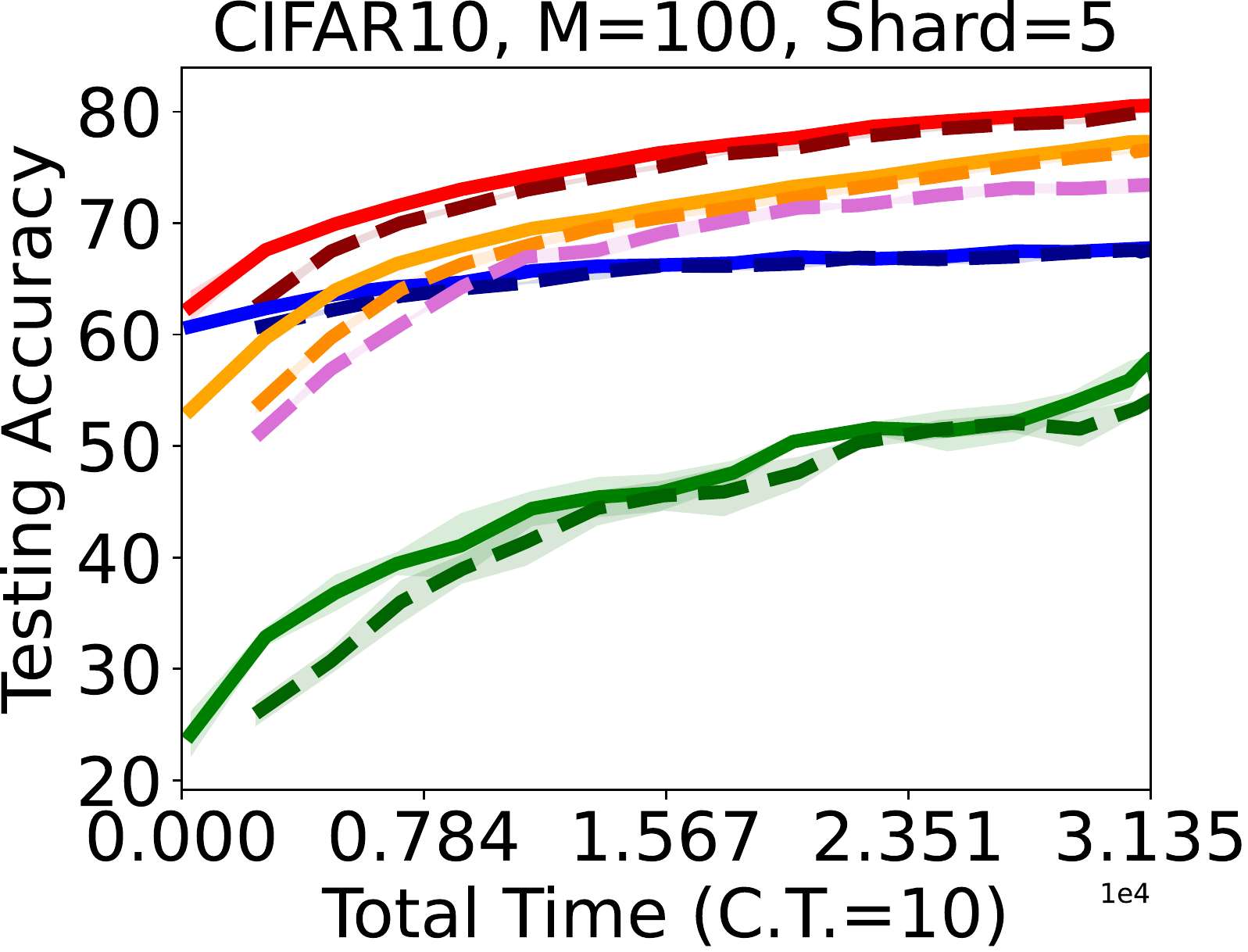} & \includegraphics[width=.24\textwidth]{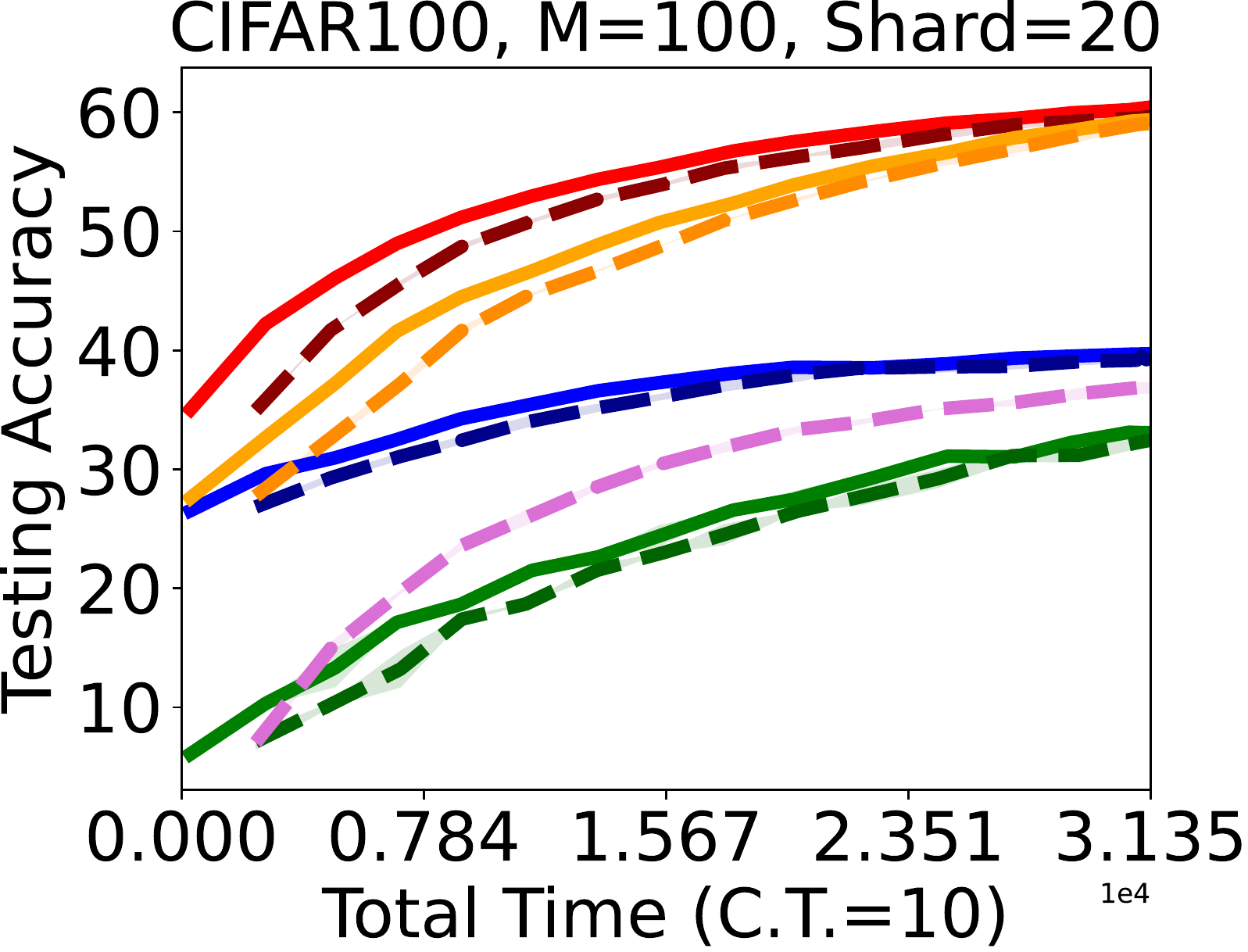} & \includegraphics[width=.24\textwidth]{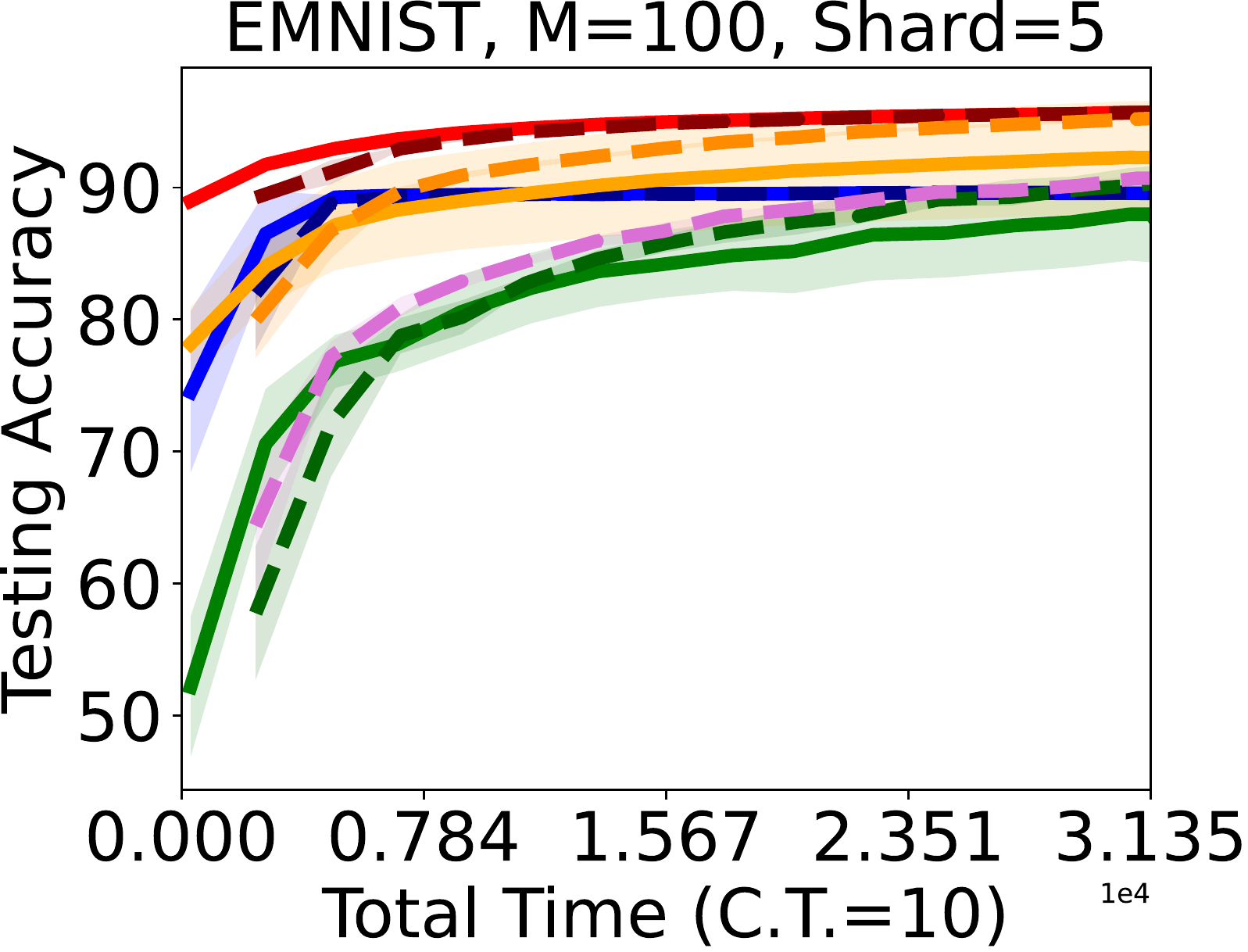}
        & 
        \includegraphics[width=.24\textwidth]{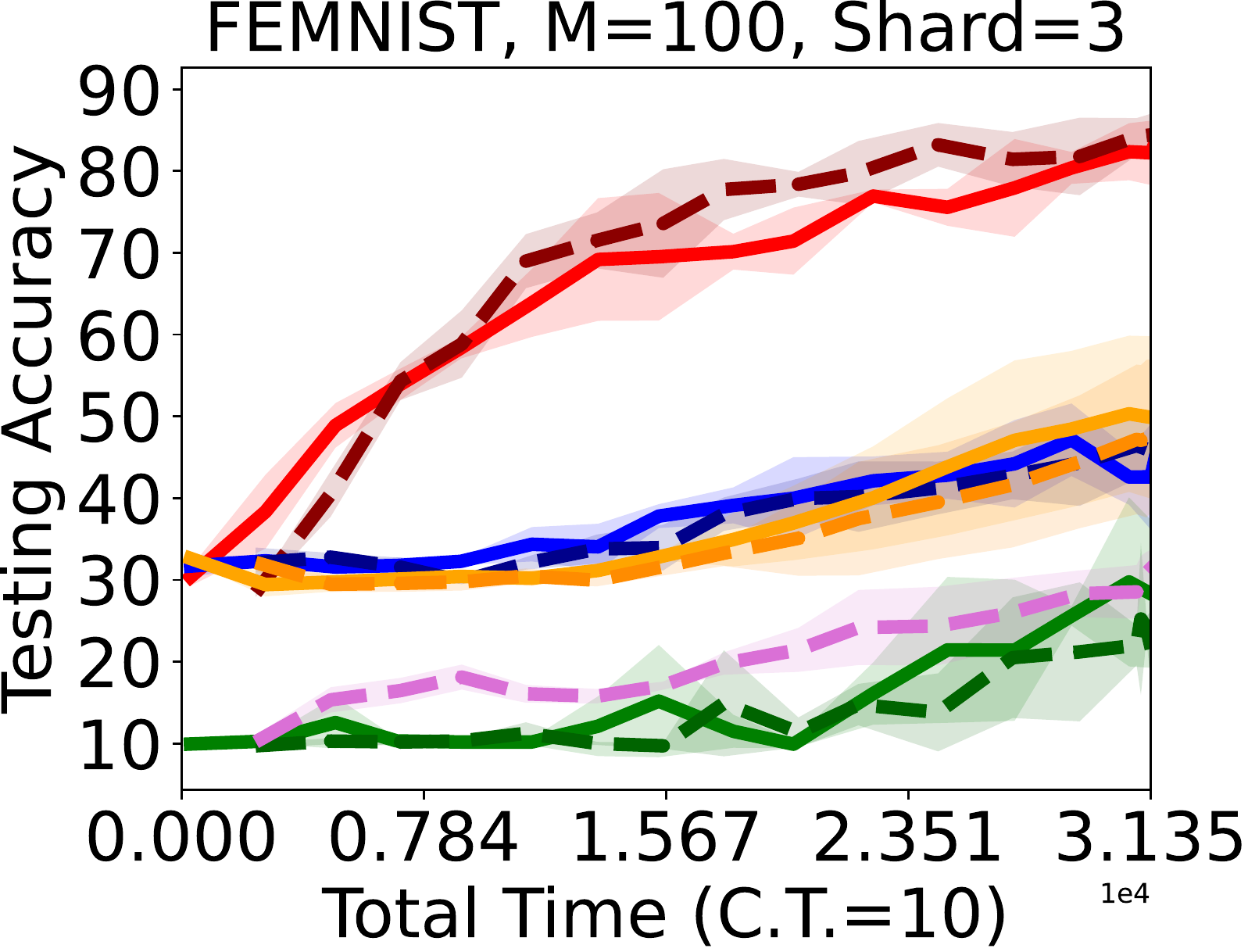}\\ 
        \includegraphics[width=.24\textwidth]{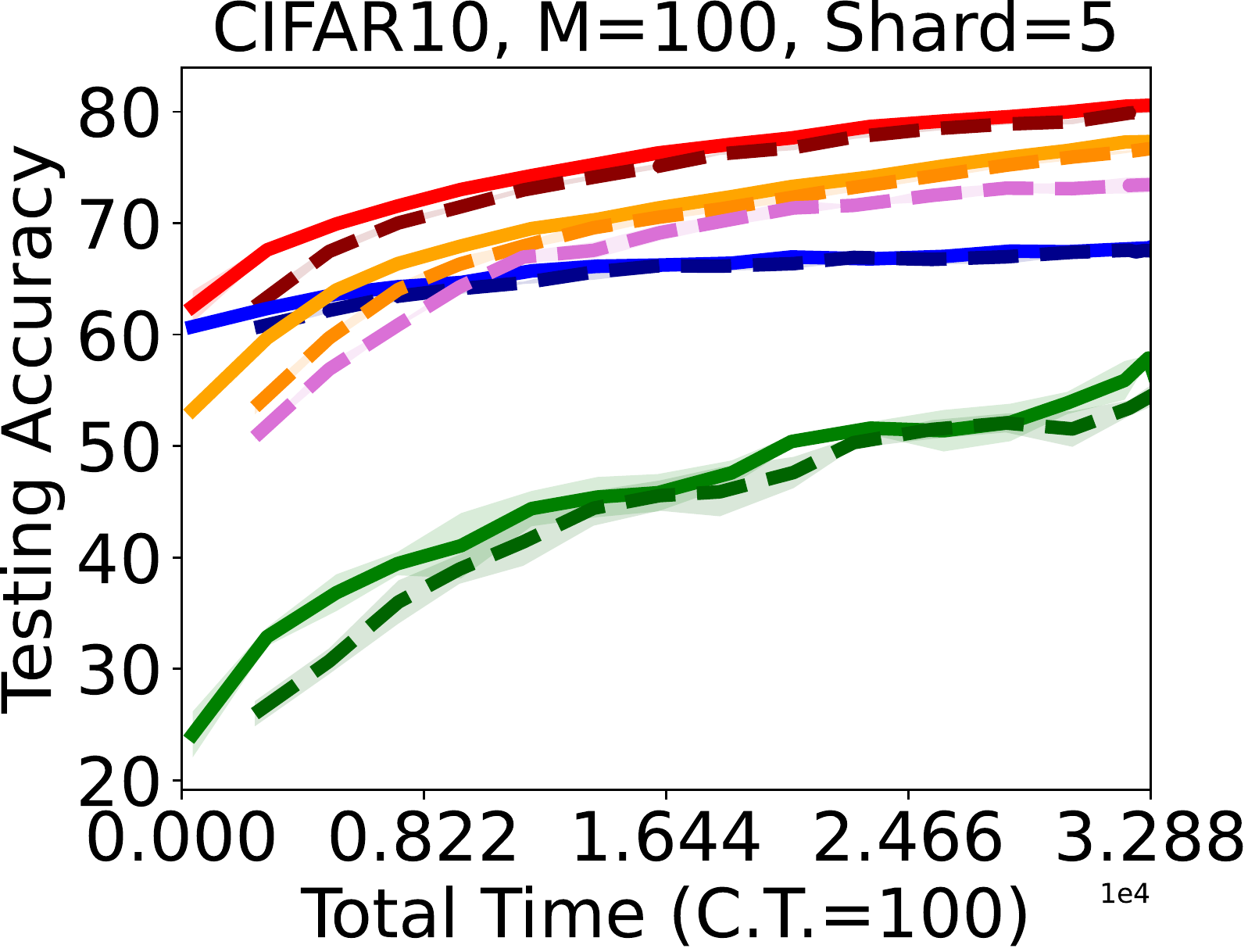} & \includegraphics[width=.24\textwidth]{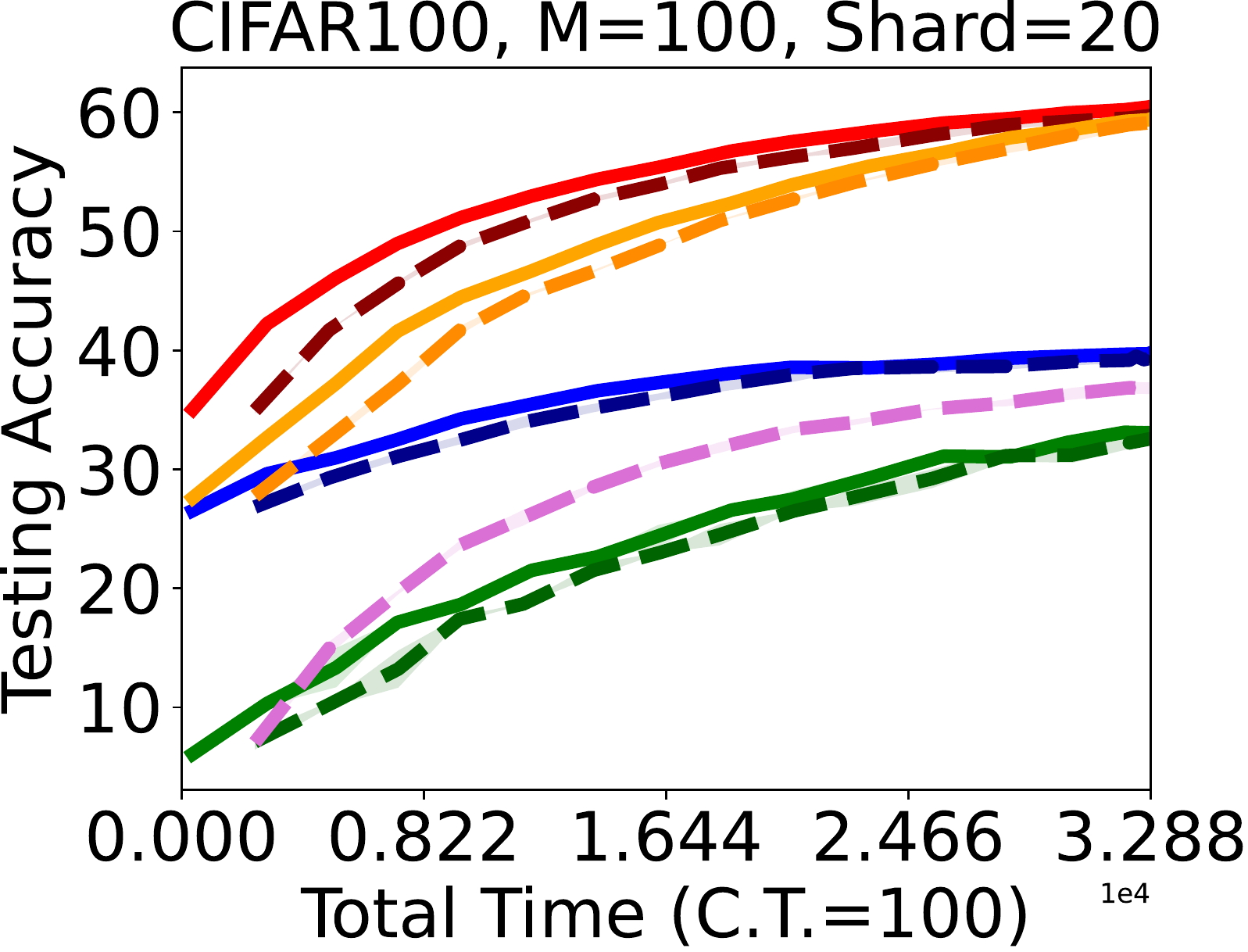} & 
        \includegraphics[width=.24\textwidth]{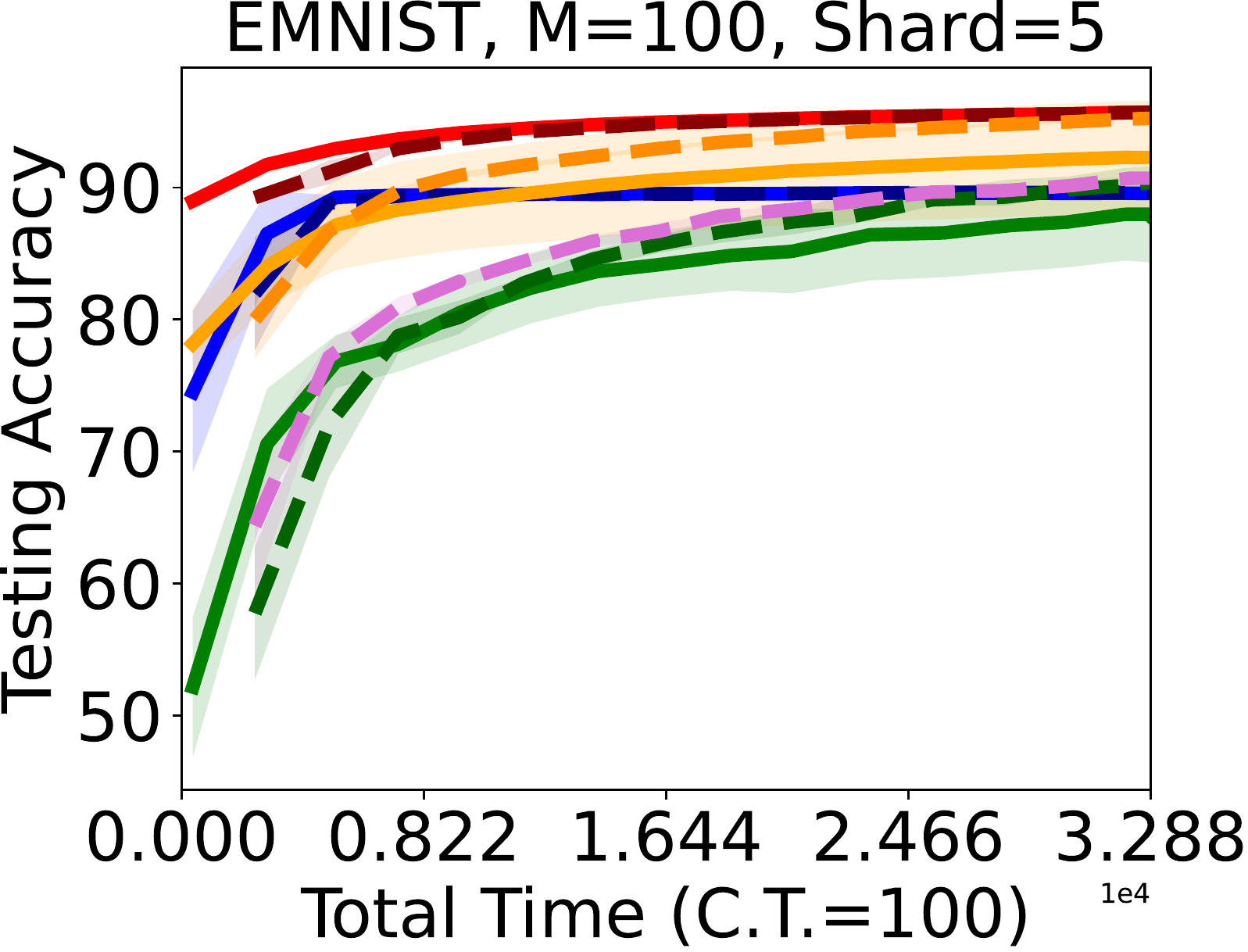}
        & \includegraphics[width=.24\textwidth]{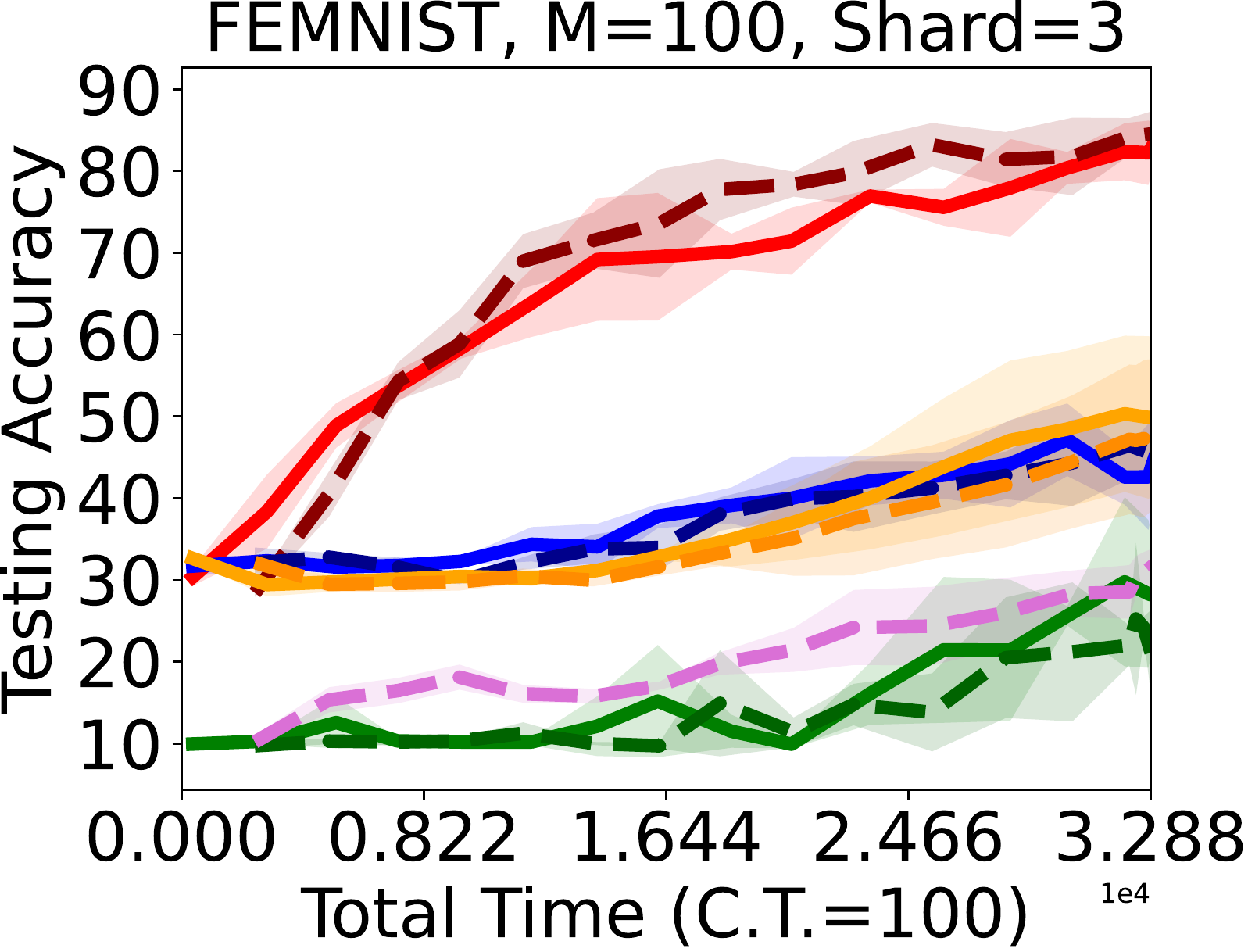}
    \end{tabular}
    \caption{Random computation speed setting. $20\%$ of the active nodes participate, i.e. $N = M/5$.}
    \label{fig:cifar-partial-random}
\end{figure*}
\end{document}